\documentclass[twoside,11pt]{article}

%

%
%
%

\usepackage{jmlr2e}

\usepackage{bm}
\usepackage{url}
\usepackage{bbm}
\usepackage{tikz}
\usepackage{array}
\usepackage{dsfont}
\usepackage{natbib}
\usepackage{epsfig}
\usepackage{amsmath}
\usepackage{amssymb}
\usepackage{caption}
\usepackage{physics}
\usepackage{wrapfig}
\usepackage{mdframed}
\usepackage{multirow}
\usepackage{scalerel}
\usepackage{placeins}
\usepackage{booktabs}
\usepackage{rotating}
\usepackage{amsfonts}
\usepackage{nicefrac}
\usepackage{graphicx}
\usepackage{xr-hyper}
\usepackage{hyperref}
\usepackage{microtype}
\usepackage{subcaption}
\usepackage[abs]{overpic}
\usepackage[]{algpseudocode}
\usepackage[export]{adjustbox}
\usepackage[ruled]{algorithm2e}
\usepackage[T1]{fontenc}
\usepackage{geometry}
\usepackage{thmtools}
\usepackage{thm-restate}
\usepackage{cleveref}
\geometry{verbose,tmargin=1in,bmargin=1in,lmargin=1in,rmargin=1in}

\declaretheorem[name=Definition,numberwithin=section]{defn}
\declaretheorem[name=Lemma,numberwithin=section]{lem}

\numberwithin{equation}{section}

\definecolor{tableau_blue}{HTML}{1F77B4}
\definecolor{tableau2}{HTML}{FF7F0E}
\definecolor{tableau_green}{HTML}{2CA02C}
\definecolor{tableau_red}{HTML}{D62728}
\definecolor{tableau5}{HTML}{9467BD}
\definecolor{tableau6}{HTML}{8C564B}
\definecolor{tableau7}{HTML}{CFECF9}
\definecolor{tableau8}{HTML}{7F7F7F}
\definecolor{tableau9}{HTML}{BCBD22}
\definecolor{tableau10}{HTML}{17BECF}

\newtheorem{exmp}[theorem]{Example}

\newcommand{\CCM}{\mathcal{CCM}}

\newcommand{\zero}{\mathbf{0}}
\newcommand{\one}{\mathbbm{1}}
\newcommand{\onet}{\tilde{\one}}
\newcommand{\T}{\top}

\newcommand{\prevv}{\text{prev}}
\newcommand{\nextt}{\text{next}}
\newcommand{\nvec}{n_\text{vec}}

\DeclareMathOperator*{\Ave}{Ave }

\DeclareMathOperator{\vect}{vec}

\DeclareMathOperator{\diag}{diag}
\DeclareMathOperator{\blkdiag}{blkdiag}
\DeclareMathOperator{\Exp}{\mathbb{E}}

\newcommand{\Hess}{\text{Hess}}

\newcommand{\R}{{\mathbb R}}
\newcommand{\Lagr}{\mathcal{L}}

\renewcommand{\v}{\boldsymbol{v}}

\newcommand{\x}{\boldsymbol{x}}

\newcommand{\y}{\boldsymbol{y}}
\newcommand{\h}{\boldsymbol{h}}
\newcommand{\W}{\boldsymbol{W}}

\newcommand{\Zb}{\boldsymbol{Z}}
\newcommand{\Eb}{\boldsymbol{E}}
\newcommand{\U}{\boldsymbol{U}}
\newcommand{\Ub}{\bar{\U}}
\newcommand{\V}{\boldsymbol{V}}
\newcommand{\Q}{\boldsymbol{Q}}

\newcommand{\z}{\boldsymbol{z}}
\newcommand{\e}{\boldsymbol{e}}
\newcommand{\p}{\boldsymbol{p}}
\newcommand{\g}{\boldsymbol{g}}
\newcommand{\w}{\boldsymbol{w}}
\newcommand{\f}{\boldsymbol{f}}
\newcommand{\vv}{\boldsymbol{v}}
\newcommand{\ww}{\boldsymbol{w}}

\newcommand{\Tb}{\boldsymbol{T}}
\newcommand{\Y}{\boldsymbol{Y}}
\newcommand{\G}{\boldsymbol{G}}
\renewcommand{\H}{\boldsymbol{H}}
\newcommand{\E}{\boldsymbol{E}}
\newcommand{\A}{\boldsymbol{A}}
\newcommand{\I}{\boldsymbol{I}}
\newcommand{\X}{\boldsymbol{X}}
\newcommand{\B}{\boldsymbol{B}}

\newcommand{\C}{\boldsymbol{C}}
\newcommand{\D}{\boldsymbol{D}}

\newcommand{\KFAC}{\G_{\text{KFAC}}}
\newcommand{\CFAC}{\G_{\text{CFAC}}}

\newcommand{\VC}{\V_\text{class}}
\newcommand{\VCP}{\V_\text{cross}}
\newcommand{\VW}{\V_\text{within}}

\newcommand{\GC}{\G_\text{class}}
\newcommand{\GCP}{\G_\text{cross}}
\newcommand{\GCPc}{\G_{\text{cross},c}}

\newcommand{\GW}{\G_\text{within}}
\newcommand{\GWc}{\G_{\text{within},c}}
\newcommand{\GWccp}{\G_{\text{within},c,c'}}

\newcommand{\HC}{\H_\text{class}}
\newcommand{\HW}{\H_\text{within}}

\newcommand{\WC}{\W_\text{class}}
\newcommand{\WW}{\W_\text{within}}

\newcommand{\DeltaC}{\Deltaa_\text{class}}
\newcommand{\DeltaCP}{\Deltaa_\text{cross}}

\newcommand{\DeltaW}{\Deltaa_\text{within}}

\newcommand{\Deltaa}{\boldsymbol{\Delta}}

\newcommand{\deltaa}{\boldsymbol{\delta}}
\newcommand{\thetaa}{\boldsymbol{\theta}}

\newlength\mylen
\newcommand\myinput[1]{%
  \settowidth\mylen{\KwIn{}}%
  \setlength\hangindent{\mylen}%
  \hspace*{\mylen}#1\\}
\newcommand\myresult[1]{%
  \settowidth\mylen{\KwResult{}}%
  \setlength\hangindent{\mylen}%
  \hspace*{\mylen}#1\\}



\jmlrheading{?}{?}{?}{?}{?}{papyan20b}{Vardan Papyan}


\ShortHeadings{Class/Cross-Class Structure Pervade Deep Learning}{Papyan}
\firstpageno{1}

\begin{document}

\title{Traces of Class/Cross-Class Structure \\ Pervade Deep Learning Spectra}

\author{\name Vardan Papyan \email papyan@stanford.edu \\
\addr Department of Statistics\\
Stanford University\\
Stanford, CA 94305, USA}

\editor{?}

\maketitle

\begin{abstract}
Numerous researchers recently applied empirical spectral analysis to the study of modern deep learning classifiers. We identify and discuss an important formal \textit{class/cross-class structure} and show how it lies at the origin of the many visually striking features observed in deepnet spectra, some of which were reported in recent articles, others are unveiled here for the first time. These include spectral outliers, ``spikes'', and small but distinct continuous distributions, ``bumps'', often seen beyond the edge of a ``main bulk''.

The significance of the cross-class structure is illustrated in three ways: (i) we prove the ratio of outliers to bulk in the spectrum of the Fisher information matrix is predictive of misclassification, in the context of multinomial logistic regression; (ii) we demonstrate how, gradually with depth, a network is able to separate class-distinctive information from class variability, all while orthogonalizing the class-distinctive information; and (iii) we propose a correction to KFAC, a well-known second-order optimization algorithm for training deepnets.
\end{abstract}

\begin{keywords}
deep learning, Hessian, spectral analysis, low-rank approximation, multinomial logistic regression
\end{keywords}




\section{Introduction}
\subsection{Empirical measurements of deepnet spectra}
Recently there has been a surge of interest in measuring the spectra associated with deep classifying neural networks. \citet{lecun2012efficient}, \citet{dauphin2014identifying} and \citet{sagun2016eigenvalues,sagun2017empirical} measured the eigenvalues of the Hessian of the parameters averaged over the training data. They plotted histograms of eigenvalues and observed a bulk, together with a few large outliers. We define these somewhat informally (see also Figure \ref{fig:pattern}):
\begin{mdframed}
\begin{defn}\textup{{\textbf{(Bulk).}}}
    A collection of eigenvalues which, when displayed in a histogram form, seemingly follows a continuous distribution.
\end{defn}
\end{mdframed}
\begin{mdframed}
\begin{defn}\textup{{\textbf{(Outliers).}}}
    A collection of eigenvalues, each individually isolated away from the other eigenvalues.
\end{defn}
\end{mdframed}
\noindent Crucially, \citet{sagun2016eigenvalues,sagun2017empirical} observed that the number of outliers in the spectrum of the Hessian is often equal to the number of classes $C$. Their observation was supported by \citet{gur2018gradient} who noticed that the eigenvectors corresponding to these $C$ outliers span approximately the gradients of stochastic gradient descent (SGD). \citet{papyan2019measurements} developed a rigorous attribution methodology which attributed these $C$ outliers to $C$ unsubtracted class means of gradients. \citet{fort2019emergent} alluded to yet another related phenomenon--training deepnets is successful even when confined to low dimensional subspace of parameters \citep{li2018measuring,jastrzkebski2018relation, fort2019large, fort2019goldilocks}.

\citet{sagun2017empirical} experimented with: (i) two-hidden-layer networks, with $30$ hidden units each, trained on synthetic data sampled from a Gaussian Mixture Model data; and (ii) one-hidden-layer networks, with $70$ hidden units, trained on MNIST. Their exploration was limited to architectures with \textbf{thousands} of parameters--orders of magnitude smaller than state-of-the-art architectures such as VGG by \citet{simonyan2014very} and ResNet by \citet{he2016deep} that have \textbf{tens of millions, hundreds of millions or close to a billion parameters} \citep{mahajan2018exploring}. In the absence of other deeper insights, phenomena observed in such small-scale `academic' examples could not be expected to persist in large-scale real-world examples. In the last year, it became possible to study spectra of deepnet Hessians at full-scale. \citet{papyan2018full} used this to observe  that the patterns seen in previous small-scale examples persist even in state-of-the-art deepnets.

In parallel, \citet{ghorbani2019investigation} studied the evolution of the full spectrum throughout the epochs of SGD, investigating the effects of skip connections, batch normalization and learning rate drops on properties of outliers and bulk. In addition to studying the spectrum of the full Hessian, \citet{li2019hessian} measured the spectrum of the layer-wise Fisher Information Matrix (FIM). They observed a bulk-and-outliers structure, and a closer inspection of their results shows that there is in fact more than just one bulk. \citet{jastrzebski2020break}, in addition to studying the spectrum of the Hessian, also studied the spectrum of the covariance of gradients, observing a bulk-and-outliers structure in both cases. They showed how SGD hyperparameters affect the magnitude of the spectral norm and the condition number of both matrices.

There were also measurements in the literature of quantities other than the Hessian, FIM, and covariance of gradients. \citep{martin2018implicit,mahoney2019traditional} measured extensively the spectrum of deepnet weights throughout the layers. Their plots sometimes show a set of outliers isolated from a bulk and closer inspection suggests occasionally the presence of another small bulk beyond the main bulk. \citet{verma2018manifold} proposed a novel regularization scheme for training deepnets and investigated its effect on the spectra of features, which they show exhibit a bulk-and-outliers structure. \citet{oymak2019generalization} measured the spectrum of the backpropagated errors and showed again a set of outliers isolated from a bulk.

\subsection{Initial theoretical studies}
Mathematically oriented researchers tried to leverage Random Matrix Theory (RMT) to generate features similar to the ones observed in practice and study them. \citet{pennington2017geometry} decomposed the Hessian into two components, the FIM $\G$ and a residual $\E$, assumed that the eigenvalues of $\G$ are distributed according to the Marchenko-Pastur law and those of $\E$ according to the semi-circle law, and studied the predicted spectrum of the Hessian. \citet{pennington2018spectrum} calculated the Stieltjes transform of the spectral density of the FIM for a single hidden layer neural network with squared loss and normally distributed weights and inputs. \citet{Granziol2019} studied the deviation of the train Hessian from the population Hessian, as a function of the ratio of sample size to number of parameters. They assumed the spectrum of the Hessian has a bulk, originating from the Gaussian Orthogonal Ensemble, with several outliers.

The loss surface of deepnets changes depending on the width of the network \citep{geiger2019disentangling,geiger2020scaling}. Mathematically oriented researchers therefore tried to leverage large-width limits to prove claims about deepnet spectra. \citet{karakida2019universal,karakida2019pathological} calculated the mean, variance and maximum of the FIM eigenvalues. \citet{dyer2019asymptotics} and \citet{andreassen2020asymptotics} used Feynman diagrams to study the training dynamics of SGD, calculating the spectra of the Hessian and the Neural Tangent Kernel (NTK) \citep{jacot2018neural}. \citet{jacot2019asymptotic} calculated the moments of the Hessian throughout training and showed how the FIM and $\E$ are asymptotically mutually orthogonal.

At the present time, existing spectral measurements display a wide variety of features (bulk shapes, outliers, secondary mini-bulks, etc.). It seems fair to say that existing theoretical studies reproduce certain of these features. However, the connections between formal analysis and observed features are so far incomplete. In fact, it is an ongoing activity to propose generative models exhibiting the different observed phenomena.

\subsection{Why are many researchers measuring deepnet spectra?}
In doing spectral analysis of each of these fundamental objects--Hessian, FIM, features, backpropagated errors, and weights--researchers hope to gain deeper insights into deepnet behaviour. Many researchers believe that such spectral features, once better understood, will provide clues to improvements in deep learning training or classifier performance \citep{lecun1998efficient,dauphin2014identifying,sagun2016eigenvalues,sagun2017empirical,gur2018gradient,ghorbani2019investigation,yao2019pyhessian}.

\subsection{Open questions}
Our goal in this work is the answer the following fundamental questions:
\begin{mdframed}
\begin{enumerate}
    \item[] \textbf{Cause attribution:} Can we say what causes outliers, mini-bulk(s) and bulk(s) in various deepnet spectra? Can we explain the number of eigenvalue outliers? Can we explain why the largest outlier is much farther out?
    \item[] \textbf{Ubiquity:} Why are these patterns pervasive in spectra across a variety of deepnets and variety of objects (features, backpropagated errors, gradients, weights, FIM, Hessian)?
    \item[] \textbf{Significance:} Are these patterns mere artifacts or do they convey meaningful clues? If meaningful, how can we best use the hints they give?
\end{enumerate}
\end{mdframed}

\subsection{Insights from three-level hierarchical structure}
In previous work, \cite{papyan2019measurements} introduced a three-level hierarchical structure for deepnet gradients. He then introduced its connection to some of the spectral patterns in the FIM mentioned above. This work shows how this three-level hierarchical structure can be utilized to explain the spectra of all the fundamental quantities in deep learning, not just the gradients.

\subsection{A pattern covering all cases}
We now make clear the main spectral features we will be discussing and explaining, through a pattern schematized in Figure \ref{fig:pattern}. This pattern applies to any of the spectral settings mentioned earlier or any of the several new settings to be discussed below. The pattern consists of:
\begin{itemize}
    \item A bulk;
    \item $C(C-1)$ eigenvalue outliers, i.e., eigenvalues outside the main bulk (for the sake of brevity we will refer to them as $C^2$ outliers);
    \item $C-1$ eigenvalue outliers situated at still larger amplitudes; and
    \item A single isolated outlier larger still.
\end{itemize}
The $C^2$ outliers may appear either as separated spikes, or alternatively as what we call a {\it mini-bulk}: an approximately continuous distribution rather than a series of separated spikes. We emphasize the schematic nature of the above description; the exact appearance of a spectral plot will differ from situation to situation.

\begin{figure}[t]
    \centering
    \includegraphics[trim=3.5cm 4.5cm 3.25cm 4.5cm,clip,width=0.8\textwidth]{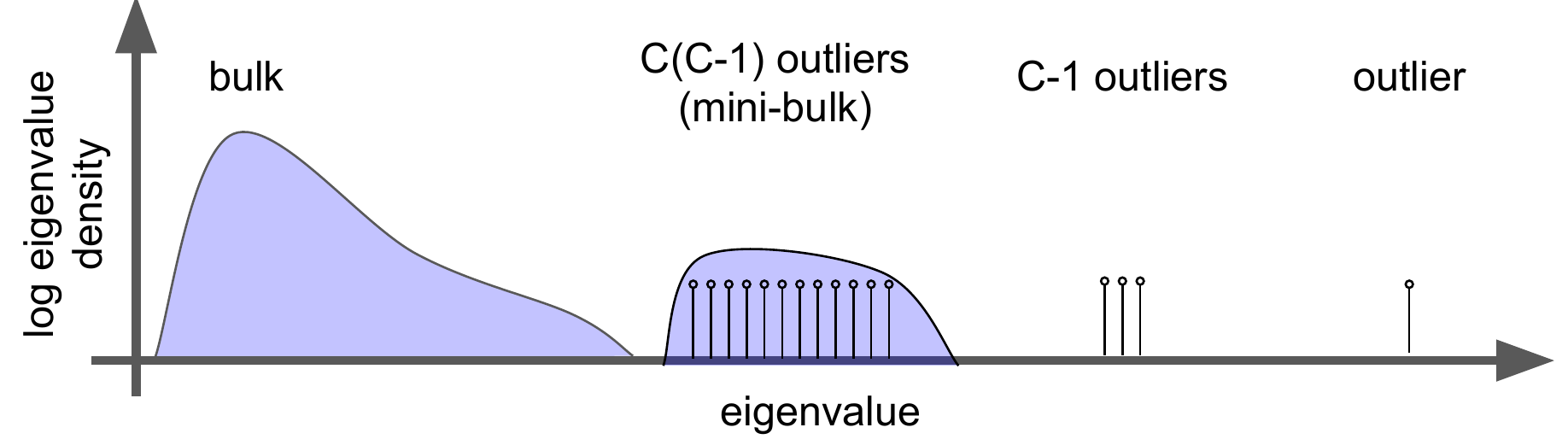}
    \caption{\textbf{Schematic typical spectrum.} In the above schematic, with $C{=}4$ classes, we see the presence of one isolated outlier on the far right, of $C-1$ secondary outliers that are less separated, of a mini-bulk consisting of $C(C-1) \approx C^2$ outliers, and of a ``main bulk'' on the left. These important spectral features are explained further in the body of the text.}
    \label{fig:pattern}
\end{figure}

\subsection{Class block structure}
Our first goal in this work is to explain what causes this ubiquitous pattern to emerge. To this end, we need the following definitions.

\begin{mdframed}
\begin{defn}\textup{{\textbf{(Class block structure).}}}
    An array of vectors $\{ \v_I \}_I$ exhibits a (balanced\footnote{Imbalanced structure would result if different classes $c$ had different numbers of examples per class $N_c$. We only study the balanced case, where $N_c = N_{c'}, \ \forall c,c'$.}) \textit{class block structure} when the indices have the form $I=(i, c)$, where $1 \leq i \leq N$ runs across the indices of examples in a certain class, and $1 \leq c \leq C$ runs across the class indices.
\end{defn}
\end{mdframed}
\begin{exmp}[\textbf{Training examples}]
    \textit{Training examples} in standard class-balanced machine learning datasets such as CIFAR10 exhibit class block structure. For example, CIFAR10 has a total of $50000$ training vectors, which include $5000$ examples in class `cat', $5000$ examples in class `dog', etc. We denote the $i$'th example in the $c$'th class by $\x_{i,c}$.
\end{exmp}
Let $f(\cdot)$ be some fixed function. Consider an array $\{ \v_{i,c} \}_{i,c}$, where $\v_{i,c} = f(\x_{i,c})$. Such an array inherits the class block structure from the train examples $\x_{i,c}$.
\begin{exmp}[\textbf{Features}]
    Consider the post-activations (also called features) at some fixed layer $l$ of a deepnet. They exhibit a class block structure. Indeed, they are functions of the examples and hence they inherit the class organization. The concatenation of such features across the layers also exhibits such structure. We denote the $l$'th layer features of $\x_{i,c}$ by $\h_{i,c}^l$ and their cross-layer concatenation by $\h_{i,c}$.
\end{exmp}
\begin{exmp}[\textbf{Gradients}]
    The gradients of the loss $\Lagr(\thetaa)$ with respect to the parameters of the model $\thetaa$,
    \begin{equation*}
        \pdv{\ell( f(\x_{i,c};\thetaa), \y_c )}{\thetaa},
    \end{equation*}
    inherit the class block structure from the examples $\x_{i,c}$.
\end{exmp}

\subsection{Cross-class block structure}
Assume we are using cross-entropy loss,
\begin{equation*}
    \Lagr(\theta) = \Ave_{i,c,c'} \Lagr_{i,c,c'} = - \Ave_{i,c,c'} \ y_{i,c,c'} \log(p_{i,c,c'}),
\end{equation*}
where $p_{i,c,c'}$ is the probability under the `logistic' or `softmax' model, that the $i$'th example in the $c$'th class belongs to $c'$. Similarly, $y_{i,c,c'}$ is the ground truth probability of the $i$'th example in the $c$'th class belonging to $c'$, which is equal to the Kronecker delta function, $\delta_{c=c'}$. The interpretation of the second subscript ($c'$) is different than that of the first subscript ($c$). The first denotes the actual class of that observation; while the second denotes the classes enumerated in applying the cross-entropy loss. In what follows, $c'$ will generally denote such a cross-entropy class, or cross-class. $c'$ generally represents a would-be class, as distinguished from $c$, the actual observed label class.

\begin{mdframed}
\begin{defn}\textup{{\textbf{(Cross-class block structure).}}}
    An $N \times C \times C$ array of vectors  $\v_{i,c,c'}$ exhibits a (balanced) \textit{cross-class block structure} when it is indexed by a three-tuple, $(i, c, c')$, where $1 \leq i \leq N$ runs across the indices of examples in a certain class, $1 \leq c \leq C$ runs across the \textit{class indices}, and $1 \leq c' \leq C$ runs across the \textit{cross-class indices}.
\end{defn}
\end{mdframed}
\begin{exmp}[\textbf{Losses}]
    The losses $\Lagr_{i,c,c'}$ exhibit cross-class structure.
\end{exmp}
\begin{exmp}[\textbf{Extended gradients}]
    The ``ordinary'' gradients of the loss, associated with an example $\x_{i,c}$, have the form:
    \begin{equation*}
        \g_{i,c,c} = \pdv{\ell( f(\x_{i,c};\thetaa), \y_c )}{\thetaa} = \pdv{f(\x_{i,c};\thetaa)}{\thetaa}^\T (\p_{i,c} - \y_c);
    \end{equation*}
    they exhibit class block structure but \textbf{not} cross-class block structure. We define an \textbf{extended gradient}, denoted $\g_{i,c,c'}$, which \textbf{does} exhibit cross-class block structure. In its definition, we replace in the above equation the actual observed one-hot vector $\y_c$ with a counterfactual one-hot vector corresponding to a would-be observation $\y_{c'}$:
    \begin{equation*}
        \g_{i,c,c'} = \pdv{\ell( f(\x_{i,c};\thetaa), \y_{c'} )}{\thetaa} = \pdv{f(\x_{i,c};\thetaa)}{\thetaa}^\T (\p_{i,c} - \y_{c'}).
    \end{equation*}
    Later we will see that the Fisher Information Matrix is a (weighted) second moment of extended gradients.
\end{exmp}
\begin{exmp}[\textbf{Backpropagated errors}]
    The derivative of the loss with respect to the output of some fixed layer $l$, also known as the backpropagated error, provides yet another array of vectors exhibiting cross-class block structure, provided we consider derivatives associated with all possible cross-class labels. The concatenation of backpropagated errors across layers also exhibits such structure. The $l$'th layer backpropagated error induced by example $\x_{i,c}$ will be denoted $\deltaa_{i,c,c'}^l$; the cross-layer concatenation will be denoted $\deltaa_{i,c,c'}$.
\end{exmp}

\subsection{Global mean, \texorpdfstring{$C$}{C} class means and \texorpdfstring{$C^2$}{C2} cross-class means}
Arrays exhibiting class/cross-class block structure permit various averages to be compactly expressed.
\begin{mdframed}
\begin{defn}\textup{{(\textbf{$C^2$ cross-class means).}}}
    For an array of vectors $\{ \v_{i,c,c'} \}_{i,c,c'}$ exhibiting cross-class structure, we denote their $C^2$ \textit{cross-class means} by $\{ \v_{c,c'} \}_{c,c'}$; they are obtained by averaging, for a fixed class $c$, and cross-class $c'$, across the replication index $i$, i.e.,
    \begin{equation*}
        \v_{c,c'} = \Ave_i \v_{i,c,c'}.
    \end{equation*}
\end{defn}
\end{mdframed}
\begin{exmp}
    Denote the $C^2$ cross-class means of gradients by $\{ \g_{c,c'} \}_{c,c'}$.
\end{exmp}
\begin{exmp}
    Denote the $C^2$ cross-class means of the $l$'th layer backpropagated errors by $\{ \deltaa_{c,c'}^l \}_{c,c'}$, and their layer-wise concatenation by $\{ \deltaa_{c,c'} \}_{c,c'}$.
\end{exmp}

\begin{mdframed}
\begin{defn}\textup{{\textbf{($C$ class means).}}}
    For an array of vectors $\{ \v_{i,c,c'} \}_{i,c,c'}$ exhibiting cross-class block structure, we denote their $C$ \textit{class means} by $\{ \v_c \}_c$; each is obtained by averaging, for a fixed class $c$, the cross-class means associated with that class, i.e.,
    \begin{equation} \label{def:v_c}
        \v_c = \Ave_{c'} \v_{c,c'} = \Ave_{i,c'} \v_{i,c,c'}.
    \end{equation}
    Moreover, an array $\{ \v_{i,c} \}_{i,c}$ exhibiting class block structure has $C$ \textit{class means}, $\{ \v_c \}_c$; each is obtained by averaging, for a fixed class $c$, across the replication index $i$, i.e.,
    \begin{equation*}
        \v_c = \Ave_i \v_{i,c}.
    \end{equation*}
\end{defn}
\end{mdframed}
\begin{exmp}
    Denote the $C$ feature class means at layer $l$ by $\{ \h_c^l \}_c$ and their layer-wise concatenation by $\{ \h_c \}_c$.
\end{exmp}
\begin{exmp}
    Denote the $C$ backpropagated error class means at layer $l$ by $\{ \deltaa_c^l \}_c$ and their layer-wise concatenation by $\{ \deltaa_c \}_c$.
\end{exmp}
\begin{exmp}
    Denote the $C$ class means of gradients by $\{ \g_c \}_c$.
\end{exmp}

\begin{mdframed}
\begin{defn}\textup{{\textbf{(Global mean).}}}
    An array of vectors exhibiting class/cross-class block structure has a \textit{global mean}, given by
    \begin{equation*}
        \v_G = \Ave_c \v_c.
    \end{equation*}
\end{defn}
\end{mdframed}

\subsection{Second moment matrices and covariances in the class/cross-class structure}
It is very natural to express second moment matrices for arrays with class/cross-class structure:

\begin{mdframed}
\begin{defn}\textup{{\textbf{(Second moment matrix).}}}
    An array of vectors exhibiting class or cross-class block structure with $D$-dimensional vectors has a \textit{second moment matrix} $\V \in \R^{D \times D}$ given by
    \begin{equation*}
        \V = \Ave_{i,c} \v_{i,c} \v_{i,c}^\T,
    \end{equation*}
    or
    \begin{equation*}
        \V = \Ave_{i,c,c'} \v_{i,c,c'} \v_{i,c,c'}^\T,
    \end{equation*}
    respectively.
\end{defn}
\end{mdframed}

\begin{mdframed}
\begin{defn}\textup{{\textbf{(Second moment of global mean).}}}
    Associated with an array of vectors $\{\v_{i,c,c'}\}_{i,c,c'}$ exhibiting class/cross-class structure is the \textit{second moment matrix of the global mean},
    \begin{equation*}
        \v_G \v_G^\T.
    \end{equation*}
\end{defn}
    \end{mdframed}

\begin{mdframed}
\begin{defn}\textup{{\textbf{(Between-class second moment).}}}
    Associated with an array of vectors $\{\v_{i,c,c'}\}_{i,c,c'}$ exhibiting class/cross-class structure is the \textit{between-class second moment}, $\VC \in \R^{D \times D}$, 
    \begin{equation*}
        \VC = \Ave_c \v_c \v_c^\T.
    \end{equation*}
\end{defn}
\end{mdframed}

\begin{mdframed}
\begin{defn}\textup{{\textbf{(Between-cross-class covariance).}}}
    The \textit{between-cross-class covariance}, $\VCP \in \R^{D \times D}$ associated with an array of vectors $\{\v_{i,c,c'}\}_{i,c,c'}$ exhibiting cross-class structure, is given by
    \begin{equation*}
        \VCP = \Ave_{c,c'} \z_{c,c'} \z_{c,c'}^\T,
    \end{equation*}
    where the \textit{cross-class mean deviations} $\z_{c,c'} \in \R^p$ are defined as follows:
    \begin{equation*}
        \z_{c,c'} = \v_{c,c'} - \v_{c}.
    \end{equation*}
\end{defn}
\end{mdframed}

\begin{mdframed}
\begin{defn}\textup{{\textbf{(Within-cross-class covariance).}}}
    The \textit{within-cross-class covariance}, \\ $\VW \in \R^{D \times D}$, associated with an array of vectors exhibiting cross-class structure, is given by
    \begin{equation*}
        \VW = \Ave_{i,c,c'} \z_{i,c,c'} \z_{i,c,c'}^\T,
    \end{equation*}
    where the \textit{replication deviations} $\z_{i,c,c'} \in \R^D$ are defined as follows:
    \begin{equation*}
        \z_{i,c,c'} = \v_{i,c,c'} - \v_{c,c'}.
    \end{equation*}
\end{defn}
\end{mdframed}
\noindent For simplicity, the notations of mean, covariance and second moment matrix discussed so far involved averages rather than weighted averages. Below, those notations will be extended to include certain weights $w_{i,c,c'}$ associated with corresponding terms $\v_{i,c,c'}$. Moreover, we will distinguish between vectors $\v_{i,c,c'}$, where $c = c'$ and $c \neq c'$. \footnote{In effect, we are introducing into deepnets constructs familiar in Multivariate Analysis of Variance (MANOVA), where the class/cross-class index structure would be called a two-way categorical layout. See reference \citep{huberty2006applied} for further details.}

\subsection{Cause attribution}
As the introduction has shown, various spectral features have been observed in the literature.
By proper use of our definitions, we are able to attribute causes for all the observed features as well as new ones. The cross-entropy loss induces a three-index structure of class, cross-class and replication. This index structure--inherited by all fundamental entities in deepnets, including features, backpropagated errors, and (extended) gradients--allows us to easily express certain second moment and covariance matrices. We shall demonstrate empirically that these matrices cause various spectral features:
\begin{itemize}
    \item The second moment matrix of the global mean causes the top outlier;
    \item The between-class covariance causes the leading cluster of $C-1$ outliers;
    \item The between-cross-class covariance causes the mini-bulk of $C(C-1)$ outliers; and
    \item The within-cross-class covariance causes the main bulk.
\end{itemize}
We will prove these assertions data-analytically by ``knocking out'' each of these matrices, and showing that such knockout eliminates the corresponding visual feature in the spectrum under study. We will formalize this notion of ``knockout'' into a formal attribution procedure.

\subsection{Ubiquity}
The effects of the class/cross-class structure permeate the spectra of deepnet features, backpropagated errors, gradients, weights, Fisher Information matrix, and Hessian, whether these are considered in the context of an individual layer or the concatenation of several layers. Specifically, we will show:
\begin{itemize}
    \item For a fixed layer $l$, the Kronecker product of the $c$'th class mean in the features, $\h_c^l$, and the $c$'th class mean in the backpropagated errors, $\deltaa_c^l$, approximates the $c$'th class mean in the gradients, $\g_c^l$.
    \item For a fixed layer $l$, the Kronecker product of the $c$'th class mean in the features, $\h_c^l$, and the $(c,c')$ cross-class mean in the backpropagated errors, $\deltaa_{c,c'}^l$, approximates the $(c,c')$ cross-class mean in the gradients, $\g_{c,c'}^l$.
    \item Similar relations hold between the class/cross-class means of features, $\h_c$; backpropagated errors $\deltaa_c$, $\deltaa_{c,c'}$; and gradients $\g_c$, $\g_{c,c'}$, once these are concatenated across the layers. However, now the Kronecker product is replaced by the Khatri-Rao product of the associated quantities.
    \item The $C$ class means and $C^2$ cross-class means in the layer-concatenated gradients induce $C$ and $C^2$ outliers in the spectra of the FIM.
    \item Outliers in the FIM also induce outliers in the spectrum of the Hessian, as the Hessian can be written as a summation of two components, one of them being the FIM.
\end{itemize}
These insights are summarized in Figure \ref{fig:patterns_relates}, as well as Table \ref{summary}.

\begin{figure}[t]
    \centering
    \includegraphics[width=1\textwidth]{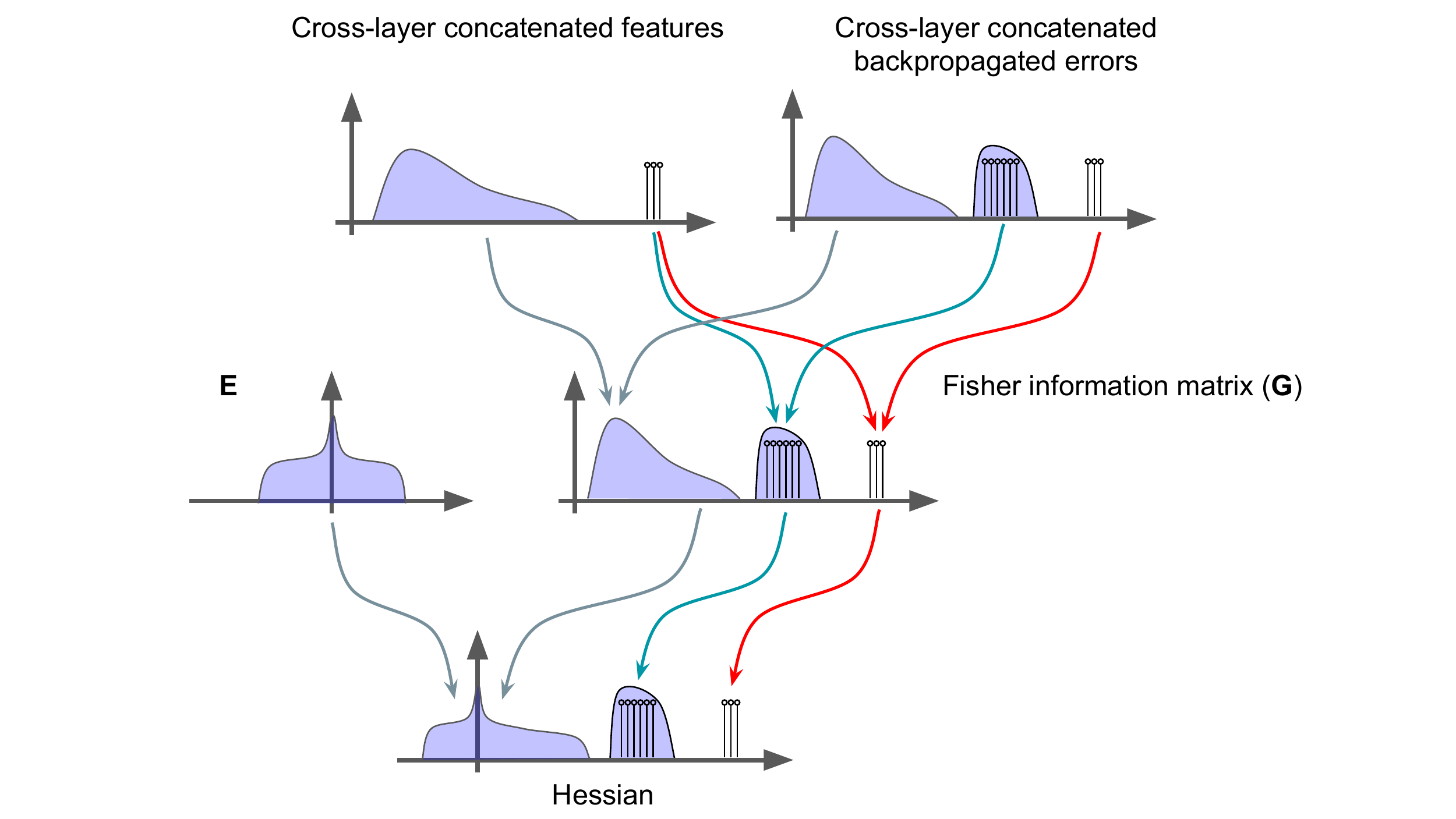}
    \caption{\textbf{Class/cross-class structure permeates all deepnet spectra ($C=3$ classes).}
    Class means in features are Khatri-Rao multiplied by class means in backpropagated errors to create class-means in FIM. Class means in features are Khatri-Rao multiplied by cross-class means in backpropagated errors to create cross-class-means in FIM. Class/cross-class means in FIM are inherited by Hessian. This important inheritance mechanism is further explained in the body of the text.}
    \label{fig:patterns_relates}
\end{figure}

\begin{table}[]
\centering
\begin{tabular}{|c|c|c|c|c|c|c|}
\hline
\multicolumn{1}{|c|}{\multirow{2}{*}{\textbf{Quantity}}}            & \multicolumn{1}{c|}{\multirow{2}{*}{\textbf{Section}}} & \multicolumn{1}{c|}{\multirow{2}{*}{\textbf{\begin{tabular}[c]{@{}c@{}}Second moment\end{tabular}}}} & \multicolumn{3}{c|}{\textbf{Attribution}}                                                                                                                                                     & \multicolumn{1}{c|}{\multirow{2}{*}{\textbf{Figures}}}                         \\ \cline{4-6}
\multicolumn{1}{|c|}{}                                              & \multicolumn{1}{c|}{}                               & \multicolumn{1}{c|}{}                                                                                  & \textbf{$C$ outliers}                                                              & \textbf{\begin{tabular}[c]{@{}l@{}}$C^2$ outliers\\ (mini-bulk)\end{tabular}}            & \textbf{Bulk} & \multicolumn{1}{c|}{}                                                          \\ \hline \hline
Hessian                                                    & \ref{sec:Hess_decomp}                               & \multicolumn{1}{c|}{-}                                                                                 & $\G$                                                                               & \multicolumn{1}{c|}{-}                                                                   & $\E$          & \ref{VGG11_spectrum_train_test_with_SSI}, \ref{attribution}                    \\ \hline
Gradients                                                  & \ref{decomp_G}                                      & $\G$                                                                                                   & \begin{tabular}[c]{@{}c@{}}$\GC$ \\ $\g_c \approx \h_c \otimes \deltaa_c$\end{tabular} & \begin{tabular}[c]{@{}c@{}}$\GCP$ \\ $\g_{c,c'} \approx \h_c \otimes \deltaa_{c,c'}$\end{tabular} & $\GW$         & \ref{log_G_SGD_ss} \\ \hline
Features                                                   & \ref{sec:features}                                  & $\H$                                                                                                   & $\HC$                                                                              & \multicolumn{1}{c|}{-}                                                                   & $\HW$         & \ref{fig:features_knockouts}, \ref{features_FCNet_bn_i}                        \\ \hline
\begin{tabular}[c]{@{}c@{}}Backprop.\\ errors\end{tabular} & \ref{sec:errors}                                    & $\Deltaa$                                                                                              & $\DeltaC$                                                                          & $\DeltaCP$                                                                               & $\DeltaW$     & \ref{fig:backprop_errors_knockouts}, \ref{jacobians_FCNet_bn_i}                \\ \hline
Weights                                                   & \ref{sec:weights}                                  & $\W$                                                                                                   & $\WC$                                                                              & \multicolumn{1}{c|}{-}                                                                   & $\WW$         & \ref{fig:weights_knockouts}                        \\ \hline
\end{tabular}
\caption{\textbf{Summary of conclusions from knockout experiments.} Each row corresponds to a different quantity of interest. The column ``Section'' references the section in which this quantity is described and possibly decomposed into its constituent components. The column ``Second moment'' indicates the notation for the second moment of this quantity. The attribution columns summarize the conclusions from the knockout experiments, which attribute spectral features observed in the spectrum of this quantity. In some cases, it also provides approximations for the matrices to which the spectral features are attributed. The last ``Figures'' column references all figures relevant to this quantity.}
\label{summary}
\end{table}

\subsection{Significance}
The outliers caused by the class means are clearly fundamental in predicting generalization. This is most evident through the following insights which will be presented in the following sections:
\begin{itemize}
    \item In the context of multinomial logistic regression, the ratio of outliers to bulk predicts misclassification.
    \item In the context of deepnets, feature class means \textit{gradually separate} from the bulk with growing depth and also \textit{gradually become orthogonal}. The ratio of outliers to bulk therefore predicts \textit{layer-wise linear separability}, while the standard deviation of the outliers represents the \textit{layer-wise orthogonality} of the classes.
\end{itemize}

\section{Problem setting}
Consider the balanced $C$-class classification problem whereby, given $n$ training examples in each of the $C$ different classes and their corresponding labels, the goal is to predict the labels on future data. Denote by $\x_{i,c}$ the $i$'th training example in the $c$'th class and by $\y_c$ its corresponding one-hot vector. A network is trained to classify an input $\x_{i,c}$ by passing it through a cascade of nonlinear transformations, ending with a linear classifier that outputs a set of predictions, $f(\x_{i,c};\thetaa) \in \R^C$. The parameters of the network, denoted by $\thetaa \in \R^p$, are trained using stochastic gradient descent (SGD) by minimizing the empirical cross-entropy loss $\ell$ averaged over the training data,
\begin{equation*}
    \Lagr(\thetaa) = \Ave_{i,c} \ell( f(\x_{i,c};\thetaa), \y_c ).
\end{equation*}
The train Hessian is defined to be the second derivative of the loss with respect to the parameters of the model, averaged over the training data, i.e.,
\begin{equation*}
    \Hess(\theta) = \Ave_{i,c} \left\{ \pdv[2]{\ell( f(\x_{i,c}; \theta), \y_c )}{\thetaa} \right\}.
\end{equation*}

\section{Spectral attribution via knockouts}
Throughout this work we will be pointing to spectral features visible in the eigenvalue distribution of various second moment and covariance matrices. We will attribute these features to various causes. We use two particular attribution procedures, based on different notions of knockout.
\begin{mdframed}
\begin{defn}\textup{{\textbf{(Subtraction knockout).}}}
    The process of subtracting a matrix $\B$ from another matrix $\A$ with the aim of eliminating certain spectral features. The resulting matrix will be denoted by $\A \ominus \B$.
\end{defn}
\end{mdframed}

\begin{mdframed}
\begin{defn}\textup{{\textbf{(Projection knockout).}}}
    The process of projecting the column space of a matrix $\B$ from another matrix $\A$, with the aim of eliminating certain spectral features. Mathematically, this is equivalent to computing $(\I - \B \B^{\dagger}) \A (\I - \B \B^{\dagger})$, where $\B^{\dagger}$ is the Moore–Penrose pseudoinverse of the matrix $\B$. The resulting matrix will be denoted by $\A \nparallel \B$. Assuming $\A$ and $\B$ are not square matrices, we define the projection knockout to be
    \begin{equation*}
        \A \nparallel \B
        = (\I - \U \U^\T) \A (\I - \V \V^\T),
    \end{equation*}
    where $\U$ and $\V$ contain all the left and right singular vectors of $\B$, respectively.
\end{defn}
\end{mdframed}

\begin{mdframed}
\begin{defn}\textup{{\textbf{(Spectral attribution via linear algebraic knockouts).}}}
    The process of attributing spectral features in the spectrum of matrix $\A$ to the spectrum of another matrix $\B$ by observing that these spectral features visually disappear after $\B$ is knocked out.
\end{defn}
\end{mdframed}


\begin{figure}[t]
    \centering
    \begin{subfigure}[t]{0.48\textwidth}
        \centering
        \includegraphics[width=1\textwidth]{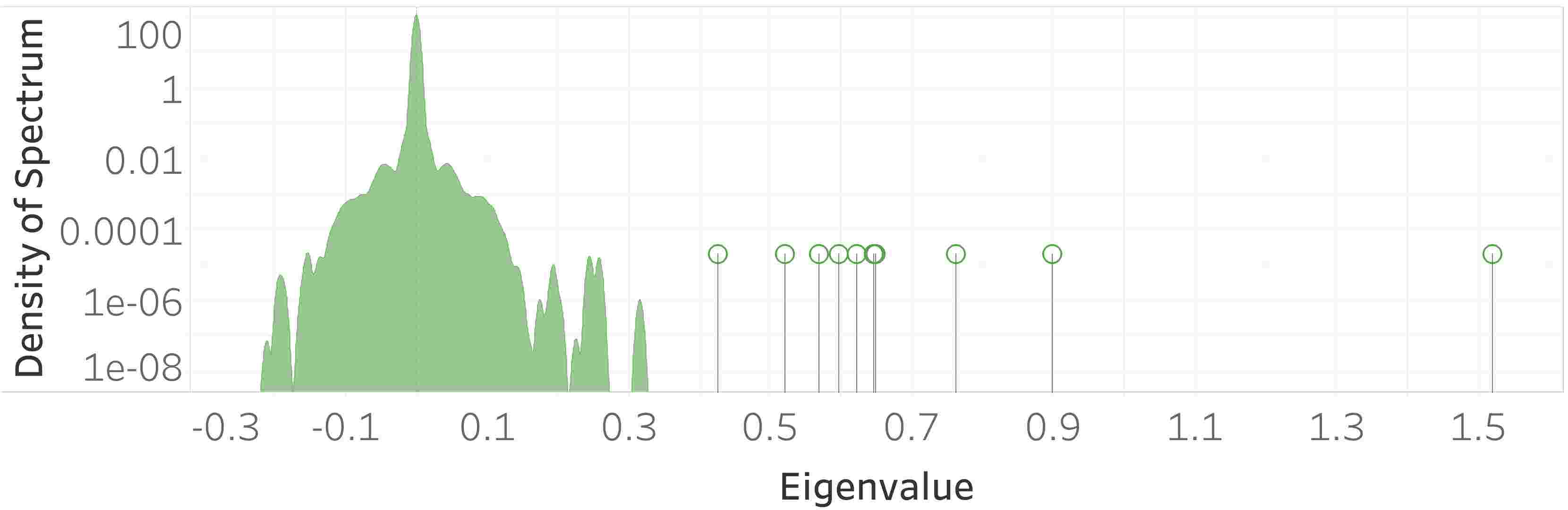}
        \caption{MNIST, train}
    \end{subfigure}
    \begin{subfigure}[t]{0.48\textwidth}
        \centering
        \includegraphics[width=1\textwidth]{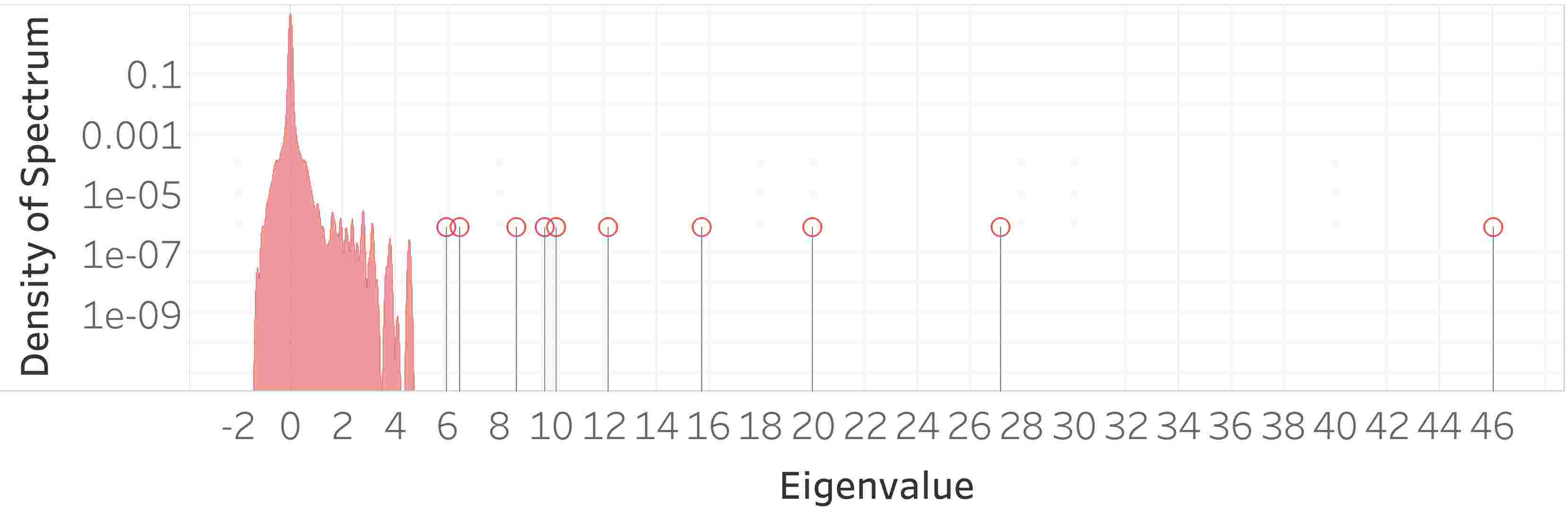}
        \caption{MNIST, test}
    \end{subfigure}
    \begin{subfigure}[t]{0.48\textwidth}
        \centering
        \includegraphics[width=1\textwidth]{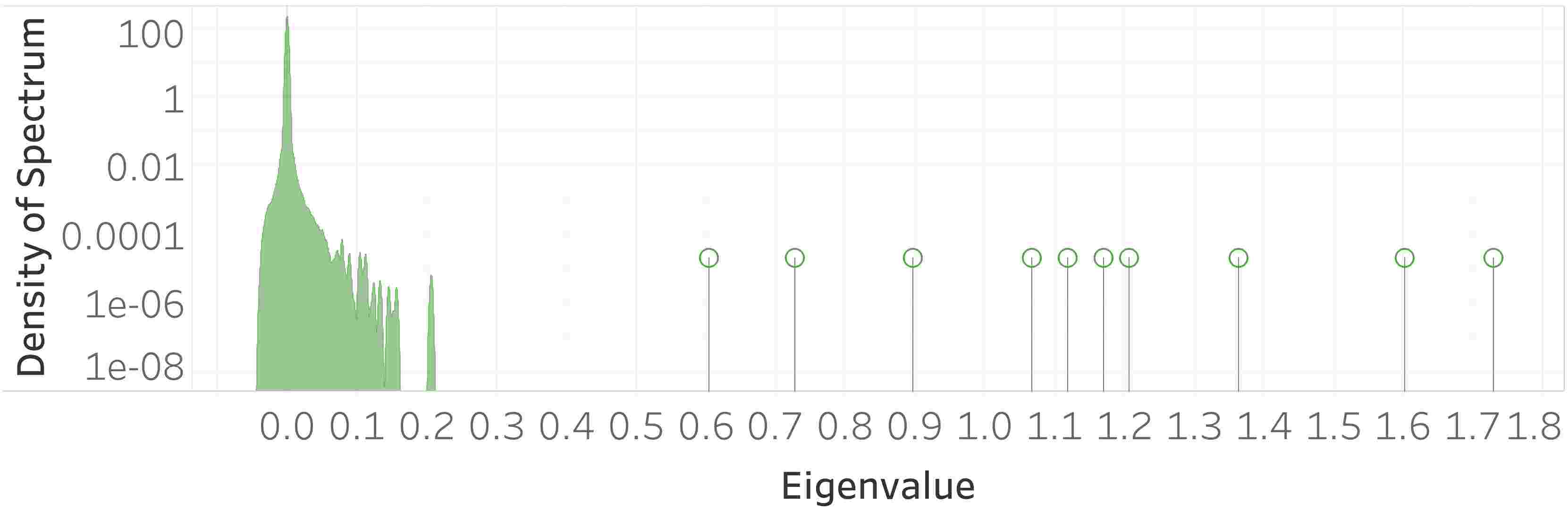}
        \caption{Fashion, train}
    \end{subfigure}
    \begin{subfigure}[t]{0.48\textwidth}
        \centering
        \includegraphics[width=1\textwidth]{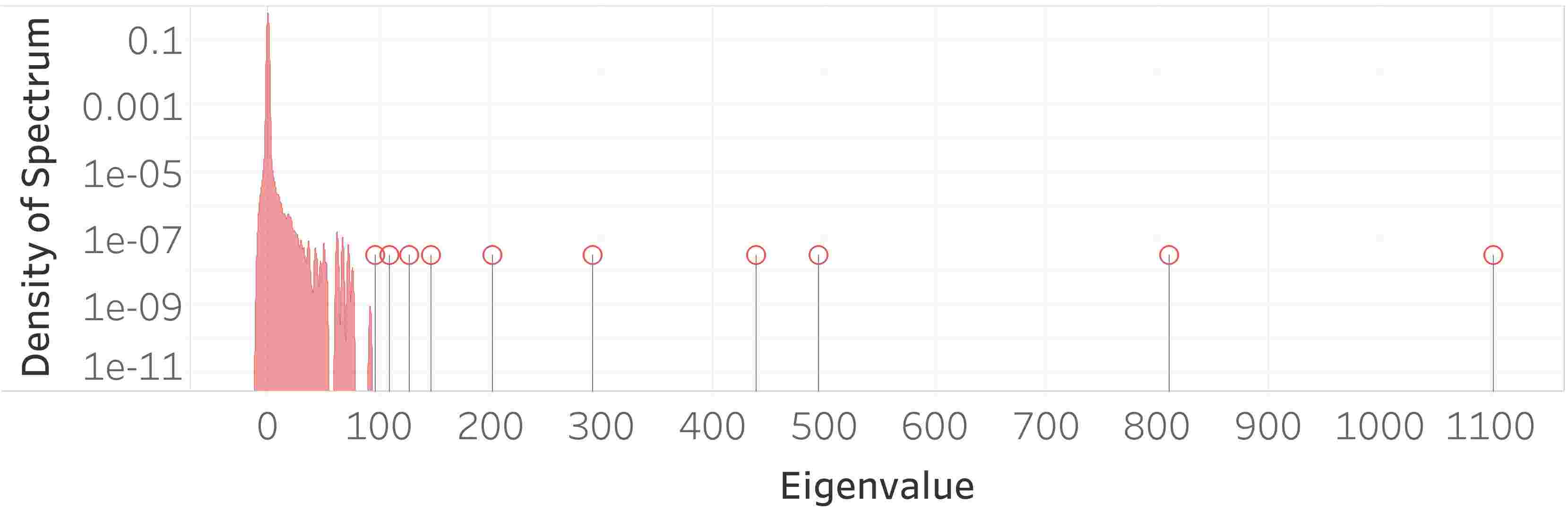}
        \caption{Fashion, test}
    \end{subfigure}
    \begin{subfigure}[t]{0.48\textwidth}
        \centering
        \includegraphics[width=1\textwidth]{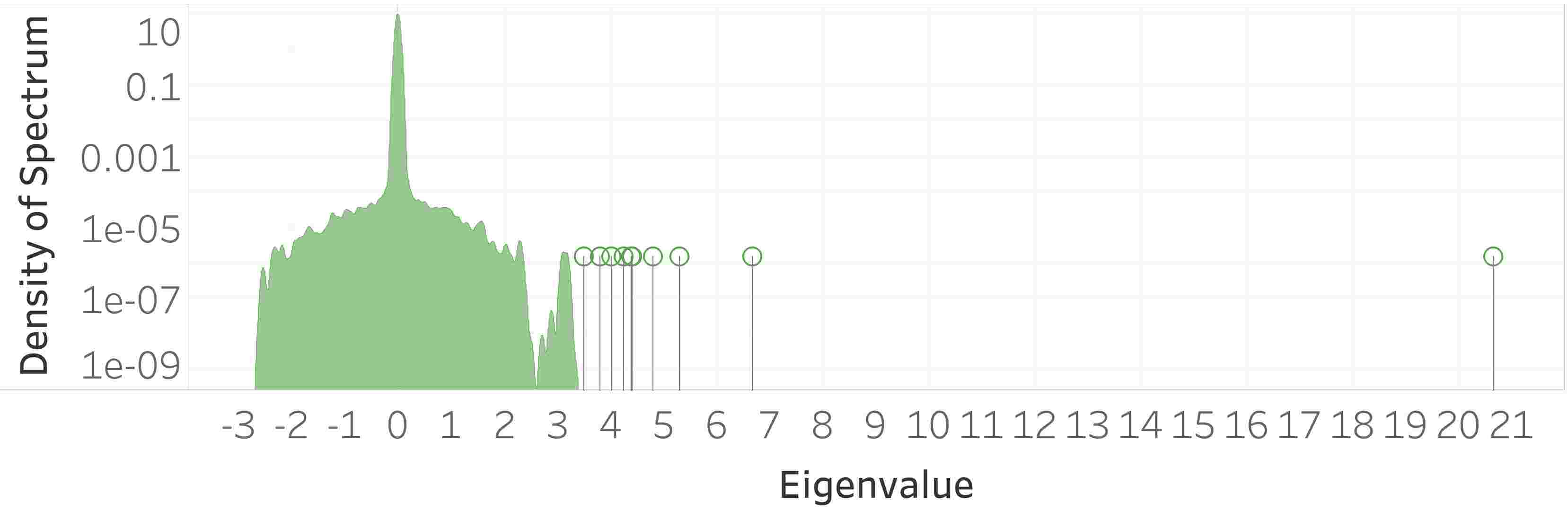}
        \caption{CIFAR10, train}
    \end{subfigure}
    \begin{subfigure}[t]{0.48\textwidth}
        \centering
        \includegraphics[width=1\textwidth]{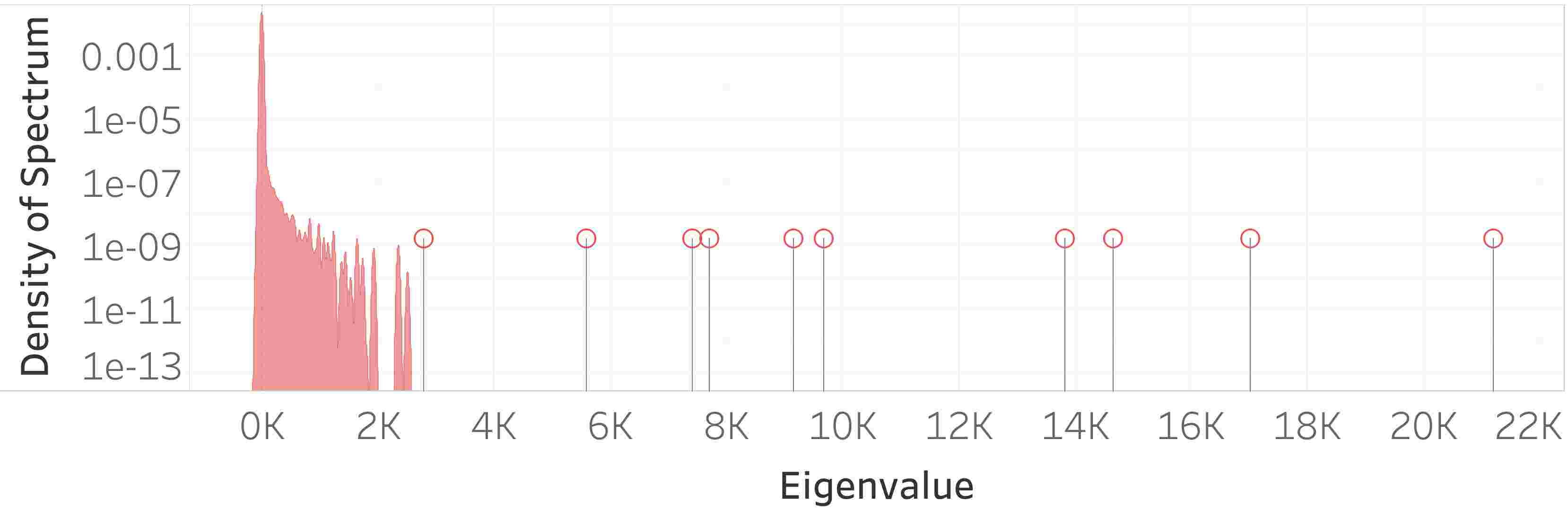}
        \caption{CIFAR10, test}
    \end{subfigure}
    \caption{\textbf{Spectrum of the Hessian for VGG11 trained on various datasets.} Each row of panels documents a `well-known' or `standard' dataset in deep learning. The panels in the left column correspond to the train Hessian, while those in the right column to the test Hessian. Notice the presence of a bulk and $C$(=10) outliers. The y-axis is on a logarithmic scale.}
    \label{VGG11_spectrum_train_test_with_SSI}
\end{figure}

\section{Hessian spectrum} \label{sec:Hessian}
\citet{sagun2016eigenvalues,sagun2017empirical} measured the spectrum of the Hessian, observing a bulk-and-outliers structure with approximately $C$ outliers. They experimented with: (i) two-hidden-layer networks, with $30$ hidden units each, trained on synthetic data sampled from a Gaussian Mixture Model data; and (ii) one-hidden-layer networks, with $70$ hidden units, trained on MNIST. In this section, we confirm their reports, this time at the full scale of modern state-of-the-art networks trained on real natural images.

We release software implementing state-of-the-art tools in numerical linear algebra, which allows one to approximate efficiently the spectrum of the Hessian of modern deepnets such as VGG and ResNet. We describe its functionality in Appendix \ref{numerical_linear_algebra}. Similar tools were concurrently proposed in the literature by \citet{ghorbani2019investigation,pfahler2019evolution,Granziol2019,chatzimichailidis2019gradvis}, each utilized for a different purpose. However, as far as we are aware, no other work released their software.

\subsection{Spectrum of Hessian has structure}
In Figure \ref{VGG11_spectrum_train_test_with_SSI} we plot the spectra of the train and test Hessian of VGG11, an architecture with 28 million parameters, trained on various datasets. The top-$C$ eigenspace was estimated precisely using $\textsc{LowRankDeflation}$ (built upon the power method) and the rest of the spectrum was approximated using \textsc{LanczosApproxSpec}. Both are described in Appendix \ref{numerical_linear_algebra}, where we also show the same plots except without first applying $\textsc{LowRankDeflation}$.

We observe a clear bulk-and-outliers structure with, arguably, $C$ outliers. The bulk is centered around zero and there is a big concentration of eigenvalues at zero due to the large number of parameters in the model ($28$ million) compared to the small amount of training ($50$ thousand) or testing ($10$ thousand) examples. As pointed out by \citet{sagun2016eigenvalues,sagun2017empirical} and further discussed by \citet{alain2019negative}, negative eigenvalues exist in the spectrum of the train Hessian. This is despite the fact that the model was trained for hundreds of epochs, the learning rate was annealed twice and its initial value was optimized over a set of $100$ values. Note there is a clear difference in magnitude between the train and test Hessian, despite the fact that both were normalized by the number of contributing terms.

\section{Decomposing Hessian into two components: \\ \texorpdfstring{$\Hess = \G + \E$}{Hess=G+E}} \label{sec:Hess_decomp}
Following the ideas presented by \citet{sagun2016eigenvalues}, we use the generalized Gauss-Newton decomposition of the Hessian and write it as a summation of two components:
\begin{align} \label{eq:Hess_G_E}
    \Hess = 
    & \underbrace{\Ave_{i,c} \left\{ \pdv{f(\x_{i,c};\thetaa)}{\thetaa}^\T \pdv[2]{\ell ( \z, \y_c)}{\z} \Bigg|_{\z_{i,c}} \pdv{f(\x_{i,c};\thetaa)}{\thetaa} \right\}}_{\G} \\
    + & \underbrace{\Ave_{i,c} \left\{ \sum_{c'=1}^C \pdv{\ell ( \z, \y_c)}{z_{c'}} \Bigg|_{\z_{i,c}} \pdv[2]{f_{c'}(\x_{i,c}; \thetaa)}{\thetaa} \right\}}_{\E},
\end{align}
where $\z_{i,c}=f(\x_{i,c}; \thetaa)$. In mathematical statistics $\G$ is called the Fisher Information Matrix (FIM). Moreover, it is related to the natural gradient algorithm \citep{amari1998natural}, as explained by \citet{pascanu2013revisiting}.

\begin{figure}
    \centering
    \begin{subfigure}[t]{0.925\textwidth}
        \centering
        \includegraphics[width=0.875\textwidth]{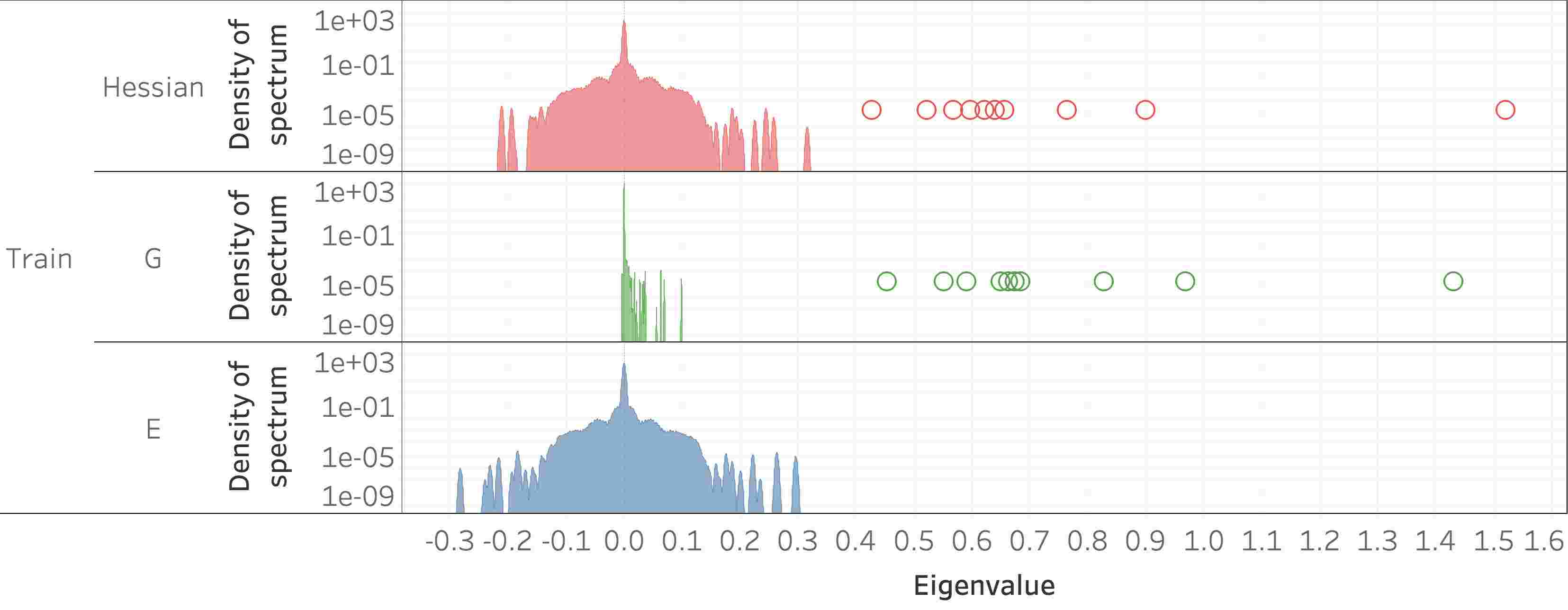}
        \includegraphics[width=0.875\textwidth]{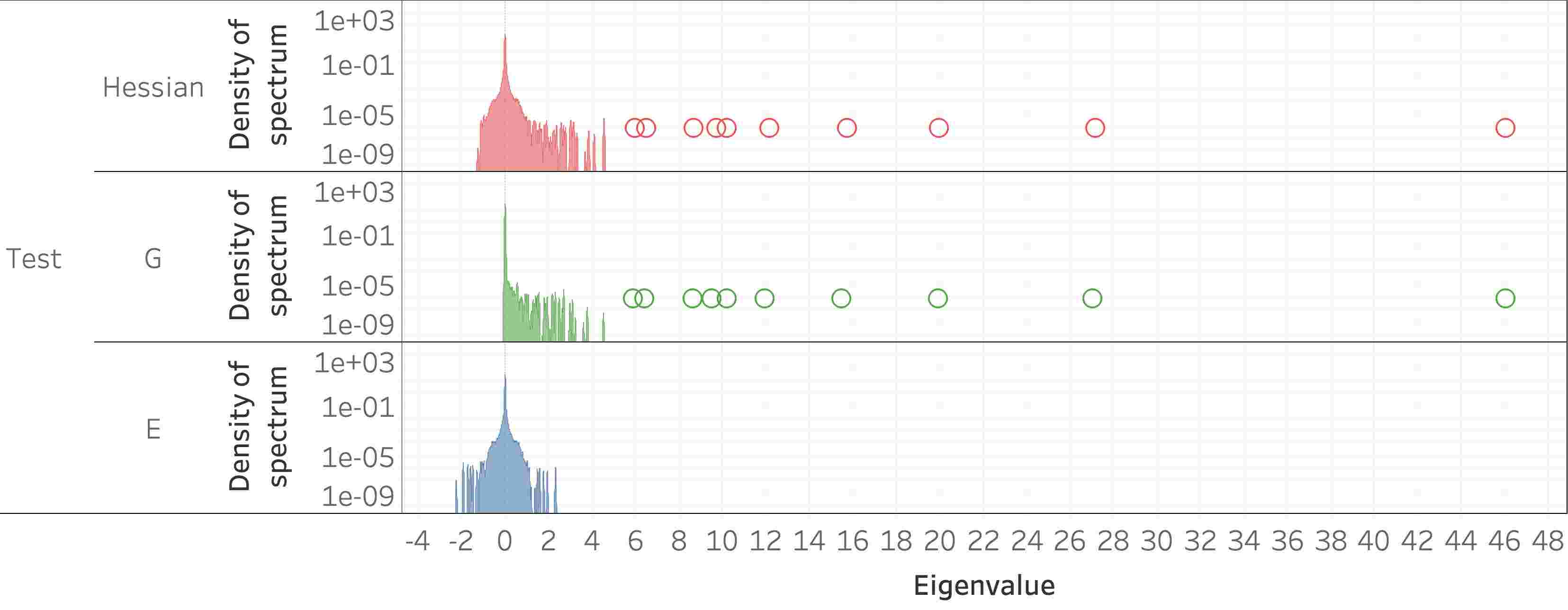}
        \caption{\textbf{Outliers attributed to $\G$.} The first block of three panels plots train spectra, while the second block plots test spectra. Within each block, each panel depicts the spectrum of a different matrix. The top panel: spectrum of Hessian. Middle panel: spectrum of $\G$ ($\E$ knocked out). Bottom panel: spectrum of $\E$ ($\G$ knocked out). Note how $\G$ is clearly responsible for the outliers.}
        \label{attribution_outliers}
    \end{subfigure}
    \begin{subfigure}[t]{0.925\textwidth}
        \centering
        \vspace{0.025\textwidth}
        \begin{overpic}[width=0.9\textwidth]{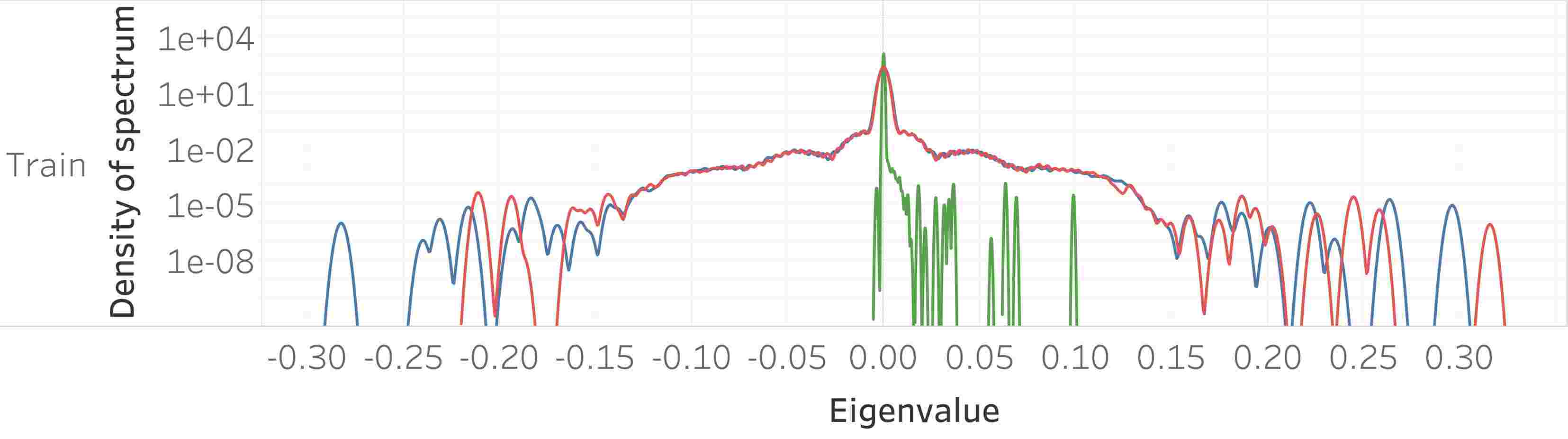}
        \put(150,85){{\parbox{1\linewidth}{%
        {\scriptsize
        \begin{align*}
            & \fcolorbox{black}{tableau_red}{\rule{0pt}{3pt}\rule{3pt}{0pt}} \quad
            \color{tableau_red} \Hess \\
            & \fcolorbox{black}{tableau_green}{\rule{0pt}{3pt}\rule{3pt}{0pt}} \quad
            \color{tableau_green} \G \\
            & \fcolorbox{black}{tableau_blue}{\rule{0pt}{3pt}\rule{3pt}{0pt}} \quad
            \color{tableau_blue} \E
        \end{align*}
        }%
        }}}
        \end{overpic}
        \includegraphics[width=0.9\textwidth]{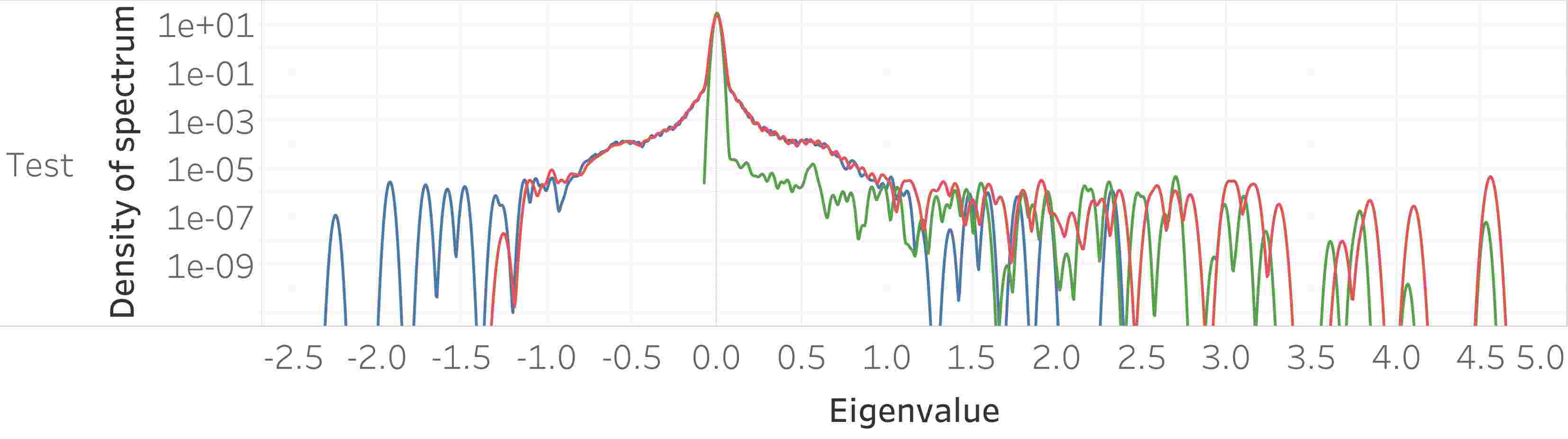}
        \caption{\textbf{Bulk attributed to $\E$.} Zoom in on the bulks of the Hessian and its two components. The top panel plots train spectra; the bottom, test spectra.
        \label{attribution_bulk}
        }
    \end{subfigure}
    \caption{\textbf{Spectrum of the Hessian with its constituent components.} The network is VGG11 and it was trained on MNIST sub-sampled to $5000$ examples per class. The y-axis of all plots is on a logarithmic scale.}
    \label{attribution}
\end{figure}

\subsection{Outliers attributable to \texorpdfstring{$\G$}{G}, bulk attributable to \texorpdfstring{$\E$}{E}}
Figure \ref{attribution} plots: (i) the spectrum of the Hessian, (ii) the spectrum of the Hessian after $\E$ is knocked out and only $\G$ is left; and (iii) the spectrum of the Hessian after $\G$ is knocked out and only $\E$ is left. Each spectrum was approximated using \textsc{LanczosApproxSpec} and the $\textsc{LowRankDeflation}$ procedure was applied on the Hessian and the $\G$ component to approximate the top-$C$ subspace. Notice how the spectra of all three matrices resemble variations of Figure \ref{fig:pattern}.

Notice how knocking out $\E$ shrinks the bulk significantly, indicating that the bulk originates largely from $\E$. Note also that the upper tails of the test Hessian and test $\G$ obey eigenvalue interlacing, as in Cauchy's interlacing theorem \citep{horn2012matrix}. 

Notice how knocking out $\G$ eliminates the outliers in the spectrum; the outliers in the Hessian are attributable to $\G$. \citet{papyan2019measurements} showed that the outliers in $\G$ are attributable to the presence of $C$ high energy gradient class means, which explains our previous observation of $C$ outliers in the spectrum of the Hessian.

\section{Cross-class structure in \texorpdfstring{$\G$}{G}} \label{decomp_G}
\citet{papyan2019measurements} proposed to decompose $\G$ based on the cross-class structure. We follow their proposition but slightly modify their decomposition. Recall the definition of an extended gradient\footnote{In \citet{papyan2019measurements} $\g_{i,c,c'}$ is defined slightly differently.}
\begin{equation} \label{eq:g_iccp}
    \g_{i,c,c'} = \pdv{\ell( f(\x_{i,c}; \thetaa), \y_{c'} )}{\thetaa}.
\end{equation}
Note that for $c=c'$, $\g_{i,c,c}$ is simply the usual gradient of the $i$'th training example in the $c$'th class. Alternatively, for $c \neq c'$, $\g_{i,c,c'}$ is the would-be gradient of the $i$'th training example in the $c$'th class, as if it belonged to cross-class $c'$ instead. This definition is useful since, as we prove in Appendix \ref{G_decomp}, $\G$ is a weighted second moment matrix of these gradients, i.e.,
\begin{equation} \label{eq:G_second_moment}
    \G = \sum_{i,c,c'} w_{i,c,c'} \g_{i,c,c'} \g_{i,c,c'}^\T,
\end{equation}
where the weights $w_{i,c,c'}$ are defined as follows:
\begin{equation*}
    w_{i,c,c'} = \frac{p_{i,c,c'}}{nC}.
\end{equation*}
Above, $p_{i,c,c'}$ is the $c'$-th entry of $p(\x_{i,c};\thetaa) \in \R^C$, which are the Softmax probabilities of $\x_{i,c}$. Define the \textit{gradient cross-class means}:
\begin{align*}
    \g_{c,c'} = \sum_i \pi_{i,c,c'} \g_{i,c,c'},
\end{align*}
the \textit{gradient within-cross-class covariance}:
\begin{equation} \label{eq:def_GW}
    \GW = \sum_{i,c,c'} w_{i,c,c'} (\g_{i,c,c'} - \g_{c,c'}) (\g_{i,c,c'} - \g_{c,c'})^\T,
\end{equation}
and its class/cross-class specific versions:
\begin{align*}
    \GWccp & = \sum_i \pi_{i,c,c'} (\g_{i,c,c'} - \g_{c,c'}) (\g_{i,c,c'} - \g_{c,c'})^\T,
\end{align*}
where the weights are given by
\begin{align*}
    \pi_{i,c,c'} & = \frac{w_{i,c,c'}}{w_{c,c'}} \\
    w_{c,c'} & = \sum_i w_{i,c,c'}.
\end{align*}
These equations group together gradients for a fixed pair of $c,c'$. Define the \textit{gradient class means}:
\begin{equation*}
    \g_c = \sum_{c' \neq c} \pi_{c,c'} \g_{c,c'}.
\end{equation*}
Notice that the above average does not take into account $\g_{c,c}$. The reason is that these are gradient means, which are approximately equal to zero at convergence of SGD. Define also the \textit{between-class gradient second moment}:
\begin{equation*}
    \GC = \sum_c \w_c \g_c \g_c^\T,
\end{equation*}
the \textit{within-cross-class gradient covariance}:
\begin{equation*}
    \GCP = \sum_{c, c' \neq c} w_{c,c'} (\g_{c,c'} - \g_c) (\g_{c,c'} - \g_c)^\T,
\end{equation*}
and its class-specific version:
\begin{equation*}
    \GCPc = \sum_{c' \neq c} \pi_{c,c'} (\g_{c,c'} - \g_{c'}) (\g_{c,c'} - \g_{c'})^\T,
\end{equation*}
where the weights are given by
\begin{align*}
    \pi_{c,c'} & = \frac{w_{c,c'}}{w_c} \\
    w_c & = \sum_{c' \neq c} w_{c,c'}.
\end{align*}
These equations represent an even coarser grouping, where the gradients with a fixed $c$ (and possibly varying $c'$) are grouped together. Although not used above, we will also define $\pi_c = \frac{w_c}{\sum_c w_c}$. Leveraging these definitions, we prove in Appendix \ref{G_decomp} that $\G$ can be decomposed as follows:
\begin{align} \label{eq:G_decomp}
    \G & =
    \GC
    + \GCP
    + \GW
    + \underbrace{\sum_c w_{c,c} \g_{c,c} \g_{c,c}^\T}_{\G_{c=c'}}.
\end{align}

\subsection{\texorpdfstring{$C$}{C} outliers attributable to \texorpdfstring{$\GC$}{GC},  \texorpdfstring{$C^2$}{C2} outliers attributable to \texorpdfstring{$\GCP$}{GCP} and bulk attributable to \texorpdfstring{$\GW$}{GW}}
\citet{papyan2019measurements} showed empirically that the outliers in the spectrum of $\G$ are attributable to the covariance of the gradient class means, i.e., $\GC$ in our decomposition. However, that earlier work did not provide empirical evidence for the existence of other components in the decomposition in Equation \eqref{eq:G_decomp}, since these are quite subtle and not always visibly pronounced in the spectra of $\G$ or its knockouts. For the present work, we developed a tool to depict the spectrum of $\log(\G)$ by leveraging the numerical linear algebra machinery developed in Appendix \ref{numerical_linear_algebra}. It turns out, the spectrum of $\log(\G)$ rather than $\G$ exposes all the components in the decomposition.

\begin{figure}[t]
    \centering
    \begin{overpic}[width=0.75\textwidth]{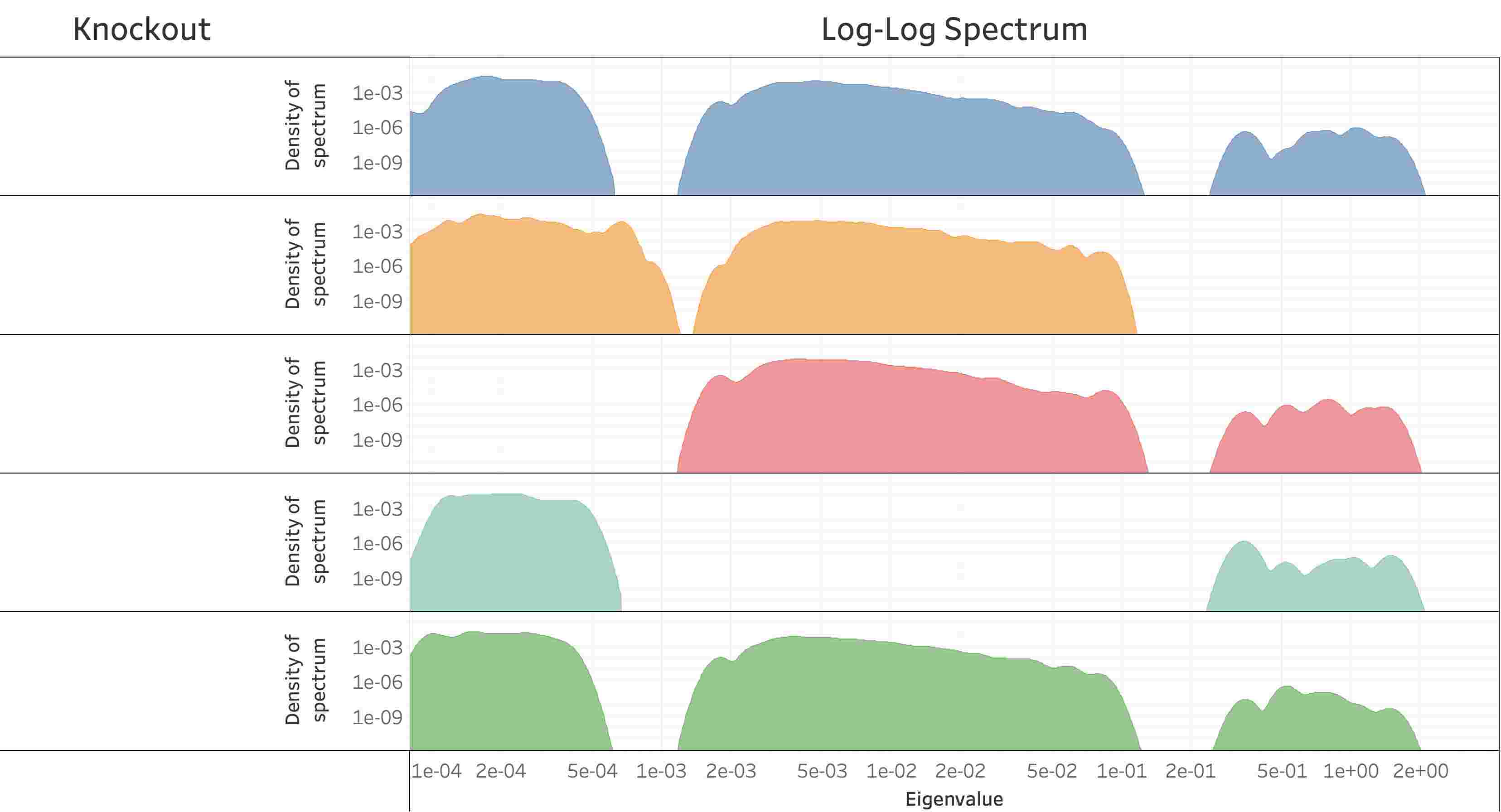}
    \put(-200,93){{\parbox{1\linewidth}{%
    {
    \begin{align*}
        & \G \ominus \mathbf{0} \\[0.585cm]
        & \G \ominus \GC \\[0.585cm]
        & \G \ominus \GCP \\[0.585cm]
        & \G \ominus \GW \\[0.585cm]
        & \G \ominus \G_{c=c'}
    \end{align*}
    }%
    }}}
    \end{overpic}
    \caption{\textbf{Attribution via knockouts of spectral features in spectrum of train $\G$.} Each panel in the right column plots the approximate log spectrum of the matrix indicated in the left column. The approximation is computed using the \textsc{LanczosApproxSpec} procedure. The network is VGG11 and it was trained on CIFAR10 subsampled to 136 examples per class. The y-axis is on a logarithmic scale.}
    \label{G_attribution}
\end{figure}

\begin{figure}
    \centering
    \includegraphics[width=0.9\textwidth]{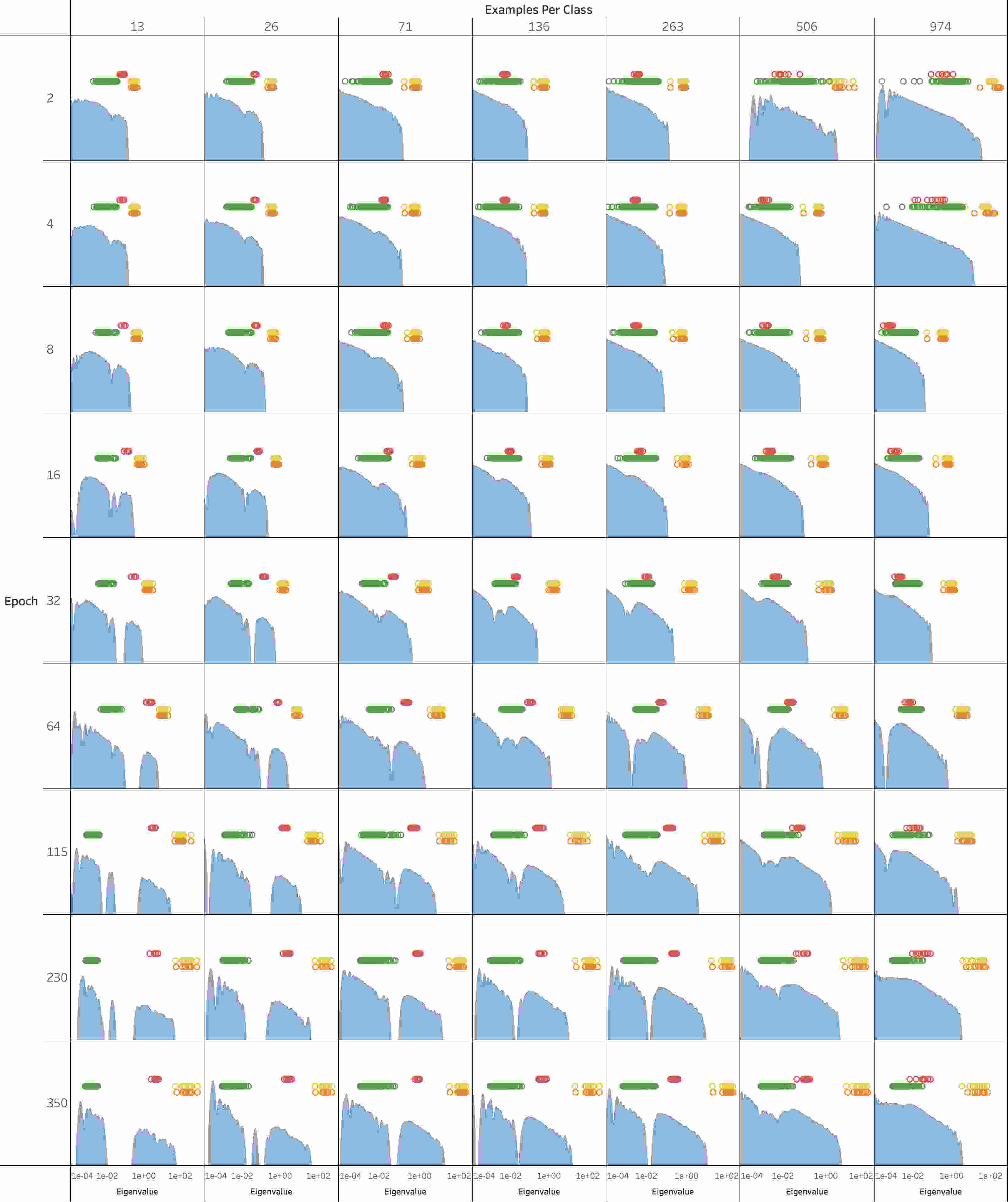}
    \caption{\textbf{Dynamics with training and sample size of cross-class structure in $\G$.} Each column of panels plots, for one specific sample size $N$, the spectrum of train $\G$ throughout the epochs of SGD, so that each row corresponds to a different epoch. Each panel also plots the eigenvalues of $\GC+\GCP+\G_{c=c'}$. The top-$C$ eigenvalues of this matrix, which are attributable to $\GC$, are colored in yellow. The next $C^2-2C$ eigenvalues, attributable to $\GCP$, are colored in green. The final $C$ eigenvalues, attributable to $\G_{c=c'}$, are colored in teal but are missing from the plots because their magnitude is less than $10^{-4}$. Each panel also plots in red the average eigenvalue of the matrices $\{ \GWc \}_{c=1}^C$, given by $\{ \frac{1}{N} \Tr{\GWc} \}_{c=1}^C$, where $\GWc$ is the within-cross-class covariance restricted to class $c$. The y-axis of all panels is on a logarithmic scale.}
    \label{log_G_SGD_ss}
\end{figure}

The first panel of Figure \ref{G_attribution} depicts the spectrum of $\log(\G)$. Notice its surprisingly simple structure with three separated bulks. The one in the middle is the main bulk that would ordinarily be seen in the spectrum of $\G$. The left and right bulks have very low density of eigenvalues compared to it--they can only be seen because we are looking at the spectrum of $\log(\G)$. The same Figure also shows the spectrum of $\G$ once different matrices are knocked out. Once $\GC$ is knocked out, the $C$ outliers on the right disappear, corroborating previous findings by \citet{papyan2019measurements} that claim these outliers are attributable to $\GC$. More importantly, once $\GCP$ is knocked out, the left mini-bulk disappears; it is attributable to the between-cross-class covariance. Once $\GW$ is knocked out, the main bulk disappears, implying it is attributable to the within-cross-class covariance. Subtracting $\G_{c=c'}$ has no clear effect on the spectrum, which can be explained by noticing that $\G_{c=c'}$ is a second moment of gradient means, which are approximately equal to zero at convergence of SGD.


\subsection{Dynamics of bulk and two groups of outliers with training and sample size}
Figure \ref{log_G_SGD_ss} plots spectra of $\log(\G)$ across epochs and training sample size. Each panel plots, in orange, the top-$C$ eigenvalues of $\G$, estimated using \textsc{SubspaceIteration} and, in blue, the log spectrum of the rank-$C$ deflated $\G$, estimated using \textsc{LanczosApproxSpec}. For numerical stability, we approximate the spectrum of $\log(\G + 10^{-5} \I)$.

Notice the alignment between the $C$ outliers of $\G$, colored in orange, and the eigenvalues of $\GC$, colored in yellow. Notice also the alignment between the $C^2$ outliers of $\G$ and the green dots corresponding to $\GCP$, and also the main bulk of $\G$ and the red dots corresponding to $\GW$. This correspondence aligns with the attribution of the different spectral features to the different components in the cross-class structure, observed already in the previous subsection.

Fixing sample size and increasing the number of epochs causes spectral bulks of $\GCP$ and $\GW$ to separate. In contrast, fixing the epoch and increasing sample size causes the spectral bulks to merge. Moreover, varying the epoch number or the training sample size does not change significantly the distance between the $\GC$ and $\GW$.

\section{Multilayer perceptron}
In this section we study the relation between the patterns in the features, backpropagated errors, gradients, FIM, and weights in the context of a multilayer perceptron (MLP), i.e., a cascade of fully connected layers.

\subsection{Forward pass}
In the forward pass, for each layer $1 \leq l \leq L$, we multiply the features of the previous layer $\h_{i,c}^{l-1}$ by a weight\footnote{In practice, we use batch normalization layers prior to ReLU. However, for simplicity of exposition we ignore them.} matrix $\W^l$ to produce the pre-activations of the next layer $\z_{i,c}^l$, i.e.,
\begin{equation*}
    \z_{i,c}^l = \W^l \h_{i,c}^{l-1}.
\end{equation*}
Note that for $l=1$, $\h_{i,c}^{l-1} = \h_{i,c}^0$ is equal to $\x_{i,c}$. These are then passed through a non-linearity $\sigma$, which in our case is the rectified linear unit (ReLU), to produce the features of the next layer,
\begin{equation*}
    \h_{i,c}^l = \sigma (\z_{i,c}^l).
\end{equation*}

\subsection{Backward pass}
The backward pass computes the gradient of the loss with respect to the parameters of the model, which in this case are the weight matrices,
\begin{equation*}
    \pdv{\ell( f(\x_{i,c};\thetaa), \y_c )}{\W^l}.
\end{equation*}
We will consider a more general gradient where the label class $c$ and the cross-class $c'$ are allowed to differ, i.e.,
\begin{equation*}
    \pdv{\ell( f(\x_{i,c};\thetaa), \y_{c'} )}{\W^l}.
\end{equation*}
Using the chain rule of calculus, one can show that:
\begin{align} \label{eq:gradient}
    \pdv{\ell( f(\x_{i,c};\thetaa), \y_{c'} )}{\W^l} 
    & = \pdv{\ell( f(\x_{i,c};\thetaa), \y_{c'} )}{\z_{i,c}^l} \pdv{\z_{i,c}^l}{\W^l} \\
    & = \deltaa_{i,c,c'}^l {\h_{i,c}^{l-1}}^\T,
\end{align}
where $\deltaa_{i,c,c'}^l$ denote the \textit{backpropagated errors}, which we define as follows:
\begin{equation*}
    \deltaa_{i,c,c'}^l = \pdv{\ell( f(\x_{i,c};\thetaa), \y_{c'} )}{\z_{i,c}^l}.
\end{equation*}
Let $\vect(\cdot)$ denote the operator forming a vector from a matrix by stacking its columns as subvector blocks within a single vector. Using this operator, the above equation can be vectorized as follows:
\begin{equation} \label{eq:kronecker_error}
    \pdv{\ell( f(\x_{i,c};\thetaa), \y_{c'} )}{\vect(\W^l)}
    = {\h_{i,c}^{l-1}} \otimes \deltaa_{i,c,c'}^l,
\end{equation}
where $\otimes$ denotes the Kronecker product. Recall our definition of $\g_{i,c,c'}$ in Equation \eqref{eq:g_iccp}, and define its analogous layer-specific quantity:
\begin{equation*}
    \g_{i,c,c'}^l = \pdv{\ell( f(\x_{i,c}; \thetaa), \y_{c'} )}{\vect(\W^l)};
\end{equation*}
using Equation \eqref{eq:kronecker_error}, this is equal to
\begin{equation} \label{eq:kronecker_g}
    \g_{i,c,c'}^l = {\h_{i,c}^{l-1}} \otimes \deltaa_{i,c,c'}^l.
\end{equation}
Note that $\g_{i,c,c'}^l$ is the $l$'th subvector of $\g_{i,c,c'}$. Define the \textit{feature} $\h_{i,c}$ to be the concatenation of all the feature subvectors $\{ \h_{i,c}^{l-1} \}_{l=1}^L$ into a single vector so that $\h_{i,c}^{l-1}$ is the $l$'th subvector of $\h_{i,c}$. Similarly, define the \textit{backpropagated error} $\deltaa_{i,c,c'}$ to be the concatenation of all the backpropagated errors $\{ \deltaa_{i,c,c'}^l \}_{l=1}^L$ into a single vector so that $\deltaa_{i,c,c'}^l$ is the $l$'th subvector of $\deltaa_{i,c,c'}$. Using these definitions, we can write
\begin{equation} \label{eq:khatrirao_g}
    \g_{i,c,c'} = {\h_{i,c}} \odot \deltaa_{i,c,c'},
\end{equation}
where $\odot$ denotes the Khatri-Rao\footnote{The Khatri-Rao product is a section of the full Kronecker product.} product, which computes in the $l$'th layer block of $\g_{i,c,c'}$ the Kronecker product of the corresponding $l$'th layer blocks from $\deltaa_{i,c,c'}$ and $\h_{i,c}$.

\subsection{Kronecker structure in \texorpdfstring{$\G$}{G}}
Plugging Equation \eqref{eq:khatrirao_g} into Equation \eqref{eq:G_second_moment}, we obtain that
\begin{align}
    \G
    & = \sum_{i,c,c'} w_{i,c,c'} \g_{i,c,c'} \g_{i,c,c'}^\T \\
    & = \sum_{i,c,c'} w_{i,c,c'} ({\h_{i,c}} \odot \deltaa_{i,c,c'}) ( \h_{i,c} \odot \deltaa_{i,c,c'})^\T \\
    & = \sum_{i,c,c'} w_{i,c,c'} (\h_{i,c} \h_{i,c}^\T) \odot (\deltaa_{i,c,c'} \deltaa_{i,c,c'}^\T).
     \label{eq:G_kronecker}
\end{align}
Similarly, the subset of $\G$ corresponding to the $l$'th layer is given by
\begin{equation*}
    \G^l = \sum_{i,c,c'} w_{i,c,c'} (\h_{i,c}^{l-1} {\h_{i,c}^{l-1}}^\T) \otimes (\deltaa_{i,c,c'}^l {\deltaa_{i,c,c'}^l}^\T).
\end{equation*}
The above equation relates $\G$ to the backpropagated errors $\deltaa_{i,c,c'}$ and the features $\h_{i,c}$. As such, in the next subsection we study the second moment of the features, and in the following subsection the weighted second moment of the backpropagated errors.

\subsection{Class block structure in features} \label{sec:features}
Consider the second moment of the $l$'th layer features,
\begin{equation*}
    \H^l = \Ave_{i,c} \h_{i,c}^l {\h_{i,c}^l}^\T.
\end{equation*}
Define the \textit{feature class means}:
\begin{equation*}
    \h_c^l = \Ave_i \h_{i,c}^l,
\end{equation*}
and the \textit{feature global mean}:
\begin{equation*}
    \h_G^l = \Ave_c \h_c^l.
\end{equation*}
Define further the \textit{between-class feature second moment}:
\begin{equation*}
    \HC^l = \Ave_c \h_c^l {\h_c^l}^\T,
\end{equation*}
and the \textit{within-class feature covariance},
\begin{equation*}
    \HW^l = \Ave_{i,c} (\h_{i,c}^l - \h_c^l) (\h_{i,c}^l - \h_c^l)^\T.
\end{equation*}
Using a mean-variance decomposition, one can show that
\begin{equation*}
    \H^l = \HC^l + \HW^l.
\end{equation*}
Define also a class-specific second moment matrix,
\begin{equation*}
    \H_c = \Ave_i \h_{i,c}^l {\h_{i,c}^l}^\T,
\end{equation*}
which is related to the global one through the following equation,
\begin{equation*}
    \H^l = \Ave_c \H_c^l.
\end{equation*}

\begin{figure}[t]
    \centering
    \begin{overpic}[width=0.975\textwidth,right]{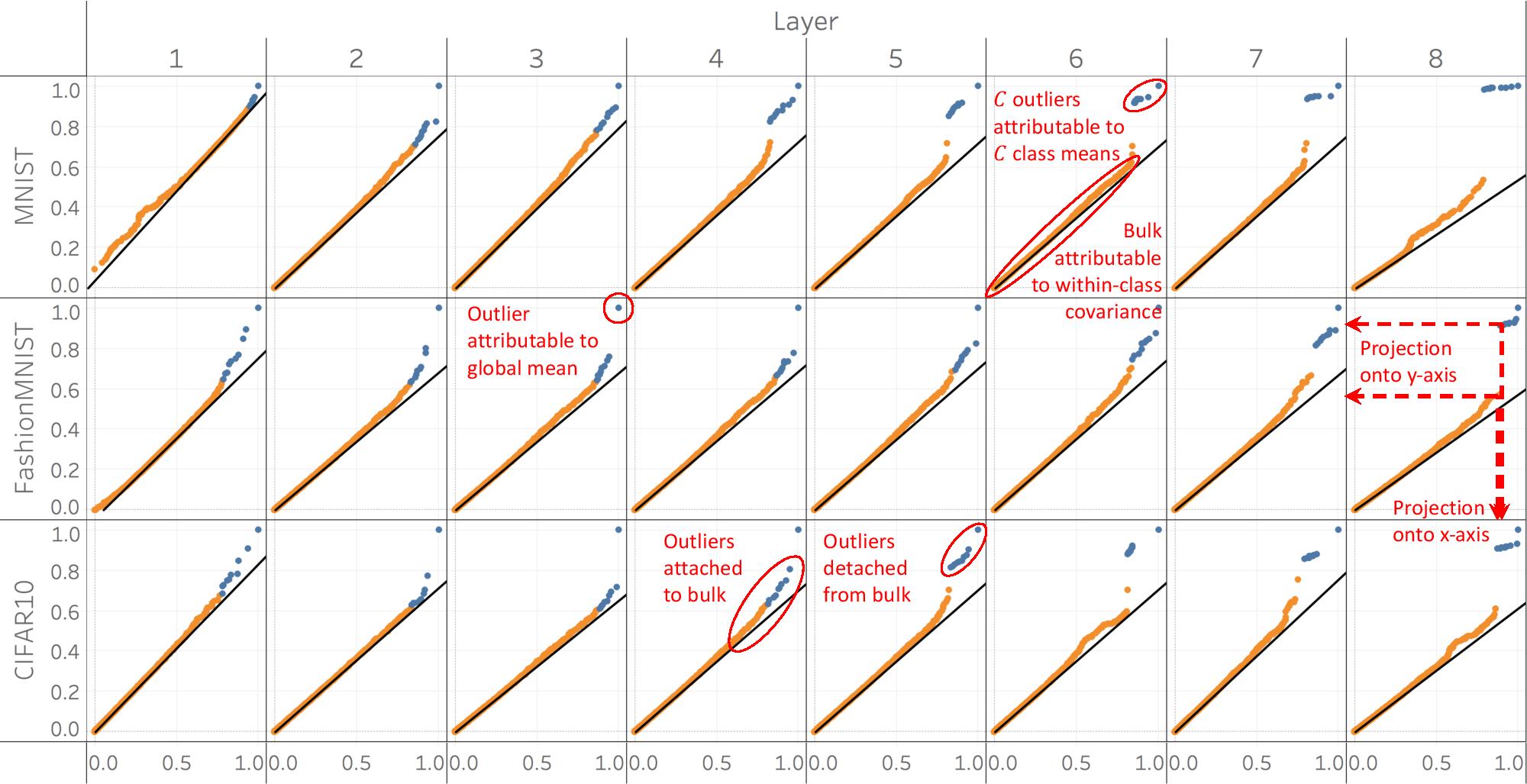}
        \scriptsize{
        \put(-2,95){\begin{sideways}$\log \lambda(\H^l)$\end{sideways}}
        \put(218,-10){$\log \lambda(\H^l \nparallel \HC^l)$}
        }
    \end{overpic}
    \smallskip
    \caption{\textbf{Attribution via knockouts of spectral features in the spectrum of deepnet features.} Each row of panels corresponds to a different dataset and each column to a different layer. Each panel shows a scatter plot of the top $750$ log-eigenvalues of $\H^l$ versus those of $\H^l \nparallel \HC^l$. The black curve within each panel is the identity line. The top-$C$ outliers are marked in blue and the rest of the eigenvalues are marked in orange. The sixth column of the first row shows the $C$ outliers, attributable to the between-class covariance, $\HC^l$, and the bulk, attributable to the within-class covariance, $\HW^l$. The third column of the second row shows the biggest outlier, which is attributable to the global mean, $\h_G$. The fourth and fifth columns of the third row show the layer in which the $C$ outliers separate from the bulk. The network is an eight-layer MLP with 2048 neurons in each hidden layer. For visibility purposes, the values on the x-axis and y-axis are normalized to the range $[0,1]$. See Section \ref{sec:knockout_features} for further details.}
    \label{fig:features_knockouts}
\end{figure}

\subsection{\texorpdfstring{$C$}{C} outliers attributable to \texorpdfstring{$\HC$}{HC} and bulk attributable to \texorpdfstring{$\HW$}{HW}} \label{sec:knockout_features}
In Figure \ref{fig:features_knockouts} we attribute via knockouts the top $C$ outliers in the spectrum of $\H^l$ to the spectrum of $\HC^l$. One can similarly attribute via knockouts the largest of these $C$ outliers to the second moment of the global mean. Specifically, Figure \ref{fig:features_knockouts} shows scatter plots of $\lambda_i(\H^l)$ versus $\lambda_i(\H^l \nparallel \HC^l)$. The last column of the second row illustrates the projection of a blue and an orange point on the x- and y-axes. Notice how their projections on the y-axis are dramatically farther apart than their projections on the x-axis. The projections on the y-axis are far apart because the spectrum of $\H^l$ has outliers, corresponding to the blue points, which are separated from a bulk, corresponding to the orange points. The same two points are quite close when projected on the x-axis. This is because the outliers are close to the bulk after the knockout procedure. Since the knockout eliminated the outliers, these would-be outliers are attributed to the matrix we knocked out, i.e., $\HC^l$. Roughly speaking, the orange points are situated along the identity line, which means the bulk of the spectrum is unaffected by knockouts. The blue points, on the other had, corresponding to the top-$C$ eigenvalues, deviate substantially from the identity line. In fact, in the last few columns they deviate so strongly that they separate markedly from the orange points; see the fourth and fifth panels of the third row. In short, there is a bulk-and-outliers structure in the spectrum of $\H^l$ and the outliers emerge from the bulk with increasing depth.

\subsection{Gradual separation and whitening of feature class-means} \label{sec:features_grad_sep}
Figure \ref{features_FCNet_bn_i} plots the same spectra of $\H^l$, together with the $C$ eigenvalues of $\HC^l$. Compare and contrast the first and last columns of the first row. In the first column of the first row, the top-$C$ eigenvalues associated with the eigenvalues of $\HC^l$ are smaller than the top-$C$ eigenvalues of $\H^l$. Those eigenvalues of $\HC^l$ are too small to induce outliers in the spectrum of $\HC^l$. In the last column of the first row, the top-$C$ eigenvalues of $\H^l$ and the top-$C$ eigenvalues of $\HC^l$ are close to each other--which is already expected; recall that we attributed these outliers to $\HC^l$ in the previous subsection. These eigenvalues are dramatically larger than later eigenvalues. Again, there is a bulk and $C$ eigenvalues sticking out of the bulk.

Observe the increasing separation, with increasing depth, of the $C$ eigenvalues of $\HC^l$ from the bulk. Underlying this, the class means have more ``energy'' as measured by $\tr{\HC^l}$. The class means grow energetic with depth and this causes separation. Statistically, this means the features at later layers are more discriminative.

Observe the eigenvalues of $\HC^l$ in different columns of the second row of panels, together with the whiskers highlighting the ratio between the second-largest and smallest eigenvalue. Notice how the ratio decreases as function of depth, i.e., the eigenvalues of $\HC^l$ become increasingly closer to each other. This implies the deepnet finds features whose class means are increasingly closer to orthogonal (once these class means are centered by a global mean, which eliminates the first outlier).

\begin{figure}[h!]
        \centering
        \includegraphics[width=1\textwidth]{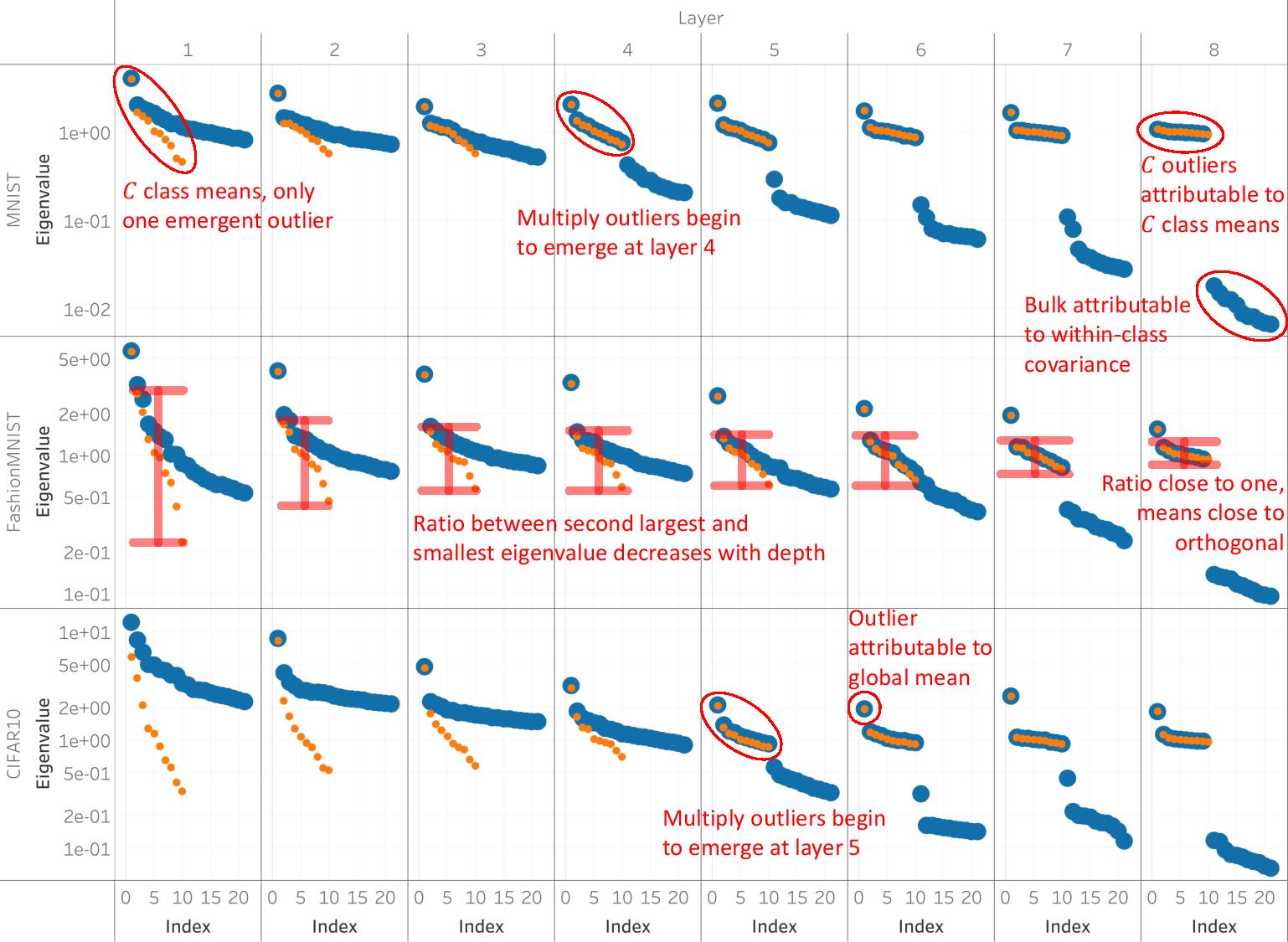}
    \caption{\textbf{Scree plots of top eigenvalues in the spectrum of deepnet features.} An alternative presentation of the data in Figure \ref{fig:features_knockouts}. Each row of panels corresponds to a different dataset and each column to a different layer. Each panel depicts spectra of $\H^l$ in blue and $\HC^l$ in orange. Eigenvalues in each panel are normalized by the median of $\left\{ \log(\lambda_i(\HC^l)) \right\}_{i=1}^C$. The panel in the last column of the first row highlights the ``bulk and $C$ outliers'' pattern. The panels in the second row show whiskers whose edges are located at the second largest and at the smallest eigenvalue of $\HC^l$. Notice how the ratio between the whisker edges decreases as function of depth, until a point where all eigenvalues are of similar magnitude. The third row highlights the top outlier, which is attributable to the global mean, $\h_G$. The network is an eight-layer MLP with 2048 neurons in each hidden layer. For more details, see Section \ref{sec:features_grad_sep}.}
    \label{features_FCNet_bn_i}
\end{figure}

\subsection{Cross-class block structure in backpropagated errors} \label{sec:errors}
Consider the weighted second moment of backpropagated errors:
\begin{equation*}
    \Deltaa^l = \frac{1}{NC} \sum_{i,c,c'} w_{i,c,c'} \deltaa_{i,c,c'}^l {\deltaa_{i,c,c'}^l}^\T.
\end{equation*}
Define the \textit{backpropagated errors cross-class means}:
\begin{equation*}
    \deltaa_{c,c'}^l = \sum_i \pi_{i,c,c'} \deltaa_{i,c,c'}^l,
\end{equation*}
and their corresponding \textit{within-cross-class covariance}:
\begin{equation*}
    \DeltaW^l = \sum_{i,c,c'} w_{i,c,c'} (\deltaa_{i,c,c'}^l - \deltaa_{c,c'}^l) (\deltaa_{i,c,c'}^l - \deltaa_{c,c'}^l)^\T.
\end{equation*}
These equations group together backpropagated errors for a fixed pair of $c,c'$. Define also the \textit{backpropagated errors class means}:
\begin{equation*}
    \deltaa_c^l = \sum_{c' \neq c} \pi_{c,c'} \deltaa_{c,c'}^l,
\end{equation*}
the \textit{between-class backpropagated error second moment}:
\begin{equation*}
    \DeltaC^l = \sum_c w_c \deltaa_c^l {\deltaa_c^l}^\T,
\end{equation*}
and the \textit{between-cross-class backpropagated error covariance}:
\begin{equation*}
    \DeltaCP^l = \sum_{\substack{c' \neq c \\ c}} w_{c,c'} (\deltaa_{c,c'}^l - \deltaa_{c'}^l) (\deltaa_{c,c'}^l - \deltaa_{c'}^l)^\T.
\end{equation*}
In these equations, the backpropagated errors with a fixed $c$ (and possibly varying $c'$) are clustered together. Leveraging these definitions, we obtain the following decomposition of $\Deltaa$:
\begin{align*}
    \Deltaa^l & = \DeltaC^l + \DeltaCP^l + \DeltaW^l + \underbrace{\sum_c w_{c,c} \deltaa_{c,c}^l {\deltaa_{c,c}^l}^\T}_{\Deltaa_{c = c'}}.
\end{align*}
The above decomposition is similar to the decomposition of the gradients in Section \ref{decomp_G} and its proof is therefore omitted. We can also define a class-specific weighted second moment given by,
\begin{equation*}
    \Deltaa_c^l = \frac{1}{N} \sum_{i,c'} w_{i,c,c'} \deltaa_{i,c,c'}^l {\deltaa_{i,c,c'}^l}^\T.
\end{equation*}

\begin{figure}
    \centering
    \begin{subfigure}[t]{1\textwidth}
        \centering
        \begin{minipage}{0.9\textwidth}
        \begin{overpic}[width=0.975\textwidth,right]{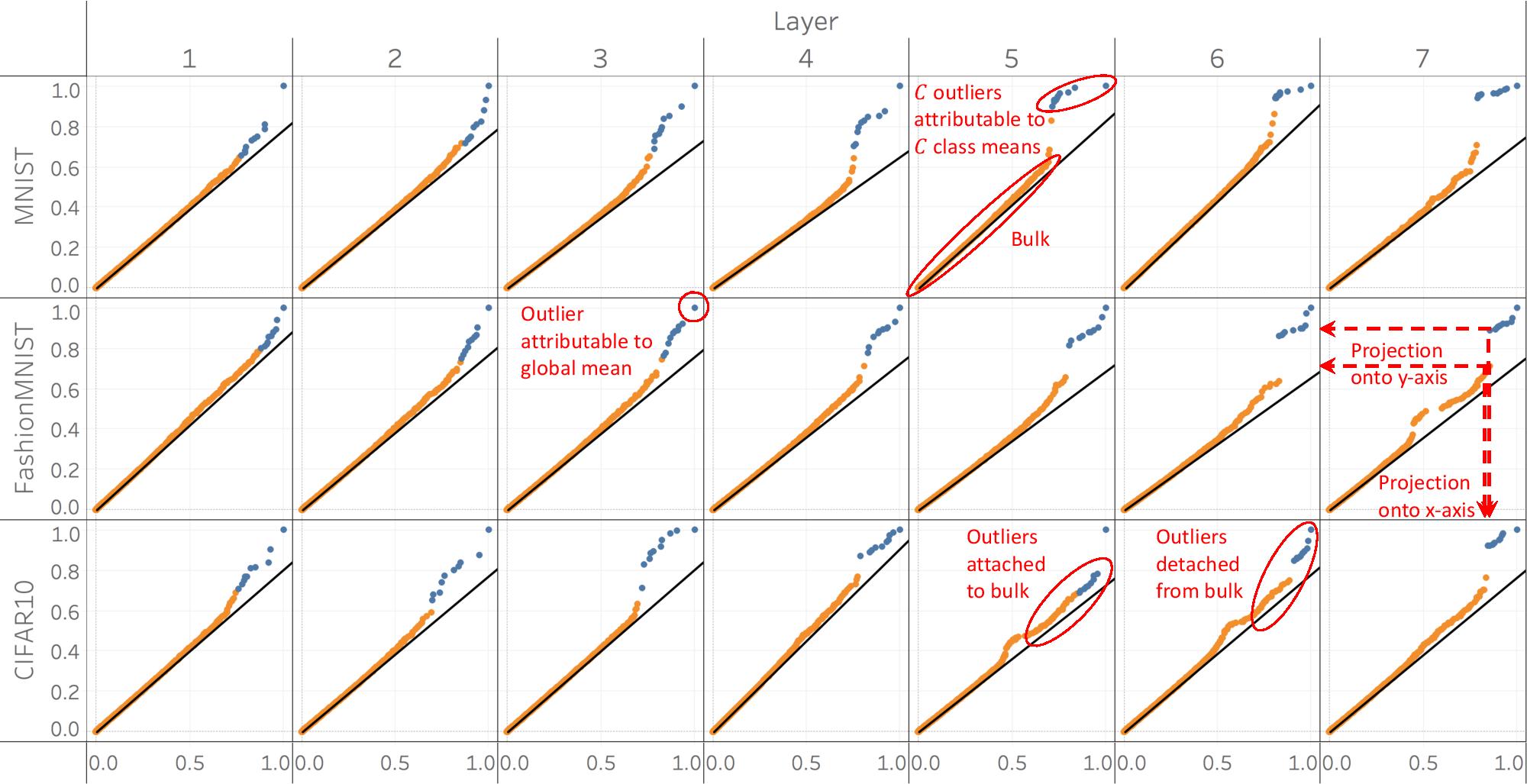}
        \scriptsize{
        \put(-2,95){\begin{sideways}$\log \lambda(\Deltaa^l)$\end{sideways}}
        \put(195,-10){$\log \lambda(\Deltaa^l \nparallel \DeltaC^l)$}
        }
        \end{overpic}
        \end{minipage}
        \bigskip
        \caption{\textbf{Top-$C$ outliers attributed to $C$ class means.} Each panel shows a scatter plot of top the $750$ log-eigenvalues of $\Deltaa^l$ versus those of $\Deltaa^l \nparallel \Deltaa_C^l$. The top-$C$ outliers are marked in blue and the rest of the eigenvalues are marked in orange. The fifth column of the first row shows the $C$ outliers, attributable to the between-class covariance, $\DeltaC^l$, and a bulk. The third column of the second row shows the biggest outlier, which is attributable to the global mean, $\deltaa_G$. The fifth and sixth columns of the third row show the layer in which the $C$ outliers separate from the bulk.}
        \label{backprop_errors_knockouts_C}
    \end{subfigure}
    \begin{subfigure}[t]{1\textwidth}
        \centering
        \begin{minipage}{0.9\textwidth}\begin{overpic}[width=0.975\textwidth,right]{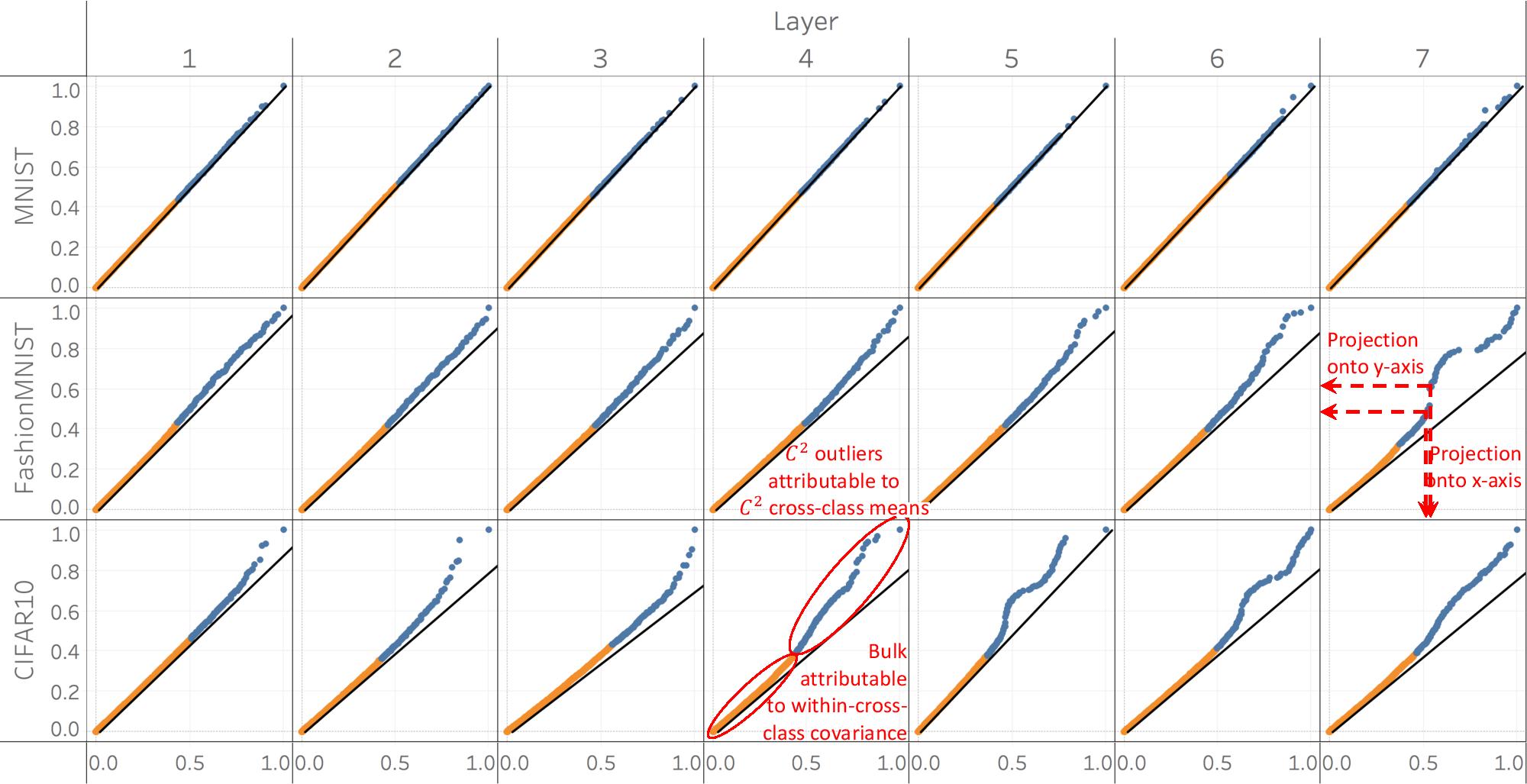}
        \scriptsize{
        \put(-5,80){\begin{sideways}$\log \lambda(\Deltaa^l \nparallel \DeltaC^l)$\end{sideways}}
        \put(170,-10){$\log \lambda \left( \Deltaa^l \nparallel \left( \DeltaC^l + \DeltaCP^l \right) \right)$}
        }
        \end{overpic}
        \end{minipage}
        \bigskip
        \caption{\textbf{Top-$C^2$ outliers attributed to $C^2$ cross-class means.} Each panel shows a scatter plot of the top $750$ log-eigenvalues of $\Deltaa^l \nparallel \DeltaC^l$ versus those of $\Deltaa^l \nparallel (\DeltaC^l+\DeltaCP^l)$. The top-$C^2$ outliers are marked in blue and the rest are marked in orange. The fourth column of the third row shows the $C^2$ outliers, attributable to the between-cross-class covariance, $\DeltaCP^l$, and a bulk, attributable to the within-cross-class covariance $\DeltaW^l$.}
        \label{fig:backprop_errors_knockouts_C^2}
    \end{subfigure}
    \caption{\textbf{Attribution via knockouts of spectral features in the spectrum of backpropagated errors.} Eight-layer MLP with 2048 neurons in each hidden layer. Within each subfigure, each row of panels corresponds to a different dataset and each column to a different layer. The black curve within each panel is the identity line. For visibility purposes, the values on the x-axis and y-axis are normalized to the range $[0,1]$. See Section \ref{sec:backprop_errors_knockouts} for further details.}
    \label{fig:backprop_errors_knockouts}
\end{figure}

\subsection{\texorpdfstring{$C$}{C} outliers attributable to \texorpdfstring{$\DeltaC$}{DeltaC}, \texorpdfstring{$C^2$}{C2} outliers attributable to \texorpdfstring{$\DeltaCP$}{DeltaCP} and bulk attributable to \texorpdfstring{$\DeltaW$}{DeltaW}} \label{sec:backprop_errors_knockouts}
In Figure \ref{backprop_errors_knockouts_C} we attribute via knockouts the top $C$ outliers in $\Deltaa^l$ to the spectrum of $\DeltaC^l$. Specifically, Figure depicts scatter plots of $\lambda_i(\Deltaa^l)$ versus $\lambda_i(\Deltaa^l \nparallel \DeltaC^l)$. The last column of the second row illustrates the projection of a blue and an orange point on the principal axes. Their projections on the y-axis are dramatically farther apart than their projections on the x-axis. Their projections on the y-axis are far apart because the spectrum of $\Deltaa^l$ has outliers, corresponding to the blue points, which are separated from a bulk, corresponding to the orange points. The same two points are quite close when projected on the x-axis. This is because the outliers are close to the bulk after the knockout procedure. Since the knockout eliminated the outliers, these would-be outliers are attributed to the matrix we knocked out, i.e., $\DeltaC^l$. The orange points are situated along the identity line, which means the bulk of the spectrum is unaffected by knockouts. The blue points, corresponding to the top-$C$ eigenvalues, deviate substantially from the identity line. In some panels they deviate so strongly that they separate markedly from the orange points; see fifth and sixth panels of the third row. In short, there is a bulk-and-outliers structure in the spectrum of $\Deltaa^l$ and the outliers emerge from the bulk with increasing depth.

In Figure \ref{fig:backprop_errors_knockouts_C^2} we attribute via knockouts the top $C^2$ outliers in the spectrum of $\Deltaa^l$ to the spectrum of $\DeltaC+\DeltaCP$. Specifically, Figure \ref{fig:backprop_errors_knockouts_C^2} depicts scatter plots of $\lambda_i(\Deltaa^l \nparallel \DeltaC^l)$ versus $\lambda_i(\Deltaa^l \nparallel (\DeltaC^l+\DeltaCP))$. The seventh column of the second row illustrates the projection of two blue points--not a blue and an orange point, as was done previously--onto the principal axes. Notice the gap dividing the set of all blue points into two groups: one that is close to the orange points, and another which is well-separated from the orange points. Notice how the projections of these two blue points onto the y-axis are dramatically father apart than their projections on the x-axis. Their projection onto the y-axis are far apart because the spectrum of $\Deltaa^l \nparallel \DeltaC^l$ has outliers, corresponding to one of the groups of blue points. These are separated from a bulk, comprised of the other group of blue points and also the orange points. The same two points are quite close when projected on the x-axis. This is because the outliers are close to the bulk after the knockout procedure. Since knockouting out $\DeltaCP$ eliminated the outliers in $\Deltaa^l \nparallel (\DeltaC^l+\DeltaCP)$, these would-be outliers are attributable to the matrix we knocked out, i.e., $\DeltaCP^l$. Some of the blue points and all of the orange points are situated along the identity line, which means the bulk of the spectrum is unaffected by the knockout procedure.

\begin{figure}[h!]
        \centering
        \includegraphics[width=1\textwidth]{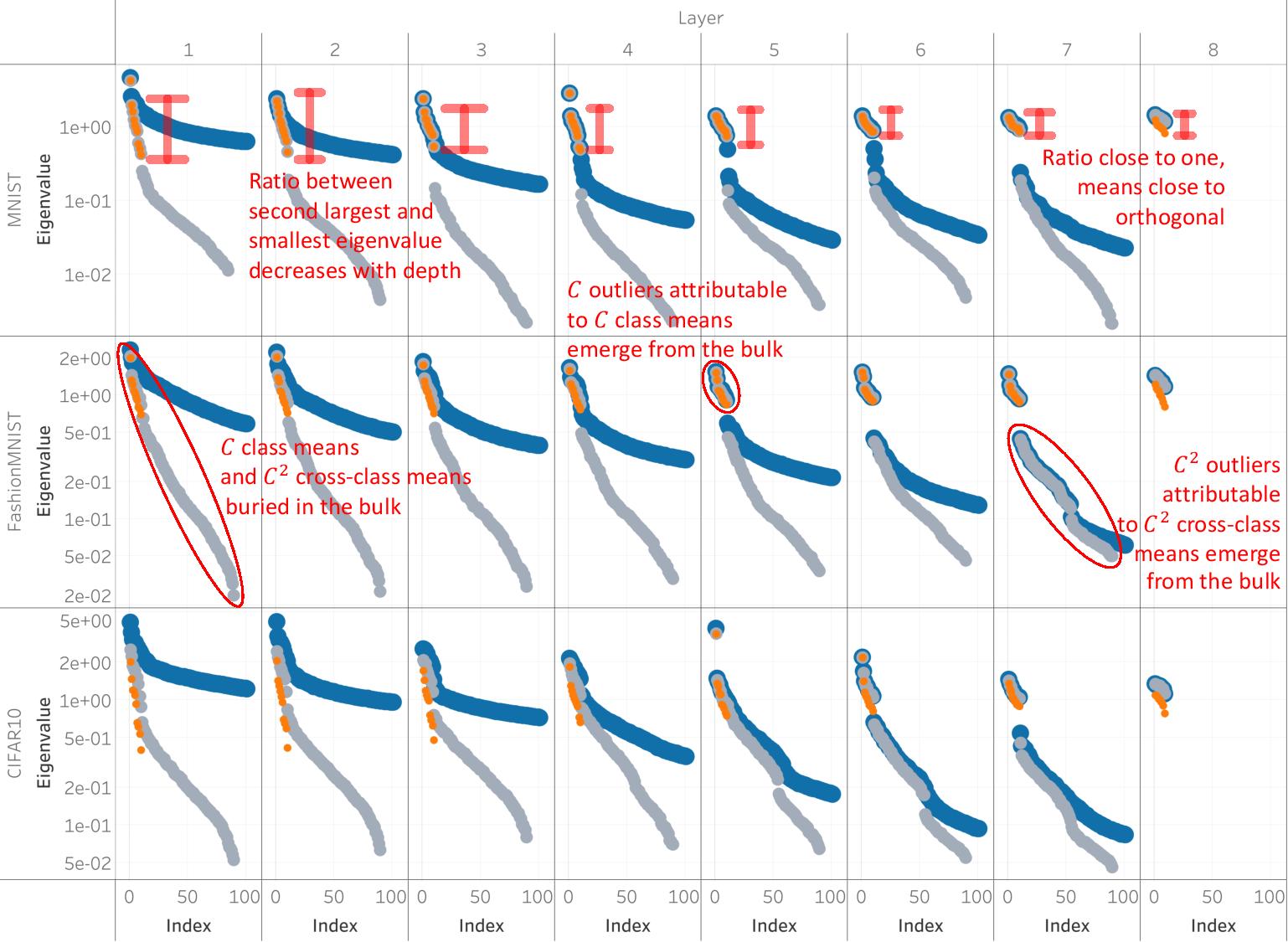}
    \caption{\textbf{Scree plot of top eigenvalues in spectra of deepnet backpropagated errors.} An alternative presentation of the data in Figure \ref{fig:backprop_errors_knockouts}. Each row of panels corresponds to a different dataset and each column to a different layer. Each panel depicts spectra of $\Deltaa^l$ in blue, $\DeltaC^l$ in orange and $\DeltaC^l+\DeltaCP^l$ in gray. Eigenvalues in each panel are normalized by the median of $\{ \log(\lambda_i( \DeltaC^l)) \}_{i=1}^C$. The first, fifth  and eighth columns of the second row track: (i) the emergence of the $C$ outliers from the bulk, (ii) the emergence of the $C^2$ outliers from the bulk. The panels in the first row show whiskers whose edges are located at the second largest and at the smallest eigenvalue of $\DeltaC^l$. Notice how the ratio between the whisker edges decreases as function of depth, until a point where all eigenvalues are of similar magnitude. The network is an eight-layer MLP with 2048 neurons in each hidden layer. For more details, see Section \ref{sec:jacobians_grad_sep}.}
    \label{jacobians_FCNet_bn_i}
\end{figure}

\subsection{Gradual separation and whitening of backpropagated error class-means} \label{sec:jacobians_grad_sep}
Figure \ref{jacobians_FCNet_bn_i} plots the same spectra of $\Deltaa^l$, together with the eigenvalues of $\DeltaC^l$ and $\DeltaC^l+\DeltaCP^l$. Compare and contrast the first, fifth and seventh column of the second row. In the first column of the second row, the top-$C$ eigenvalues associated with the eigenvalues of $\DeltaC^l$, and the next $C^2$ outliers associated with the eigenvalues of $\DeltaCP^l$, are smaller than the top eigenvalues of $\Deltaa^l$. Those eigenvalues of $\DeltaC^l$ and $\DeltaCP^l$ are too small to induce outliers in the spectrum of $\Deltaa^l$. In the fifth column of the second row, the top-$C$ eigenvalues of $\Deltaa^l$ and the top-$C$ eigenvalues of $\DeltaC^l$ are close to each other, which is already expected; recall that we attributed the top-$C$ outliers in the spectrum of $\Deltaa^l$ to $\DeltaC^l$ in the previous subsection. Moreover, the $C^2$ eigenvalues associated with $\DeltaCP^l$ are smaller than the top $C^2$ eigenvalues of $\Deltaa^l$. Those eigenvalues are too small to induce outliers in the spectrum of $\Deltaa^l$. In the seventh column of the second row, some of the top-$C^2$ eigenvalues of $\Deltaa^l$ and the $C^2$ eigenvalues of $\DeltaCP^l$ are close to each other. This is again already expected; recall that we attributed the top-$C$ outliers in $\Deltaa^l$ to $\DeltaC^l$ and the top-$C^2$ outliers in $\Deltaa^l$ to $\DeltaCP^l$.

Observe the increasing separation, with increasing depth, of the $C$ eigenvalues of $\DeltaC^l$ from the bulk. Underlying this, the class means have more ``energy'' as measured by $\tr{\DeltaC^l}$. The class means grow energetic with depth and this causes separation. Heuristically, this figure implies that the backpropagated errors contain strong class information in deeper layers, but this information is gradually lost during backpropagation to successively shallower layers.

Observe in the panels of the first row the eigenvalues of $\DeltaC^l$ throughout the layers. The whiskers highlight the ratio between the second-largest and the smallest eigenvalue. Note how this ratio decreases with depth; by the last layer, all eigenvalues are of fairly similar magnitude. This implies the error class means are ultimately very close to being orthogonal, but their orthogonality is gradually deteriorating as they are backpropagated throughout the layers.

\begin{figure}
    \centering
    \includegraphics[width=1\textwidth]{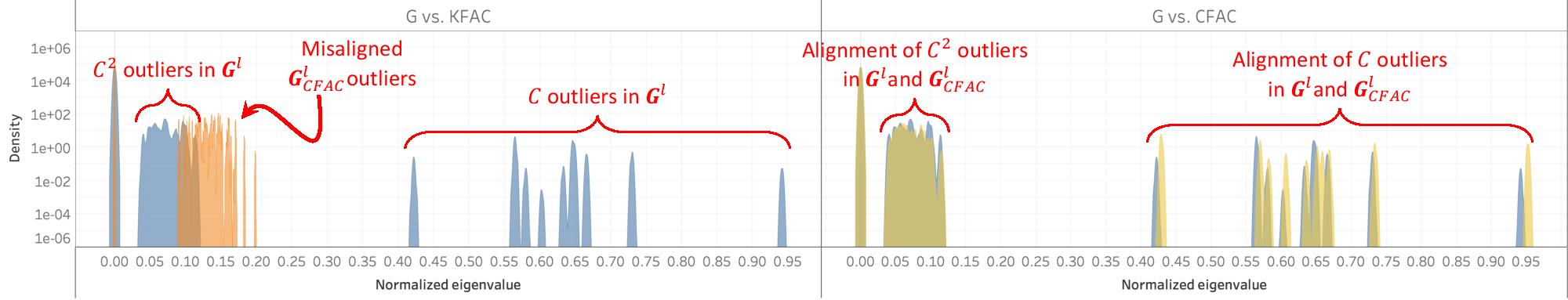}
    \caption{\textbf{Spectra of $\G^l$, $\KFAC^l$ and $\CFAC^l$ for layer $l=8$.} Each panel plots the spectrum of $\G^l$ in blue. The left panel highlights features in the spectrum of $\G$, which include $C$ outliers, as well as a mini-bulk composed of $C^2$ outliers. The main bulk in the spectrum of the last layer features is concentrated and appears as a narrow bump around the origin. In addition, the left panel plots the spectrum of $\KFAC^l$ in orange, while the right one plots the spectrum of $\CFAC^l$ in yellow. Notice that outliers in the spectrum of $\KFAC^l$ \textbf{do not} align well with outliers in the spectrum of $\G^l$. On the other hand, notice that the $C$ and $C^2$ outliers in the spectrum of $\CFAC^l$ \textbf{do align} well with corresponding outliers in the spectrum of $\G^l$. The network is an eight-layer MLP with 2048 neurons in each hidden layer trained on MNIST. The y-axis is on a logarithmic scale.}
    \label{KFAC_vs_CFAC_single_layer}
\end{figure}

\begin{figure}
    \centering
    \begin{subfigure}[t]{0.8\textwidth}
        \centering
        \includegraphics[width=1\textwidth]{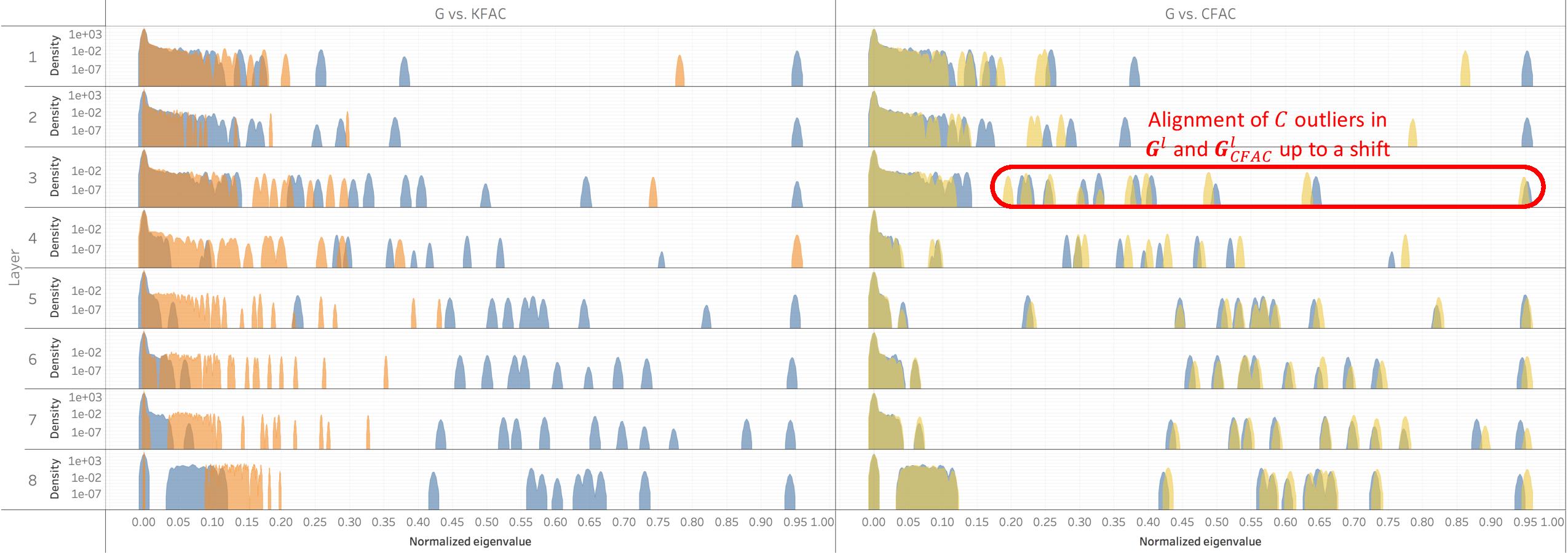}
        \caption{MNIST}
        \smallskip
    \end{subfigure}
    \begin{subfigure}[t]{0.8\textwidth}
        \centering
        \includegraphics[width=1\textwidth]{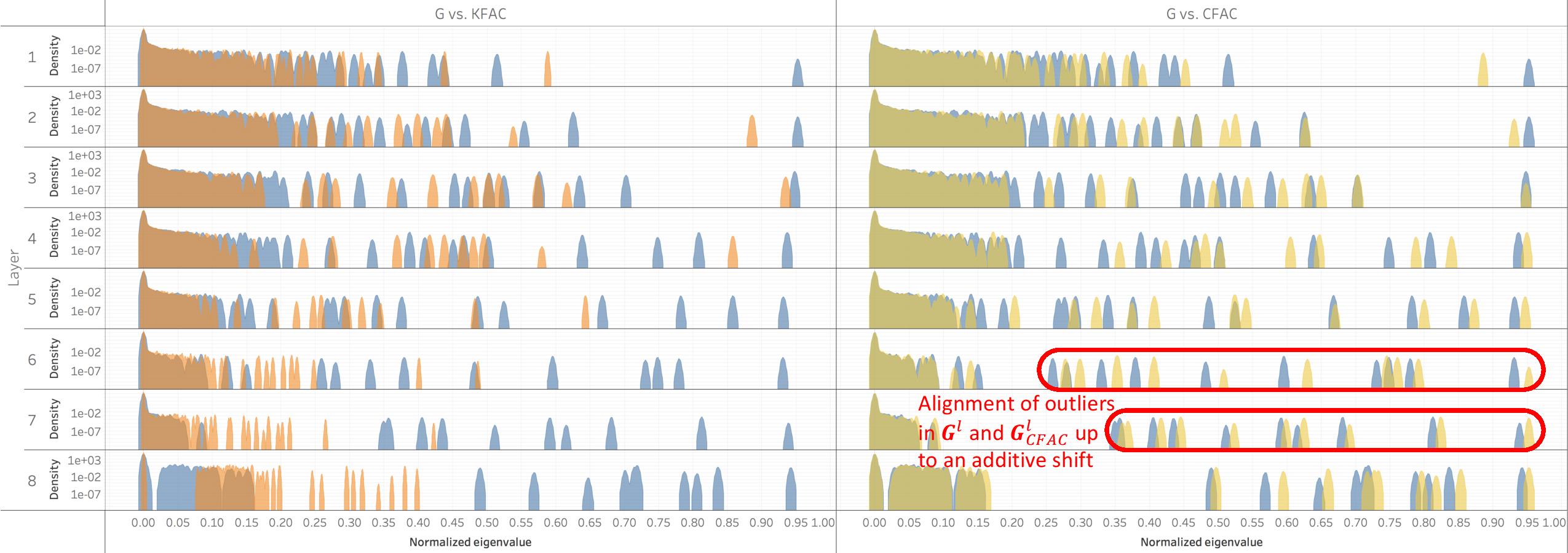}
        \caption{FashionMNIST}
        \smallskip
    \end{subfigure}
    \begin{subfigure}[t]{0.8\textwidth}
        \centering
        \includegraphics[width=1\textwidth]{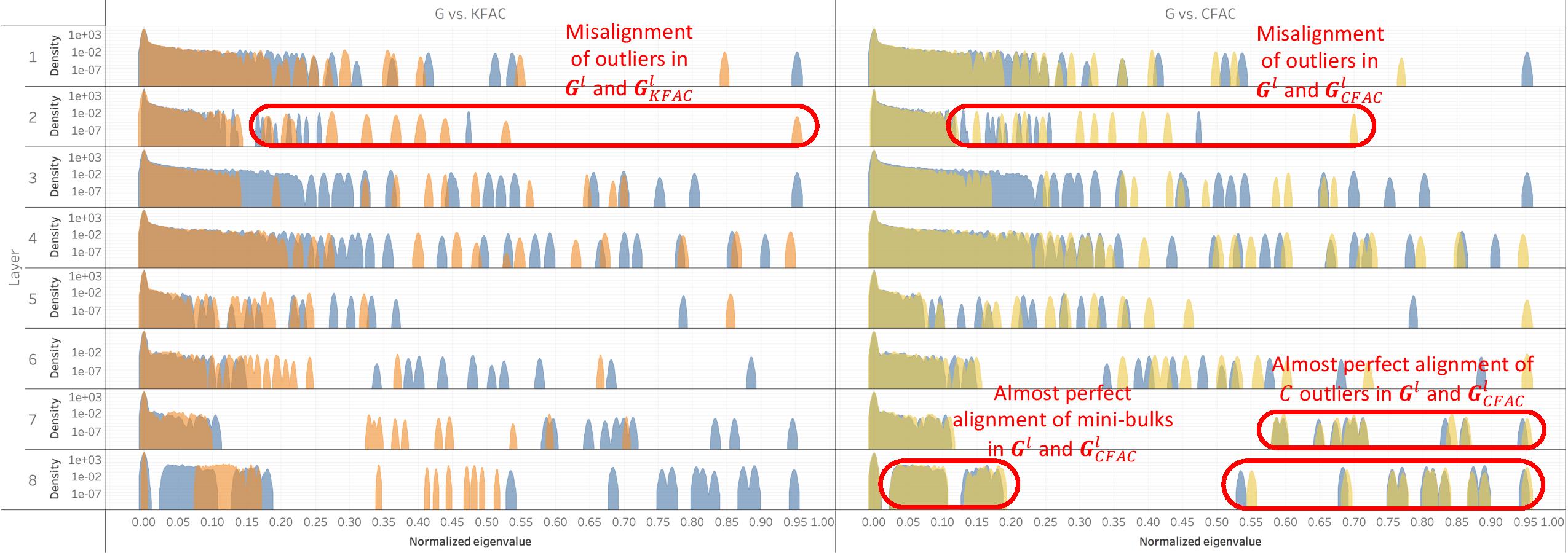}
        \caption{CIFAR10}
    \end{subfigure}
    \caption{\textbf{Spectra of $\G^l$, $\KFAC^l$ and $\CFAC^l$.} Each subfigure corresponds to one specific dataset. Each row of panels corresponds to one specific layer. In each panel, the spectrum of $\G^l$ is plotted in blue. The first column of panels plots the spectrum of $\KFAC^l$ in orange, while the second column of panels plots the spectrum of $\CFAC$ in yellow. Notice how the spectra of $\G^l$ and $\KFAC^l$ do not align well, as evident by the lack of agreement of blue and orange outliers, mini-bulks and main bulks. Conversely, notice how the spectra of $\G^l$ and $\CFAC^l$ align much better, especially at deeper layers. At the seventh and eighth layers of CIFAR10, the outliers and mini-bulks in the spectra of $\G^l$ and $\CFAC^l$ align almost perfectly. In other cases, such as the sixth and seventh layers of FashionMNIST, the outliers in the spectra of $\G^l$ and $\CFAC^l$ align up to a small shift. In some shallower layers, such as the third layer of MNIST, the outliers in the spectrum of $\G^l$ align with those in the spectrum of $\CFAC^l$. In other cases, such as the second layer of CIFAR10, the misalignment of the outliers in the spectra of $\G^l$ and $\CFAC^l$ is as bad as the misalignment between the outliers of $\G^l$ and $\KFAC^l$. The network is an eight-layer MLP with 2048 neurons in each hidden layer. The y-axis is on a logarithmic scale.}
    \label{KFAC_vs_CFAC}
\end{figure}

\subsection{Class-distinct factorized approximate curvature} \label{sec:CFAC}
Traditional quadratic optimization can be readily solved using a single Newton's method step, by moving in the direction of the gradient preconditioned by $\Hess^{-1}$. For nonlinear least squares problems, one can utilize the Gauss–Newton algorithm, which replaces $\Hess^{-1}$ by $\G^{-1}$ and performs several steps, instead of just one. In the context of deepnets, Gauss-Newton is problematic, since the matrix of $\G$ is too large for it to be stored in memory. Researchers would like to find other matrices that might precondition the gradient \citep{grosse2015scaling,ye2018hessian,wang2019utilizing,xu2019newton,kylasa2019gpu,xu2020second}. For example, \citet{martens2015optimizing} proposed the KFAC matrix as a proxy for $\G$.
\begin{defn}\textup{{\textbf{(KFAC; \citet{martens2015optimizing}).}}}
The KFAC matrix is defined as follows:
\begin{align*}
    \KFAC
    & = \Ave_{i,c} \left\{ \h_{i,c} \h_{i,c}^\T  \right\} \odot \left( \sum_{i,c,c'} w_{i,c,c'} \deltaa_{i,c,c'} \deltaa_{i,c,c'}^\T  \right) \\
    & = \H \odot \Deltaa,
\end{align*}
where $\odot$ denotes Khatri-Rao product. The block diagonal KFAC matrix is given by:
\begin{align*}
    \KFAC^l
    & = \Ave_{i,c} \left\{ \h_{i,c}^l {\h_{i,c}^l}^\T \right\} \otimes \left( \sum_{i,c,c'} w_{i,c,c'} \deltaa_{i,c,c'}^l {\deltaa_{i,c,c'}^l}^\T  \right) \\
    & = \H^l \otimes \Deltaa^l,
\end{align*}
where $\otimes$ is the Kronecker product\footnote{Hence the name of their method--Kronecker-Factored Approximation of Curvature (KFAC).}.
\end{defn}
\noindent The idea of using Kronecker-based preconditioners can be traced to earlier works by \citet{van1993approximation}, and the use of a block-diagonal approximation can be traced back to \citet{collobert2004gentle}. KFAC has proven to be very useful in practice, and since its conception several generalizations and applications were proposed for it \citep{grosse2016kronecker,ba2016distributed,wu2017second,wu2017scalable,wang2019eigendamage}.

Tools developed in this present work can be used to study the spectra of these matrices, and thereby to also understand the extent (or not) of agreement between the spectra of $\G$ and $\KFAC$. When we apply our attribution tools (see left panels of Figures \ref{KFAC_vs_CFAC_single_layer} and \ref{KFAC_vs_CFAC}), we notice very distinct failures of approximation. $\KFAC$ has its mini-bulks and outliers drastically misaligned from those of $\G$. We propose an alternative approximation for $\G$.
\begin{defn}\textup{{(\textbf{Class-distinct Factorized Approximate Curvature)T.}}}
    The CFAC matrix is given by:
    \begin{equation*}
        \CFAC = \Ave_c \H_c \odot \Deltaa_c,
    \end{equation*}
    and the block diagonal CFAC matrix is given by:
    \begin{equation*}
        \CFAC^l = \Ave_c \H_c^l \otimes \Deltaa_c^l.
    \end{equation*}
\end{defn}

\subsection{CFAC is a better proxy for \texorpdfstring{$\G$}{G} than KFAC}
In Figure \ref{KFAC_vs_CFAC_single_layer} we compare the spectra of $\G^l$, $\KFAC^l$ and $\CFAC^l$ for the last layer of an MLP trained on MNIST. Inspection reveals that the spectra of $\CFAC^l$ and $\G^l$ align much better than do those of $\KFAC^l$ and $\G^l$. In Figure \ref{KFAC_vs_CFAC}, we perform a more comprehensive comparison, which considers all the layers of the MLP across three canonical datasets. At deeper layers, the spectra of $\CFAC^l$ and $\G^l$ again align much better than $\KFAC$ and $\G^l$. At shallower layers, improvement in alignment depends on the dataset.

\subsection{Relating the class/cross-class structure in the features, backpropagated errors and \texorpdfstring{$\G$}{G}}
Recall that $\G$, $\H$ and $\Deltaa$ all have $C$-dimensional eigenspaces attributable to the class-specific mean structure. In KFAC, each of the $C$ features class means would be Khatri-Rao-multiplied by each of the $C$ error class means. In other words, the feature mean of one class would be multiplied by the error mean of another class. In CFAC, on the other hand, the feature mean of a class would \textbf{only} be multiplied by the error mean of that class.

The previous subsection suggests that the class means of gradients, which cause the outliers in $\G$, can be approximated as the Khatri-Rao product of the $C$ feature class means and the $C$ error class means, i.e.,
\begin{equation*}
    \g_c \approx \deltaa_c \odot \h_c.
\end{equation*}
Moreover, the $C^2$ cross-class means causing the other visible outliers can be approximated as the Khatri-Rao product of the $C$ feature class means and the $C^2$ error cross-class means, i.e.,
\begin{equation*}
    \g_{c,c'} \approx \deltaa_{c,c'} \odot \h_c.
\end{equation*}

\begin{figure}[t]
    \centering
    \begin{overpic}[width=0.975\textwidth,right]{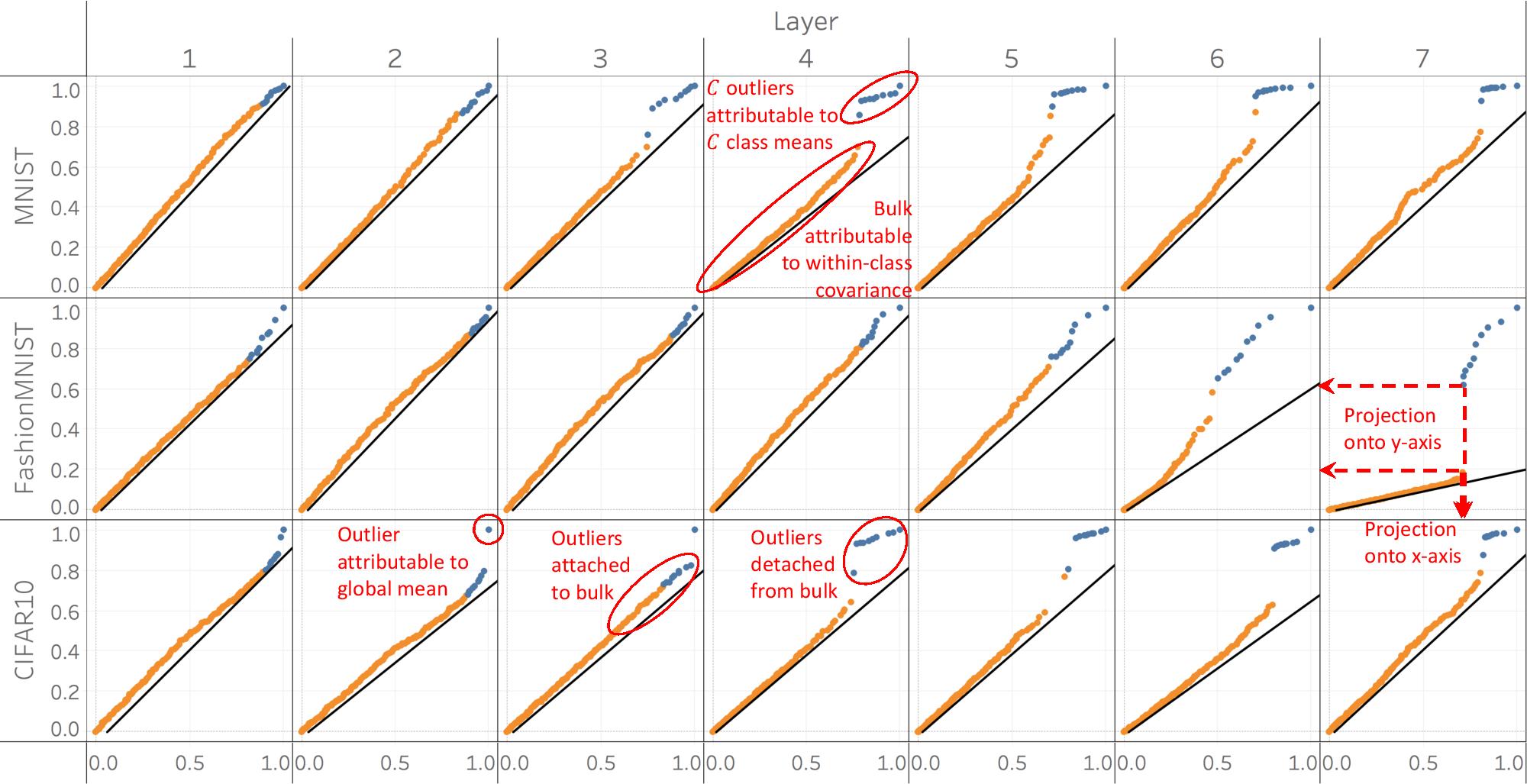}
        \scriptsize{
        \put(-2,95){\begin{sideways}$\log \lambda(\W^l)$\end{sideways}}
        \put(219,-10){$\log \lambda(\W^l \nparallel \WC^l)$}
        }
    \end{overpic}
    \smallskip
    \caption{\textbf{Attribution via knockouts of spectral features of deepnet weights.} Each row of panels corresponds to a specific dataset and each column to a specific layer. Each panel shows a scatter plot of the top $150$ log singular values of $\W^l$ versus those of $\W^l \nparallel \WC^l$. The black curve within each panel is the identity line. The top-$C$ outliers are marked in blue; other singular values are marked in orange. The fourth column of the first row shows the $C$ outliers, attributable to the between-class covariance, $\WC^l$, and the bulk, attributable to the within-class covariance, $\WW^l$. The second column of the third row shows the biggest outlier, which is attributable to the global mean, $\deltaa_G \h_G^\T$. The third and fourth columns of the third row show the layer in which the $C$ outliers separate from the bulk. The network is an eight-layer MLP with 2048 neurons in each hidden layer. For plotting purposes, values on the x-axis and y-axis are normalized to the range $[0,1]$. See Section \ref{sec:weights} for further details.}
    \label{fig:weights_knockouts}
\end{figure}

\subsection{Relating the class structure in features, backpropagated errors and weights} \label{sec:weights}
In this subsection we assume a weight decay regularization is used to train the network. Recalling Equation \eqref{eq:gradient}, the gradient of the loss with respect to the $l$'th layer weight matrix $\W^l$ is given by 
\begin{equation*}
    \pdv{\Lagr(\theta)}{\W^l} = \Ave_{i,c} \pdv{\ell( f(\x_{i,c};\thetaa), \y_{c} )}{\W^l} 
    = \Ave_{i,c} \deltaa_{i,c,c}^l {\h_{i,c}^{l-1}}^\T + \eta \W^l.
\end{equation*}
Setting the above gradient to zero, we obtain
\begin{equation*}
    \W^l = - \frac{1}{\eta} \Ave_{i,c} \deltaa_{i,c,c}^l {\h_{i,c}^{l-1}}^\T.
\end{equation*}
Define the \textit{between-class weight matrix}
\begin{equation*}
    \WC^l = - \frac{1}{\eta} \Ave_c \deltaa_{c,c}^l {\h_c^{l-1}}^\T,
\end{equation*}
where
\begin{equation*}
    \deltaa_{c,c}^l = \Ave_i \deltaa_{i,c}^l.
\end{equation*}

In Figure \ref{fig:weights_knockouts} we attribute via knockouts the top $C$ outliers in the spectrum of $\W^l$ to the spectrum of $\WC^l$. Similarly we could attribute via knockouts the largest of these $C$ outliers to the second moment of the global mean. Specifically, Figure \ref{fig:features_knockouts} shows scatter plots of singular values of $s_i(\W^l)$ versus those of $s_i(\W^l \nparallel \WC^l)$ (here we use the definition of projection knockouts for non-square matrices). The last column of the second row illustrates the projection of a blue and an orange point on the x- and y-axes. The projections on the y-axis are well-separated, which implies the spectrum of $\W^l$ has $C$ outliers, corresponding to the blue points, which are separated from a bulk, corresponding to the orange points. The projections onto the x-axis are not well-separated, which implies that the outliers if any, are largely eliminated after the knockout procedure. Since the knockout eliminated the outliers, these would-be outliers are attributed to the matrix we knocked out, i.e., $\WC^l$. Notice how the orange points are situated along the identity line, which means the bulk of the spectrum is unaffected by the knockout procedure, whereas the blue points, corresponding to the top-$C$ singular values, deviate substantially from the identity line. In fact, in the last few columns they deviate so much that they separate from the orange points, as shown by the third and fourth panels of the third row. This implies a `bulk-and-outliers' pattern exists in the spectrum of $\W^l$, and the outliers emerge from the bulk as we move towards deeper layers.

\section{Multinomial logistic regression} \label{sec:logistic}
The emergence of the pattern in Figure \ref{fig:pattern} is not restricted to spectra originating from deepnets, as it already appears in simpler examples.
In this section, we study the spectrum of the FIM of a multinomial logistic regression classifier in order to obtain intuitive understanding of (i) the different components in the spectrum of $\G$; and (ii) the relation between these components and the misclassification error. 

\subsection{FIM of multinomial logistic regression}
Assuming our network is a multinomial logistic regression classifier, we have
\begin{equation*}
    f(\x_{i,c}; \thetaa) = \W \x_{i,c}, \quad \thetaa=\vect(\W),
\end{equation*}
where we denoted by $\vect(\cdot)$ the operator forming a vector from a matrix by stacking its columns as subvector blocks within a single vector. Denote by $\otimes$ the Kronecker product. Using the property of Kronecker products, $\vect(\A\B) = (\B^\T \otimes \I_C) \vect(\A)$, we can rewrite the classifier as follows:
\begin{equation*}
    f(\x_{i,c}; \thetaa)
    = \W \x_{i,c}
    = (\x_{i,c}^\T \otimes \I_C) \vect(\W)
    = (\x_{i,c}^\T \otimes \I_C) \thetaa.
\end{equation*}
The above implies that
\begin{equation*}
    \pdv{f(\x_{i,c};\thetaa)}{\thetaa}^\T = \x_{i,c} \otimes \I_C,
\end{equation*}
and also
\begin{equation*}
    \pdv[2]{\ell ( \z, \y_c)}{\z} \Bigg|_{\z_{i,c}}
    = \diag (\p_{i,c}) - \p_{i,c} \p_{i,c}^\T,
\end{equation*}
where $\p_{i,c} \in \R^C$ denotes the predicted probability vector of the $i$'th example belonging to the $c$'th class. The above identity was proven by \citet{bohning1992multinomial}. Plugging these into the definition of $\G$ in Equation \eqref{eq:Hess_G_E}, we obtain
\begin{equation*}
    \G = \Ave_{i,c} \left\{ (\x_{i,c} \otimes \I_C) (\diag (\p_{i,c}) - \p_{i,c} \p_{i,c}^\T) (\x_{i,c} \otimes \I_C)^\T \right\}.
\end{equation*}

\subsection{Spectrum of FIM under generative model setting}
\begin{mdframed}
\begin{defn} \textup{{\textbf{(Canonical classification model).}}}
    An instance of the \textit{canonical classification model} $\CCM(D,C)$ consists of the following components:
    \begin{itemize}
        \item $C$ canonical vector class means, $\e_c \in \R^D$ i.e.,
        \begin{equation*}
            \e_c =
            [0, \dots, 0, \underbrace{1}_{\text{index c}}, 0, \dots, 0].
        \end{equation*}
        \item $NC$ independently normally distributed $D$-dimensional sample deviations:
        \begin{equation*}
            \z_{i,c} \sim \mathcal{N} (0, \I),
            \quad 1 \leq c \leq C,
            \quad 1 \leq i \leq N.
        \end{equation*}
        \item A scalar representing the signal-to-noise ratio $t$.
    \end{itemize}
    The above give rise to a set $\{ \x_{i,c} \}_{i,c}$ obeying the class block structure, whose elements satisfy:
    \begin{equation*}
        \x_{i,c} = t \ \e_c + \z_{i,c}.
    \end{equation*}
\end{defn}
\end{mdframed}

\noindent We can further simplify the $\CCM$ model by imposing the following assumption.

\begin{mdframed}
\begin{defn} \textup{{\textbf{(Symmetric probabilities).}}}
    The probabilities predicted by the multinomial logistic regression classifier are \textit{symmetric} if the following holds:
    \begin{equation*}
        p_{i,c,c'} =
        \left\{\begin{array}{ll}
        1-\alpha, & \text{for } c=c' \\
        \frac{\alpha}{C-1}, & \text{for } c \neq c'
        \end{array}\right\},
    \end{equation*}
    where $p_{i,c,c'}$ is the probability of the $i$'th example in the $c$'th class belonging to cross-class $c'$. In other words, the probabilities are independent of the sample index $i$ and the cross-class $c'$, assuming $c' \neq c$. The vector of probabilities will be denoted by $\p_{\cdot,c}$.
\end{defn}
\end{mdframed}

\noindent The following lemma, whose proof is deferred to Appendix \ref{multinomial_logistic_regression}, proves that in the canonical classification model the expected FIM admits a particularly simple form.

\begin{mdframed}
\begin{restatable}[]{lem}{lemmamlm} \label{lemma:logistic_FIM}
\textup{\textbf{(Expected FIM of multinomial logistic regression trained on $\CCM$).}}
The expected FIM of multinomial logistic regression, with symmetric probabilities, trained on $\CCM(D,C)$, is given by:
    \begin{equation*}
        \Exp \G = \frac{s}{C} \cdot \blkdiag(\U_1, \dots, \U_C, \zero_{DC-C^2}) + \I \otimes \Ub,
    \end{equation*}
    where
    \begin{align*}
        s = & t^2 \\
        \U_c = & \diag (\p_{\cdot,c}) - \p_{\cdot,c} \p_{\cdot,c}^\T, \\
        \Ub = & \Ave_c \U_c,
    \end{align*}
    and the operator $\blkdiag(\cdot)$ forms a block diagonal matrix with $\U_c$ in its $c$'th block and zeros elsewhere.
\end{restatable}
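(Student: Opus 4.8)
The plan is to start from the explicit FIM expression derived just above the lemma,
\[
    \G = \Ave_{i,c} \left\{ (\x_{i,c} \otimes \I_C)\left(\diag(\p_{i,c}) - \p_{i,c}\p_{i,c}^\T\right)(\x_{i,c} \otimes \I_C)^\T \right\},
\]
and to simplify it in three stages. First, I would invoke the symmetric-probabilities hypothesis: since $\p_{i,c} = \p_{\cdot,c}$ depends neither on $i$ nor on the realized data, the middle factor equals the deterministic matrix $\U_c = \diag(\p_{\cdot,c}) - \p_{\cdot,c}\p_{\cdot,c}^\T$. Next, I would collapse the Kronecker sandwich using the elementary identity $(\x\otimes\I_C)\,\M\,(\x\otimes\I_C)^\T = (\x\x^\T)\otimes\M$, valid for any $C\times C$ matrix $\M$, which gives $\G = \Ave_{i,c}\,(\x_{i,c}\x_{i,c}^\T)\otimes \U_c$.

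Second, I would take expectations. Since each $\U_c$ is deterministic, $\Exp\G = \Ave_{i,c}\,\Exp[\x_{i,c}\x_{i,c}^\T]\otimes\U_c$. Substituting $\x_{i,c} = t\,\e_c + \z_{i,c}$ and expanding, the cross terms vanish because $\Exp\z_{i,c} = \zero$, while $\Exp[\z_{i,c}\z_{i,c}^\T] = \I_D$; thus $\Exp[\x_{i,c}\x_{i,c}^\T] = t^2\,\e_c\e_c^\T + \I_D$, with no dependence on $i$ (so $N$ never enters). Hence
\[
    \Exp\G = \Ave_c \left( t^2\,\e_c\e_c^\T + \I_D \right)\otimes\U_c = t^2\,\Ave_c\,\left(\e_c\e_c^\T\otimes\U_c\right) + \Ave_c\,\left(\I_D\otimes\U_c\right).
\]

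Third, I would identify the two pieces. The noise term factors cleanly as $\Ave_c(\I_D\otimes\U_c) = \I_D\otimes\Ave_c\U_c = \I\otimes\Ub$. For the signal term, $\e_c\e_c^\T$ is the $D\times D$ matrix carrying a single $1$ in entry $(c,c)$, so under the column-stacking $\vect$ convention fixed earlier (the one that puts $\I_C$ on the right of the Kronecker factor in $\pdv{f}{\thetaa}^\T = \x_{i,c}\otimes\I_C$), the matrix $\e_c\e_c^\T\otimes\U_c$ is the $DC\times DC$ matrix whose only nonzero $C\times C$ block is $\U_c$, sitting in diagonal block position $c$. Summing over $c=1,\dots,C$ and dividing by $C$ therefore produces $\frac{t^2}{C}\,\blkdiag(\U_1,\dots,\U_C,\zero_{DC-C^2})$, with the final $D-C$ diagonal blocks receiving no contribution. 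Combining the two pieces and writing $s = t^2$ yields the claimed formula.

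The computation is essentially bookkeeping, so I do not anticipate a deep obstacle; the one place demanding care is the Kronecker-product ordering and its block-matrix reading — specifically, confirming that $\e_c\e_c^\T\otimes\U_c$ lands exactly in the $c$-th diagonal block and nowhere else, which is what reconciles the $\vect(\W)$ convention used to obtain $\pdv{f}{\thetaa}^\T = \x_{i,c}\otimes\I_C$ with the $\blkdiag$ layout asserted in the statement. A secondary item worth spelling out is that it is precisely the symmetric-probabilities assumption that allows $\U_c$ to leave the expectation; absent that assumption one would additionally need to model the joint law of $\p_{i,c}$ and $\x_{i,c}$.
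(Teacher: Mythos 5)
Your proposal is correct and follows essentially the same route as the paper's own proof: reduce the middle factor to the deterministic $\U_c$ via the symmetric-probabilities assumption, collapse the Kronecker sandwich to $(\x_{i,c}\x_{i,c}^\T)\otimes\U_c$, expand $\x_{i,c}=t\e_c+\z_{i,c}$, kill the cross terms in expectation, and read off the block-diagonal signal term plus $\I\otimes\Ub$. Your extra care about the $\vect$ convention and the placement of $\e_c\e_c^\T\otimes\U_c$ in the $c$-th diagonal block is a point the paper passes over silently, but the argument is the same.
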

\end{mdframed}
Notice that the $C \times C$ matrices $\U_c$ are in general of rank $C$ and the block diagonal matrix is of rank $C^2$. Hence, the expected FIM is a perturbation of a rank-$C^2$ matrix. The following theorem, proven in Appendix \ref{multinomial_logistic_regression}, describes the spectrum of this matrix.

\begin{mdframed}
\begin{restatable}[]{thm}{thmmlm} \label{thm:spec_V_GCC}
    \textup{\textbf{(Spectrum of expected FIM of multinomial logistic regression trained on $\CCM$).}}
    The spectrum of the expected FIM of multinomial logistic regression, with symmetric probabilities, trained on $\CCM(D,C)$, is given by:
    \begin{align*}
        & \lambda_i \left( \Exp \G \right) \\
        = & \left\{\begin{array}{ll}
        \frac{\alpha}{C-1} \left( s(1-\alpha) + \left( 2 - \alpha \frac{C}{C-1} \right) \right), & \text{for } 1 \leq i \leq C \\
        \frac{\alpha}{C-1} \left( \frac{s}{C} + \left( 2 - \alpha \frac{C}{C-1} \right) \right), & \text{for } C < i \leq C(C-1) \\
        \frac{\alpha}{C-1} \left( 2 - \alpha \frac{C}{C-1} \right), & \text{for } C(C-1) < i \leq D(C-1) \\
        0, & \text{for } D(C-1) < i \leq DC
        \end{array}\right\}. \nonumber
    \end{align*}
\end{restatable}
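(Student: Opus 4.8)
The plan is to read the spectrum off the block-diagonal form supplied by Lemma \ref{lemma:logistic_FIM} and to reduce the whole problem to diagonalizing two $C\times C$ matrices. By the lemma, $\Exp\G = \frac{s}{C}\,\blkdiag(\U_1,\dots,\U_C,\zero_{DC-C^2}) + \I_D\otimes\Ub$ is block diagonal with $D$ diagonal blocks of size $C$: for $1\le c\le C$ the $c$-th block is $\frac{s}{C}\U_c+\Ub$, and for $C<c\le D$ the $c$-th block is $\Ub$. Hence the multiset of eigenvalues of $\Exp\G$ consists of $D-C$ copies of the eigenvalues of $\Ub$ together with, for each $c=1,\dots,C$, the eigenvalues of $\frac{s}{C}\U_c+\Ub$, and it suffices to diagonalize $\Ub$ and each $\frac{s}{C}\U_c+\Ub$.

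First I would pin down $\Ub$. Since $\U_c\one = \diag(\p_{\cdot,c})\one - \p_{\cdot,c}(\p_{\cdot,c}^\T\one) = \p_{\cdot,c}-\p_{\cdot,c} = \zero$, also $\Ub\one=\zero$. Under the symmetric-probabilities assumption any coordinate permutation $\sigma$ satisfies $P_\sigma\p_{\cdot,c} = \p_{\cdot,\sigma(c)}$, hence $P_\sigma\U_c P_\sigma^\T = \U_{\sigma(c)}$ and so $P_\sigma\Ub P_\sigma^\T = \Ub$; a matrix whose entries are invariant under all coordinate permutations has the form $\Ub = x\I + y\,\one\one^\T$, and combined with $\Ub\one=\zero$ this forces $\Ub = x(\I - \tfrac1C\one\one^\T)$. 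Taking traces, $x(C-1) = \mathrm{tr}(\Ub) = \Ave_c\big(1-\|\p_{\cdot,c}\|^2\big) = 1-(1-\alpha)^2-\tfrac{\alpha^2}{C-1} = \alpha\big(2-\alpha\tfrac{C}{C-1}\big)$, so $\Ub$ has eigenvalue $x = \tfrac{\alpha}{C-1}\big(2-\alpha\tfrac{C}{C-1}\big)$ on $\one^\perp$ (multiplicity $C-1$) and eigenvalue $0$ on $\one$.

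Next I would diagonalize $\U_c$. Writing $\p_{\cdot,c} = q\one + (p-q)\e_c$ with $p=1-\alpha$ and $q=\tfrac{\alpha}{C-1}$, one checks directly that $\U_c v = qv$ whenever $v\perp\one$ and $v\perp\e_c$, so $q$ is an eigenvalue of multiplicity $C-2$ on $\{\one,\e_c\}^\perp$; on the complementary plane $\mathrm{span}(\one,\e_c)$ one eigenvalue is $0$ (eigenvector $\one$), and the remaining eigenvalue is $\mu_c = \mathrm{tr}(\U_c) - (C-2)q = \tfrac{C}{C-1}\alpha(1-\alpha)$, with eigenvector the $\one$-orthogonal component of $\e_c$. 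The key observation is that every eigenvector of $\U_c$ other than $\one$ lies in $\one^\perp$, which is exactly the nonzero eigenspace of $\Ub$, so $\U_c$ and $\Ub$ share an orthonormal eigenbasis. Consequently $\frac{s}{C}\U_c+\Ub$ has eigenvalue $0$ on $\one$, eigenvalue $\frac{s}{C}\mu_c+x = \tfrac{\alpha}{C-1}\big(s(1-\alpha)+2-\alpha\tfrac{C}{C-1}\big)$ on the $\one$-orthogonal component of $\e_c$, and eigenvalue $\frac{s}{C}q+x = \tfrac{\alpha}{C-1}\big(\tfrac{s}{C}+2-\alpha\tfrac{C}{C-1}\big)$ with multiplicity $C-2$ on $\{\one,\e_c\}^\perp$.

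Finally I would tally multiplicities. Over the $C$ blocks $\frac{s}{C}\U_c+\Ub$, the first value occurs once per block (total $C$), the second occurs $C-2$ times per block (total $C(C-2)=C^2-2C$), and $0$ occurs once per block (total $C$); over the $D-C$ blocks equal to $\Ub$, the value $x$ occurs $C-1$ times each (total $(D-C)(C-1)=D(C-1)-C(C-1)$) and $0$ once each (total $D-C$). Summing, $0$ has multiplicity $D=DC-D(C-1)$ and the four multiplicities add to $DC$; matching them to the index ranges reproduces the claimed formula. The one genuinely delicate point is the simultaneous-diagonalization step that lets the two summands be treated on a common eigenbasis; everything else is $2\times2$ and trace bookkeeping, with the algebraic simplification of the trace expressions being the most error-prone but entirely elementary part.
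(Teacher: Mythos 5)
Your proposal is correct and reaches the same eigenvalues with the same multiplicities, but the core linear-algebra step is handled by a genuinely different mechanism than the paper's. Both arguments start from Lemma \ref{lemma:logistic_FIM} and reduce to the $C\times C$ blocks $\frac{s}{C}\U_c+\Ub$ (for $c\le C$) and $\Ub$ (for the remaining $D-C$ blocks). The paper then writes out the entries of $\U_1$ explicitly, proves two auxiliary lemmas on the spectra of structured matrices (constant-diagonal/constant-off-diagonal, and a bordered ``arrow'' matrix), conjugates the block into a $2\times2$-plus-diagonal form, and discovers by direct computation that the determinant of the $2\times2$ part vanishes, which is how the zero eigenvalue of each block emerges. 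You instead observe that $\Ub$ is a scalar multiple of the centering projection $\I-\frac{1}{C}\one\one^\T$ (via permutation invariance and $\Ub\one=\zero$), diagonalize $\U_c$ directly on the decomposition $\mathrm{span}(\one)\oplus\mathrm{span}(\e_c-\tfrac1C\one)\oplus\{\one,\e_c\}^\perp$, and note that this basis simultaneously diagonalizes both summands, so the block's eigenvalues are just sums of eigenvalues. This buys you two things: the zero eigenvalue of each block is immediate from $\U_c\one=\Ub\one=\zero$ rather than an a-posteriori determinant identity, and the heavy algebraic simplification of $a,b,d,e,T,D$ in the appendix collapses to two short trace computations. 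The paper's route, while more computational, is arguably more self-contained in that it never needs the symmetry/commutation observation and would survive a perturbation of the symmetric-probabilities assumption that breaks exchangeability. Your multiplicity tally ($C$, $C^2-2C$, $(D-C)(C-1)$, and $D$ zeros, summing to $DC$) matches the theorem exactly, and the one step you flag as delicate — the simultaneous diagonalization — is fully justified by your observation that every eigenvector of $\U_c$ other than $\one$ lies in $\one^\perp$, on which $\Ub$ acts as a scalar.
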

\end{mdframed}
Notice how the spectrum has $C$ outliers, a mini-bulk consisting of approximately $C^2$ eigenvalues, and a main bulk. Notice also how increasing the signal-to-noise ratio $s$ results in: (i) the separation of the top-$C$ outliers from the mini-bulk (because $1-\alpha$ would increase); (ii) the separation of the top-$C$ outliers from the bulk; and (iii) the separation of the mini-bulk from the main bulk. As such, the ratio of outliers to mini-bulk, the ratio of outliers to bulk, and the ratio of mini-bulk to bulk are all predictive of misclassification.

\section{Overview of literature in light of this paper}
In light of the insights gathered throughout this work, we can now view in a different light several papers in the literature.

\subsection{Start looking at layer-wise features and backpropagated errors, stop focusing on the Hessian}
In the last year, a large amount of research effort has been devoted into investigating the spectrum of the Hessian. This paper shows that the Hessian is the most complicated quantity one could possibly investigate since: (a) it inherits its cross-class structure from the product of features and backpropagated errors; (b) its cross-class structure, caused by the FIM, is perturbed by another matrix $\E$; and (c) it blends the information across the layers into a single quantity. The research community should therefore move from studying the Hessian into studying the layer-wise features and backpropagated errors. 

\citet{zhang2019all} raised similar concerns, stating that the analysis of the optimization landscape would be better performed through the study of individual layers, rather than a single holistic quantity, such as the Hessian. \citet{jiang2018predicting} further motivated the investigation of layer-wise  features by showing how generalization error can be predicted from margins of layer-wise features.

\subsection{Start looking at class/cross-class means and covariances, stop looking at eigenvalues}
The most surprising contribution of this paper is that it shows the inadequacy of eigenanalysis in revealing the fundamental structure in deep learning. It is true the spectrum reflects this structure, as seen through the plethora of measurements made in the literature, but the spectrum does not explain this structure, the class and cross-class means do.

Throughout this paper we consider the setting in which the number of classes $C$ is smaller than the dimension of features in a certain layer. However, if $C$ is bigger than the dimension of the features, then the spectrum would not manifest any bulk-and-outliers structure. However, class means, cross-class means and within-cross-class covariances would still be perfectly valid quantities to measure and study.

Even if the number of classes is small, if the class means are not gigantic, we may see that the spectrum has no outliers. However, the class and cross-class means are still present in the data, they are just not visible through eigenanalysis.

\subsection{The effect of batch normalization on the bulk-and-outliers structure}
\citet{ghorbani2019investigation} claim that ``batch normalization \citep{ioffe2015batch} pushes outliers back into the bulk'' and ``hypothesize a mechanistic explanation for why batch normalization speeds up optimization: it does so via suppression of outlier eigenvalues which slow down optimization''.

This paper explains their empirical observation. Specifically, the top-$C$ outliers in the spectrum of the Hessian are caused by a Khatri-Rao product between the feature global mean, $\h_G$, and backpropagated errors class means $\deltaa_c$. The whitening step of batch normalization subtracts the global mean from the features, which significantly decreases their class-mean magnitude. As a result, the outliers in the Hessian are pushed back into the main bulk. \citet{jacot2019freeze} provided a similar explanation for why batch normalization accelerates training by proving that the top outlier in the spectrum of the NTK matrix \citep{jacot2018neural} is suppressed as a result of batch normalization.

However, the fact that these outliers are pushed back by batch normalization does not mean they are completely removed, nor does it lessen their fundamental significance. The outliers are caused by class means, which separate from the bulk as function of depth. They represent the separation of class information from noise and are of utmost importance in studying the classification performance of deepnets.

\subsection{Flatness conjecture}
\citet{hochreiter1997flat} conjectured that the flatness of the loss function around the minima found by SGD is correlated with good generalization. Recent empirical work by \cite{keskar2016large,jastrzkebski2017three,jastrzebski2018relation,jiang2019fantastic,lewkowycz2020large} gave credence to this statement by showing, through empirical measurements, how sharpness can predict generalization. \citet{dinh2017sharp} questioned this conjecture by showing that most notions of flatness are problematic for deep models and can not be tied to generalization. Their argument relied on the observation that one can reparametrize a model and increase arbitrarily the sharpness of its minima, without changing the function it implements. However, the measures of flatness considered by \citet{dinh2017sharp} were the trace (sum of eigenvalues) or spectral norm (maximal eigenvalue) of the Hessian; they never considered the separation of the outliers from the bulk in the spectrum of the Hessian which, in light of our paper, should correlate with generalization.

\subsection{Predicting generalization through information matrices}
\citet{thomas2019information} propose to predict the generalization gap of deepnets based on the Takeuchi information criterion (TIC), which is equal to the trace of the covariance of gradients $\C$ divided by the FIM $\G$, i.e., $\Tr{\C \G^{-1}}$. They further propose to approximate it with an easier to compute quantity, $\Tr{\C} / \Tr{\G}$. Their measurements show correlation between these quantities and the generalization gap.

In light of our paper, we can understand better their measurements. The number of classes in their experiments is small, and thus so it the number of outliers. The traces are therefore dominated by the bulk eigenvalues. This, in turn, implies that their formulas neglects completely the separation of class means from noise. Instead, their prediction of the generalization gap relies on the sheer bulk magnitude. In the context of multinomial logistic regression, which we analyzed in Section \ref{sec:logistic}, we proved that the bulk eigenvalues indeed scale with misclassification. 

\subsection{Catastrophic forgetting}
Catastrophic forgetting refers to the phenomenon that deepnets, when trained on a series of tasks, tend to suffer from loss of accuracy on tasks which were learned first. A recent work
by \citet{ramasesh2020anatomy} found, through a rigorous empirical investigation, that catastrophic forgetting tends to occur primarily in deeper representations. Our work complements their findings by showing that deeper representations are more discriminative than shallower ones, and that catastrophic forgetting is likely the result of class means suppression.

\subsection{KFAC ineffient for large batch training}
Recently \citet{ma2020inefficiency} observed that KFAC is inefficient for large batch training of deepnets. While the current paper does not shed light on this phenomenon, it does propose CFAC as a promising alternative to KFAC.

\section{Conclusions}
There exist many fundamental objects associated with deepnets. No one has any intuition about their properties, structure or behaviour. Researchers therefore started extracting from them descriptive statistics like eigenvalues. It is remarkable such measurements are even possible, given how high-dimensional some of these objects are. After measuring the eigenvalues, one starts seeing a great deal of structure, which is curiously explicit, consisting of $C$ outliers, $C^2$ secondary outliers, and a main-bulk. Researchers pointed to the existence of some of the structure or its traces. However, it has been an open question as to what is causing this structure to appear and how to exploit it. This paper shows there is indeed a specific highly organized structure. Indeed, those initial observations can be expected to persist everywhere. Indeed, these are not artifacts but deeply significant observations. Indeed, understanding them is very important for understanding deepnet performance. This is not a spandrel, it is a clue of deep significance.


\newpage

\appendix

\section{Cross-class structure in G} \label{G_decomp}
Recall the definition of $\G$,
\begin{equation*}
    \G = \Ave_{i,c} \left\{ \pdv{f(\x_{i,c})}{\thetaa}^\T \pdv[2]{\ell (\x_{i,c},\y_{i,c})}{\f} \pdv{f(\x_{i,c})}{\thetaa} \right\}.
\end{equation*}
Plugging the Hessian of multinomial logistic regression \citet{bohning1992multinomial}, we obtain
\begin{equation*}
    \G = \Ave_{i,c} \left\{ \pdv{f(\x_{i,c})}{\thetaa}^\T \left( \text{diag} (\p_{i,c}) - \p_{i,c} \p_{i,c}^\T \right) \pdv{f(\x_{i,c})}{\thetaa} \right\}.
\end{equation*}
The above can be shown to be equivalent to
\begin{align*}
    \G = \Ave_{i,c} \left\{ \sum_{c'} p_{i,c,c'}
    \left( \pdv{f_{c'}(\x_{i,c})}{\thetaa} - \sum_{c'} p_{i,c,c'} \pdv{f_{c'}(\x_{i,c})}{\thetaa} \right)^\T \right. \\
    \left. \left( \pdv{f_{c'}(\x_{i,c})}{\thetaa} - \sum_{c'} p_{i,c,c'} \pdv{f_{c'}(\x_{i,c})}{\thetaa} \right) \right\}.
\end{align*}
Define the $p$-dimensional vector,
\begin{equation*}
    \g_{i,c,c'} = \pdv{\ell( f(\x_{i,c}; \thetaa), \y_{c'} )}{\thetaa},
\end{equation*}
and note that
\begin{align*}
    \g_{i,c,c'}
    & = \pdv{\ell( f(\x_{i,c}; \thetaa), \y_{c'} )}{\thetaa} \\
    & = \pdv{f(\x_{i,c}; \thetaa)}{\thetaa}^\T \pdv{\ell( f(\x_{i,c}; \thetaa), \y_{c'} )}{\f} \\
    & = \pdv{f(\x_{i,c})}{\thetaa}^\T (\y_{c'} - \p_{i,c}) \\
    & = \pdv{f_{c'}(\x_{i,c})}{\thetaa} - \sum_{c'} p_{i,c,c'} \pdv{f_{c'}(\x_{i,c})}{\thetaa}.
\end{align*}
Plugging the above expression into the definition of $\G$, we get
\begin{equation*}
    \G = \Ave_{i,c} \left\{ \sum_{c'} p_{i,c,c'} \g_{i,c,c'} \g_{i,c,c'}^\T \right\},
\end{equation*}
or equally
\begin{equation*}
    \G = \sum_{i,c,c'} w_{i,c,c'} \g_{i,c,c'} \g_{i,c,c'}^\T.
\end{equation*}

\subsection{First decomposition}
Recall the following definitions:
\begin{align*}
    \g_{c,c'} & = \sum_i \pi_{i,c,c'} \g_{i,c,c'} \\
    \GWccp & = \sum_i \pi_{i,c,c'} (\g_{i,c,c'} - \g_{c,c'}) (\g_{i,c,c'} - \g_{c,c'})^\T \\
    \pi_{i,c,c'} & = \frac{w_{i,c,c'}}{w_{c,c'}} \\
    w_{c,c'} & = \sum_i w_{i,c,c'}.
\end{align*}
Note that
\begin{align*}
    \G = & \sum_{i,c,c'} w_{i,c,c'} \g_{i,c,c'} \g_{i,c,c'}^\T \\
    = & \sum_{c,c'} w_{c,c'} \frac{1}{w_{c,c'}} \sum_i w_{i,c,c'} \g_{i,c,c'} \g_{i,c,c'}^\T \\
    = & \sum_{c,c'} w_{c,c'} \sum_i \pi_{i,c,c'} \g_{i,c,c'} \g_{i,c,c'}^\T \\
    = & \sum_{c,c'} w_{c,c'} (\g_{c,c'} \g_{c,c'}^\T + \GWccp) \\
    = & \sum_{c,c'} w_{c,c'} \g_{c,c'} \g_{c,c'}^\T + \sum_{c,c'} w_{c,c'} \GWccp \\
    = & \sum_{\substack{c,c'\\ c \neq c'}} w_{c,c'} \g_{c,c'} \g_{c,c'}^\T + \sum_c w_{c,c} \g_{c,c} \g_{c,c}^\T
    + \sum_{c,c'} w_{c,c'} \GWccp.
\end{align*}
In what follows, we further decompose only the first summation in the above equation.

\subsection{Second decomposition}
Recall the following definitions:
\begin{align*}
    \g_c & = \sum_{c' \neq c} \pi_{c,c'} \g_{c,c'} \\
    \GCPc & = \sum_{c' \neq c} \pi_{c,c'} (\g_{c,c'} - \g_{c'}) (\g_{c,c'} - \g_{c'})^\T \\
    \pi_{c,c'} & = \frac{w_{c,c'}}{w_c} \\
    w_c & = \sum_{c' \neq c} w_{c,c'}.
\end{align*}
We have
\begin{align*}
    \sum_{\substack{c,c'\\c \neq c'}} w_{c,c'} \g_{c,c'} \g_{c,c'}^\T
    = & \sum_c \sum_{c' \neq c} w_{c,c'} \g_{c,c'} \g_{c,c'}^\T \\
    = & \sum_c w_c \frac{1}{w_c} \sum_{c' \neq c} w_{c,c'} \g_{c,c'} \g_{c,c'}^\T \\
    = & \sum_c w_c \sum_{c' \neq c} \pi_{c,c'} \g_{c,c'} \g_{c,c'}^\T \\
    = & \sum_c w_c (\g_c \g_c^\T + \GCPc) \\
    = & \sum_c w_c \g_c \g_c^\T + \sum_c w_c \GCPc.
\end{align*}

\subsection{Combination}
Combining all the expressions from the previous subsections, we get
\begin{equation} \label{eq:G_decomp}
    \G
    = \underbrace{\sum_c w_c \g_c \g_c^\T}_{\GC}
    + \underbrace{\sum_c w_c \GCPc}_{\GCP}
    + \underbrace{\sum_{c,c'} w_{c,c'} \GWccp}_{\GW}
    + \underbrace{\sum_c w_{c,c} \g_{c,c} \g_{c,c}^\T}_{\G_{c=c'}}.
\end{equation}

\section{Multinomial logistic regression} \label{multinomial_logistic_regression}

\begin{mdframed}
\lemmamlm*
\end{mdframed}
\begin{proof}
Recall the definition of $\G$:
\begin{align*}
    \G = \Ave_{i,c} \left\{ (\x_{i,c} \otimes \I_C) (\diag (\p_{i,c}) - \p_{i,c} \p_{i,c}^\T) (\x_{i,c} \otimes \I_C)^\T \right\}.
\end{align*}
Using our symmetric probabilities assumption, there exist $C \times C$ matrices $\{ \U_c \}_c$ for which
\begin{equation*}
    \G = \Ave_{i,c} \left\{ (\x_{i,c} \otimes \I_C) \U_c (\x_{i,c} \otimes \I_C)^\T \right\}.
\end{equation*}
Using the property of the Kronecker product, $(\A \otimes \B)(\C \otimes \D) = (\A\C) \otimes (\B\D)$, the expression in the above average can be simplified into:
\begin{align*}
    & \left( \x_{i,c} \otimes \I_C \right) \U_c \left( \x_{i,c} \otimes \I_C \right)^\T \\
    = & \left( \x_{i,c} \otimes \I_C \right) (1 \otimes \U_c) \left( \x_{i,c} \otimes \I_C \right)^\T \\
    = & \left( \x_{i,c} \otimes \U_c \right) \left( \x_{i,c} \otimes \I_C \right)^\T \\
    = & (\x_{i,c} \x_{i,c}^\T) \otimes \U_c.
\end{align*}
Plugging this expression into the previous equation, we get
\begin{equation*}
    \G = \Ave_{i,c} \left\{ (\x_{i,c} \x_{i,c}^\T) \otimes \U_c \right\}.
\end{equation*}
Plugging our assumption that $\x_{i,c} = t \e_c + \z_{i,c}$, we obtain
\begin{align*}
    \G
    = & t^2 \Ave_{i,c} \left\{ (\e_c \e_c^\T) \otimes \U_c \right\} \\
    + & t \Ave_{i,c} \left\{ (\e_c \z_{i,c}^\T) \otimes \U_c \right\} \\
    + & t \Ave_{i,c} \left\{ (\z_{i,c} \e_c^\T) \otimes \U_c \right\} \\
    + & \Ave_{i,c} \left\{ (\z_{i,c} \z_{i,c}^\T) \otimes \U_c \right\}.
\end{align*}
Taking the expectation of both sides over $\z_{i,c}$, we get
\begin{equation*}
    \Exp \G
    = t^2 \Ave_c \left\{ (\e_c \e_c^\T) \otimes \U_c \right\}
    + \Ave_c \left\{ \I \otimes \U_c \right\}.
\end{equation*}
The above can be written concisely as follows:
\begin{equation*}
    \Exp \G = \frac{s}{C} \blkdiag(\U_1, \dots, \U_C, \zero_{DC-C^2}) + \I \otimes \Ave_c \U_c.
\end{equation*}
\end{proof}

\begin{lem}
Consider the $C \times C$ matrix,
\begin{equation*}
    \A =
    \left[\begin{array}{ccccc}
    a & b & \dots & b \\
    b & a & \dots & b \\
    \vdots & \vdots & \ddots & \vdots \\
    b & b & \dots & a
    \end{array}\right].
\end{equation*}
Its eigenvalues are given by:
\begin{equation*}
    \lambda(\A) = 
    \left\{\begin{array}{ll}
    a+b(C-1), & \text{for } i=1 \\
    a-b, & \text{for } 1 < i \leq C
    \end{array}\right\}.
\end{equation*}
\end{lem}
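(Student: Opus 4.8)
The plan is to recognize $\A$ as a rank-one perturbation of a scalar multiple of the identity. Writing $\one \in \R^C$ for the all-ones vector, one checks directly that the $(i,j)$ entry of $(a-b)\I + b\,\one\one^\T$ equals $a$ when $i=j$ and $b$ when $i\neq j$, so
\begin{equation*}
    \A = (a-b)\,\I + b\,\one\one^\T.
\end{equation*}
Since $\A$ is an affine function of the single matrix $\one\one^\T$, its eigenvectors are exactly those of $\one\one^\T$, and its eigenvalues are obtained from those of $\one\one^\T$ by applying the map $\mu \mapsto (a-b) + b\mu$.

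Next I would compute the spectrum of $\one\one^\T$. It is a symmetric rank-one matrix: $\one\one^\T \one = C\,\one$, so $\mu = C$ is an eigenvalue with eigenvector $\one$; and every vector orthogonal to $\one$ lies in the kernel, giving $\mu = 0$ with multiplicity $C-1$. Feeding these into $\mu \mapsto (a-b) + b\mu$ yields the eigenvalue $(a-b) + bC = a + b(C-1)$ once (eigenvector $\one$) and the eigenvalue $(a-b) + 0 = a-b$ with multiplicity $C-1$, which is precisely the claimed list. Ordering them so that the isolated one appears first matches the statement's indexing $i=1$ versus $1 < i \leq C$.

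There is essentially no obstacle here; the only thing to be mildly careful about is confirming that the $\binom{C}{1}+\binom{C-1}{1}$ eigenvalues so produced account for all $C$ of them with the right multiplicities (they do, since $1 + (C-1) = C$ and $\A$ is symmetric, hence diagonalizable). An alternative, equally short route would be a direct cofactor/row-reduction computation of $\det(\A - \lambda\I)$, subtracting the first row from the others to expose the factor $(a-b-\lambda)^{C-1}$ and then summing a row to extract the remaining factor $(a + b(C-1) - \lambda)$; I would mention this only as a remark.
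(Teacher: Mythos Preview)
Your proof is correct. The paper takes a closely related but slightly different route: it verifies directly that $\frac{1}{\sqrt{C}}\one$ is an eigenvector with eigenvalue $a+b(C-1)$, then asserts without further argument that the remaining eigenvalues are all equal, and recovers their common value $a-b$ from the trace identity $\Tr{\A}=aC$. Your rank-one-perturbation decomposition $\A=(a-b)\I+b\,\one\one^{\T}$ is cleaner because it actually \emph{proves} the equality of the remaining $C-1$ eigenvalues (they all come from the kernel of $\one\one^{\T}$), closing the small gap the paper leaves open. The trade-off is negligible: the paper's trace shortcut is marginally faster once one accepts the symmetry claim, while your version is fully self-contained.
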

\begin{proof}
Notice that
\begin{equation*}
    \A \frac{1}{\sqrt{C}} \one = (a + b(C-1)) \frac{1}{\sqrt{C}} \one.
\end{equation*}
As such, $\frac{1}{\sqrt{C}} \one$ is an eigenvector with an eigenvalue $a+b(C-1)$. Recall that the trace of a matrix is equal to the sum of its eigenvalues. Notice that the rest of the eigenvalues are all the same. As such, their value is equal to 
\begin{align*}
    \frac{\Tr{\A} - (a + b(C-1))}{C-1}
    = \frac{aC - (a + b(C-1))}{C-1} = a - b.
\end{align*}
\end{proof}

\begin{lem}
Consider the $C \times C$ matrix,
\begin{equation*}
    \B = \left[\begin{array}{c|cccc}
    a & b & b & \dots & b \\ \hline
    b & d & c & \dots & c \\
    b & c & d & & \vdots \\
    \vdots & \vdots & & \ddots & \vdots \\
    b & c & \dots & \dots & d
    \end{array}\right].
\end{equation*}
Its eigenvalues are given by:
\begin{equation*}
    \lambda \left( \B \right) = 
    \left\{\begin{array}{ll}
    (T + \Delta)/2, & \text{for } i=1 \\
    (T - \Delta)/2, & \text{for } i=2 \\
    d-e, & \text{for } 2 < i \leq C
    \end{array}\right\}.
\end{equation*}
where
\begin{align*}
    T = & a + d+e(C-2) \\
    D = & a \left( d + e \left(C-2\right) \right) - b^2(C-1) \\
    \Delta = & \sqrt{T^2 - 4D}.
\end{align*}
\end{lem}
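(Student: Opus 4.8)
The plan is to diagonalize $\B$ by exploiting its invariance under the group of permutations acting on the coordinates $2,\dots,C$: since $\B$ commutes with every such permutation, $\R^C$ splits into $\B$-invariant subspaces adapted to that symmetry. Concretely, I would write $\R^C = \mathcal{W}_0 \oplus \mathcal{W}_1$, where $\mathcal{W}_0 = \{ (0,z_2,\dots,z_C) : \sum_{j\ge 2} z_j = 0 \}$ is $(C-2)$-dimensional and $\mathcal{W}_1$ is the $2$-dimensional span of $w_1 = (1,0,\dots,0)$ and $w_2 = (0,1,\dots,1)$. (Throughout, $e$ denotes the common off-diagonal entry of the lower-right $(C-1)\times(C-1)$ block; it is written $c$ in the displayed matrix, evidently a typo, and the eigenvalue formulas use $e$.)

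First I would show $\mathcal{W}_0$ is an eigenspace for $d-e$. For $v=(0,z_2,\dots,z_C)\in\mathcal{W}_0$ the top entry of $\B v$ is $b\sum_{j\ge 2} z_j = 0$, while for $j\ge 2$ the $j$-th entry is $d z_j + e\big(\sum_{k\ge 2} z_k - z_j\big) = (d-e)z_j$; hence $\B v = (d-e)v$. This accounts for $C-2$ eigenvalues equal to $d-e$.

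Next I would read off the action of $\B$ on $\mathcal{W}_1$. Computing $\B w_1 = a w_1 + b w_2$ and $\B w_2 = b(C-1)\,w_1 + \big(d+e(C-2)\big)\,w_2$, the matrix of $\B|_{\mathcal{W}_1}$ in the basis $(w_1,w_2)$ is $M = \begin{pmatrix} a & b(C-1)\\ b & d+e(C-2)\end{pmatrix}$, so $\Tr{M} = T = a+d+e(C-2)$ and $\det M = D = a\big(d+e(C-2)\big) - b^2(C-1)$. The two eigenvalues of $M$ — hence of $\B$ on $\mathcal{W}_1$ — are the roots $(T\pm\Delta)/2$ of $\lambda^2 - T\lambda + D = 0$, with $\Delta = \sqrt{T^2-4D}$. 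Since $\R^C = \mathcal{W}_0\oplus\mathcal{W}_1$ with both summands $\B$-invariant, the spectrum of $\B$ counted with multiplicity is the union of the two spectra: $(T+\Delta)/2$, $(T-\Delta)/2$, and $d-e$ with multiplicity $C-2$, for a total of $C$ eigenvalues, as claimed.

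All the computations are elementary; the only point requiring care is the second step, where $(w_1,w_2)$ is not orthonormal, so one must form the matrix of $\B$ as a linear map on $\mathcal{W}_1$ (its columns being the coordinates of $\B w_1$ and $\B w_2$ in that basis) rather than the Gram-type matrix $\big(\langle w_i,\B w_j\rangle\big)_{i,j}$. The fact that the resulting $T$ and $D$ match the quantities in the statement is the check that this bookkeeping has been done correctly.
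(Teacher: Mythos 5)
Your proposal is correct and follows essentially the same route as the paper: split off the $(C-2)$-dimensional eigenspace of vectors supported on coordinates $2,\dots,C$ with zero sum (eigenvalue $d-e$), reduce the remaining two-dimensional invariant subspace to a $2\times 2$ matrix with trace $T$ and determinant $D$, and solve the quadratic. The only cosmetic difference is that the paper conjugates by an orthonormal basis (obtaining the symmetric block with off-diagonal $b\sqrt{C-1}$) while you use the non-orthonormal pair $(w_1,w_2)$ and correctly take the matrix of the linear map rather than the Gram matrix; both yield the same characteristic polynomial, and you rightly flag the paper's $c$-versus-$e$ typo.
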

\begin{proof}
Consider the $(C-1) \times (C-1)$ matrix,
\begin{equation*}
    \A =
    \left[\begin{array}{ccccc}
    d & c & \dots & c \\
    c & d & \dots & c \\
    \vdots & \vdots & \ddots & \vdots \\
    c & c & \dots & d
    \end{array}\right].
\end{equation*}
Its eigenvalues are given by:
\begin{equation*}
    \lambda(\A) = 
    \left\{\begin{array}{ll}
    d+c(C-1), & \text{for } i=1 \\
    d-c, & \text{for } 1 < i \leq C-1
    \end{array}\right\}.
\end{equation*}
Denote by $\v_3,\dots,\v_C$ the $C-2$ eigenvectors corresponding to the eigenvalue $d-c$. Define the orthonormal basis:
\begin{equation*}
    \V = \left[ \e_1, \frac{1}{\sqrt{C}}\onet, \v_3, \dots, \v_C \right],
\end{equation*}
where $\onet$ is a $C$-dimensional vector of ones, except in the first coordinate, where it is equal to zero. In other words,
\begin{equation*}
    \onet = \left[ 0, 1, \dots, 1 \right].
\end{equation*}
Notice that
\begin{align*}
    \V^T \B \V =
    \left[\begin{array}{cc|cccc}
    a & b \sqrt{C-1} & 0 & 0 & \dots & 0 \\
    b \sqrt{C-1} & d+c(C-2) & 0 & 0 & \dots & 0 \\ \hline
    0 & 0 & d-c & 0 & \dots & 0 \\
    0 & 0 & 0 & d-c & & 0 \\
    \vdots & & \vdots & & \ddots & \vdots \\
    0 & 0 & 0 & 0 & \dots & d-c
    \end{array}\right].
\end{align*}
The diagonal values in the above matrix are obtained through the following equations
\begin{align*}
    \e_1^\T \B \e_1 = & a \\
    \frac{1}{\sqrt{C}} \onet^\T \B \frac{1}{\sqrt{C}} \onet_1 = & d + c(C-2) \\
    \v_i \B \v_i = & d - c, \quad i \geq 3.
\end{align*}
The off-diagonal values are obtained by noticing that
\begin{align*}
    \e_1^\T \B \frac{1}{\sqrt{C}} \onet = & b \sqrt{C-1} \\
    \e_1^\T \B \v_i = & 0, \quad i \geq 3 \\
    \v_i^\T \B \v_j = & 0, \quad i,j \geq 3.
\end{align*}
The above implies that $d-c$ is an eigenvalue with multiplicity $C-2$. The other two eigenvalues are equal to the eigenvalues of the matrix
\begin{equation*}
    \C = \left[\begin{array}{cc}
    a & b \sqrt{C-1} \\
    b \sqrt{C-1} & d+e(C-2)
    \end{array}\right].
\end{equation*}
Recall that the trace of the matrix is equal to the sum of its eigenvalues and the determinant to the multiplication of the eigenvalues. As such,
\begin{align*}
    a + d + e(C-2) = \Tr{\C} = \lambda_1 + \lambda_2 \\
    a(d + e(C-2)) - b^2(C-1) = |\C| = \lambda_1 \lambda_2.
\end{align*}
Notice that
\begin{equation*}
    \lambda_2 = \frac{|\C|}{\lambda_1}.
\end{equation*}
Plugging the above into the equation of the trace, we get
\begin{equation*}
    \Tr{\C} = \lambda_1 + \frac{|\C|}{\lambda_1}.
\end{equation*}
Multiplying both sides by $\lambda_1$ and rearranging the terms, we obtain
\begin{equation*}
    0 = \lambda_1^2 -\Tr{\C} \lambda_1 + |\C|,
\end{equation*}
the solution of which is
\begin{equation*}
    \lambda_1 = \frac{\Tr{C} \pm \sqrt{\Tr{\C}^2 - 4|\C|}}{2}.
\end{equation*}
Defining
\begin{equation*}
    \Delta = \sqrt{\Tr{\C}^2 - 4|\C|},
\end{equation*}
we obtain
\begin{equation*}
    \lambda_1 = \frac{\Tr{C} \pm \Delta}{2}.
\end{equation*}
\end{proof}

\begin{mdframed}
\thmmlm*
\end{mdframed}
\begin{proof}
Consider without loss of generality the matrix $\U_1$.
Using the symmetric probabilities assumption, the $(c,c')$ entry of this matrix is given by:
\begin{align*}
    \U_1(c',c'')
    = \Ave_i \left\{ \delta_{\{c'=c''\}} p_{i,1,c'} - p_{i,1,c'} p_{i,1,c''} \right\}.
\end{align*}
As such, the whole matrix equals
\begin{align*}
    \U_1 = &
    \left[\begin{array}{c|cccc}
    1-\alpha & 0 & 0 & \dots & 0 \\ \hline
    0 & \frac{\alpha}{C-1} & 0 & \dots & 0 \\
    0 & 0 & \frac{\alpha}{C-1} & & \vdots \\
    \vdots & \vdots & & \ddots & \vdots \\
    0 & 0 & \dots & \dots & \frac{\alpha}{C-1}
    \end{array}\right] \\
    - &
    \left[\begin{array}{c|cccc}
    (1-\alpha)^2 & \frac{\alpha(1-\alpha)}{C-1} & \frac{\alpha(1-\alpha)}{C-1} & \dots & \frac{\alpha(1-\alpha)}{C-1} \\ \hline
    \frac{\alpha(1-\alpha)}{C-1} & (\frac{\alpha}{C-1})^2 & (\frac{\alpha}{C-1})^2 & \dots & (\frac{\alpha}{C-1})^2 \\
    \frac{\alpha(1-\alpha)}{C-1} & (\frac{\alpha}{C-1})^2 & (\frac{\alpha}{C-1})^2 & & \vdots \\
    \vdots & \vdots & & \ddots & \vdots \\
    \frac{\alpha(1-\alpha)}{C-1} & (\frac{\alpha}{C-1})^2 & \dots & \dots & (\frac{\alpha}{C-1})^2
    \end{array}\right].
\end{align*}
Simplifying the expressions, we get
\begin{equation*}
    \U_1 =
    \left[\begin{array}{c|cccc}
    \alpha(1-\alpha) & -\frac{\alpha(1-\alpha)}{C-1} & -\frac{\alpha(1-\alpha)}{C-1} & \dots & -\frac{\alpha(1-\alpha)}{C-1} \\ \hline
    -\frac{\alpha(1-\alpha)}{C-1} & \frac{\alpha}{C-1}\left( 1- \frac{\alpha}{C-1} \right) & -(\frac{\alpha}{C-1})^2 & \dots & -(\frac{\alpha}{C-1})^2 \\
    -\frac{\alpha(1-\alpha)}{C-1} & -(\frac{\alpha}{C-1})^2 & \frac{\alpha}{C-1}\left( 1- \frac{\alpha}{C-1} \right) & & \vdots \\
    \vdots & \vdots & & \ddots & \vdots \\
    -\frac{\alpha(1-\alpha)}{C-1} & -(\frac{\alpha}{C-1})^2 & \dots & \dots & \frac{\alpha}{C-1}\left( 1- \frac{\alpha}{C-1} \right)
    \end{array}\right].
\end{equation*}
Notice that
\begin{equation*}
    \Ub =
    \left[\begin{array}{ccccc}
    a & b & \dots & b \\
    b & a & \dots & b \\
    \vdots & \vdots & \ddots & \vdots \\
    b & b & \dots & a
    \end{array}\right],
\end{equation*}
where
\begin{align*}
    a
    = & \frac{1}{C} \alpha(1-\alpha) + \frac{C-1}{C} \frac{\alpha}{C-1}\left( 1- \frac{\alpha}{C-1} \right) \\
    = & \frac{\alpha}{C} \left( 2 - \alpha \frac{C}{C-1} \right),
\end{align*}
and
\begin{align*}
    b
    = & \frac{2}{C} \frac{-\alpha(1-\alpha)}{C-1}  + \frac{C-2}{C} \frac{-\alpha^2}{(C-1)^2} \\
    = & - \frac{\alpha}{C(C-1)} \left( 2 - \alpha \frac{C}{C-1} \right).
\end{align*}
Using the previous lemma, the eigenvalues of $\Ub$ are given by
\begin{equation*}
    \lambda(\Ub)
    = \left\{\begin{array}{ll}
    a+b(C-1), & \text{for } i=1 \\
    a-b, & \text{for } 1 < i \leq C
    \end{array}\right\},
\end{equation*}
or equally,
\begin{equation*}
    \lambda(\Ub)
    = \left\{\begin{array}{ll}
    0, & \text{for } i=1 \\
    \frac{\alpha}{C-1} \left( 2 - \alpha \frac{C}{C-1} \right), & \text{for } 1 < i \leq C
    \end{array}\right\}.
\end{equation*}
Recalling that
\begin{equation*}
    \Exp \G = \frac{s}{C} \blkdiag(\U_1, \dots, \U_C, \zero_{(D-C)C}) + \I \otimes \Ub,
\end{equation*}
we get that $\Exp \G$ has $(D-C)(C-1)$ eigenvalues equal to
\begin{equation} \label{eq:remaining_blkdiag}
    \frac{\alpha}{C-1} \left( 2 - \alpha \frac{C}{C-1} \right),
\end{equation}
and also $D-C$ eigenvalues equal to zero. Next, notice that
\begin{equation*}
    \frac{s}{C} \U_1 + \Ub \\
    = \left[\begin{array}{c|cccc}
    a & b & b & \dots & b \\ \hline
    b & d & e & \dots & e \\
    b & e & d & & \vdots \\
    \vdots & \vdots & & \ddots & \vdots \\
    b & e & \dots & \dots & d
    \end{array}\right],
\end{equation*}
where
\begin{align*}
    a = & \frac{s}{C} \alpha(1-\alpha) + \frac{\alpha}{C} \left( 2 - \alpha \frac{C}{C-1} \right) \\
    b = & - \frac{s}{C} \frac{\alpha(1-\alpha)}{C-1} - \frac{\alpha}{C(C-1)} \left( 2 - \alpha \frac{C}{C-1} \right) \\
    e = & - \frac{s}{C} \left( \frac{\alpha}{C-1} \right)^2 - \frac{\alpha}{C(C-1)} \left( 2 - \alpha \frac{C}{C-1} \right) \\
    d = & \frac{s}{C} \frac{\alpha(C - 1 - \alpha)}{(C-1)^2} + \frac{\alpha}{C} \left( 2 - \alpha \frac{C}{C-1} \right),
\end{align*}
or equally,
\begin{align*}
    a = & \frac{\alpha}{C} \left(s(1-\alpha) + \left( 2 - \alpha \frac{C}{C-1} \right)\right) \\
    b = & - \frac{\alpha}{C(C-1)} \left(s(1-\alpha) + \left(2 - \alpha \frac{C}{C-1} \right)\right) \\
    e = & - \frac{\alpha}{C(C-1)} \left( s \frac{\alpha}{C-1} + \left( 2 - \alpha \frac{C}{C-1} \right)\right) \\
    d = & \frac{\alpha}{C} \left( s \frac{C - 1 - \alpha}{(C-1)^2} + \left( 2 - \alpha \frac{C}{C-1} \right) \right).
\end{align*}
Using the previous lemma, its eigenvalues are given by:
\begin{equation} \label{eq:eig_summed}
    \lambda \left( \frac{s}{C} \U_1 + \Ub \right) = 
    \left\{\begin{array}{ll}
    (T + \Delta)/2, & \text{for } i=1 \\
    (T - \Delta)/2, & \text{for } i=2 \\
    d-e, & \text{for } 2 < i \leq C
    \end{array}\right\}.
\end{equation}
where
\begin{align*}
    T = & a + d+e(C-2) \\
    D = & a \left( d + e \left(C-2\right) \right) - b^2(C-1) \\
    \Delta = & \sqrt{T^2 - 4D}.
\end{align*}
Plugging the values of $d$ and $e$ and simplifying the expressions, we get
\begin{align} \label{eq:d_minus_e}
    d - e = &
    \frac{\alpha}{C} \left( s \frac{C - 1 - \alpha}{(C-1)^2} + \left( 2 - \alpha \frac{C}{C-1} \right) \right) \\
    + & \frac{\alpha}{C(C-1)} \left( s \frac{\alpha}{C-1} + \left( 2 - \alpha \frac{C}{C-1} \right)\right) \\
    = & \frac{\alpha}{C-1} \left( \frac{s}{C} + \left( 2 - \alpha \frac{C}{C-1} \right) \right).
\end{align}
Plugging the values into the trace, we obtain
\begin{align*}
    T = & \frac{\alpha}{C} \left(s(1-\alpha) + \left( 2 - \alpha \frac{C}{C-1} \right)\right) \\
    + & \frac{\alpha}{C} \left( s \frac{C - 1 - \alpha}{(C-1)^2} + \left( 2 - \alpha \frac{C}{C-1} \right) \right) \\
    - & \frac{\alpha}{C(C-1)} \left( s \frac{\alpha}{C-1} + \left( 2 - \alpha \frac{C}{C-1} \right)\right)(C-2) \\
    = & \frac{s \alpha(1-\alpha)}{C-1} + \frac{\alpha}{C-1} \left( 2 - \alpha \frac{C}{C-1} \right) \\
    = & \frac{\alpha}{C-1} \left( s (1-\alpha) + \left( 2 - \alpha \frac{C}{C-1} \right) \right).
\end{align*}
Notice that
\begin{align*}
    & d + e \left(C-2\right) \\
    = & \frac{\alpha}{C} \left( s \frac{C - 1 - \alpha}{(C-1)^2} + \left( 2 - \alpha \frac{C}{C-1} \right) \right) \\
    - & \frac{\alpha}{C(C-1)} \left( s \frac{\alpha}{C-1} + \left( 2 - \alpha \frac{C}{C-1} \right)\right) (C-2) \\
    = & \frac{\alpha}{C(C-1)} \left( s (1-\alpha) + \left( 2 - \alpha \frac{C}{C-1} \right) \right).
\end{align*}
As such,
\begin{align*}
    & a \left( d + e \left(C-2\right) \right) \\
    = & \frac{\alpha}{C} \left( s (1-\alpha) + \left( 2 - \alpha \frac{C}{C-1} \right)\right) \\
    \times & \frac{\alpha}{C(C-1)} \left( s (1-\alpha) + \left( 2 - \alpha \frac{C}{C-1} \right) \right) \\
    = & \frac{\alpha^2}{C^2 (C-1)} \left( s (1-\alpha) + \left( 2 - \alpha \frac{C}{C-1} \right) \right)^2.
\end{align*}
Notice also that
\begin{align*}
    & b^2(C-1) \\
    = & \left( \frac{\alpha}{C(C-1)} \left(s (1-\alpha) + \left(2 - \alpha \frac{C}{C-1} \right)\right) \right)^2(C-1) \\
    = & \frac{\alpha^2}{C^2(C-1)} \left(s (1-\alpha) + \left(2 - \alpha \frac{C}{C-1} \right)\right)^2.
\end{align*}
Combining the two previous expressions, we get the determinant is equal to zero,
\begin{align*}
    D = & a \left( d + e \left(C-2\right) \right) - b^2(C-1) \\
    = & \frac{\alpha^2}{C^2 (C-1)} \left(s (1-\alpha) + \left(2 - \alpha \frac{C}{C-1} \right)\right)^2 \\
    - & \frac{\alpha^2}{C^2(C-1)} \left(s (1-\alpha) + \left(2 - \alpha \frac{C}{C-1} \right)\right)^2 \\
    = & 0.
\end{align*}
Hence,
\begin{align*}
    \Delta = & \sqrt{T^2 - 4D} = T.
\end{align*}
One of the eigenvalues of $\frac{s}{C} \U_1 + \Ub$ is zero, since
\begin{equation} \label{eq:second_eigval}
    \frac{T - \Delta}{2} = \frac{T - T}{2} = 0,
\end{equation}
while the other is equal to
\begin{align} \label{eq:first_eigval}
    \frac{T + \Delta}{2} = \frac{T + T}{2} = T = \frac{\alpha}{C-1} \left( s (1-\alpha) + \left( 2 - \alpha \frac{C}{C-1} \right) \right).
\end{align}
Plugging Equations \eqref{eq:d_minus_e}, \eqref{eq:first_eigval} and \eqref{eq:second_eigval} into Equation \eqref{eq:eig_summed}, we get
\begin{equation*}
    \lambda \left( \frac{s}{C} \U_1 + \Ub \right) = 
    \left\{\begin{array}{ll}
    \frac{\alpha}{C-1} \left( s (1-\alpha) + \left( 2 - \alpha \frac{C}{C-1} \right) \right), & \text{for } i=1 \\
    \frac{\alpha}{C-1} \left( \frac{s}{C} + \left( 2 - \alpha \frac{C}{C-1} \right) \right), & \text{for } 1 < i < C \\
    0, & \text{for } i=C
    \end{array}\right\}.
\end{equation*}
Recall that the spectrum of $\frac{s}{C} \U_c + \Ub$ does not depend on $c$. As such, the spectrum of $\frac{s}{C} \blkdiag(\U_1, \dots, \U_C) + \I \otimes \Ub$ is obtained by simply multiplying the multiplicity of each eigenvalue in $\frac{s}{C} \U_1 + \Ub$ by $C$, i.e.,
\begin{align*}
    & \lambda \left( \frac{s}{C} \blkdiag(\U_1, \dots, \U_C) + \I \otimes \Ub \right) \\
    = & \left\{\begin{array}{ll}
    \frac{\alpha}{C-1} \left( s (1-\alpha) + \left( 2 - \alpha \frac{C}{C-1} \right) \right), & \text{for } 1 \leq i \leq C \\
    \frac{\alpha}{C-1} \left( \frac{s}{C} + \left( 2 - \alpha \frac{C}{C-1} \right) \right), & \text{for } C < i \leq C(C-1) \\
    0, & \text{for } C(C-1) < i \leq C^2
    \end{array}\right\}.
\end{align*}
Combining the above with Equation \eqref{eq:remaining_blkdiag}, we conclude
\begin{align*}
    & \lambda_i \left( \Exp \G \right)
    = \lambda_i \left( \frac{s}{C} \blkdiag(\U_1, \dots, \U_C, \zero_{(D-C)C}) + \I \otimes \Ub \right) \\
    = & \left\{\begin{array}{ll}
    \frac{\alpha}{C-1} \left( s(1-\alpha) + \left( 2 - \alpha \frac{C}{C-1} \right) \right), & \text{for } 1 \leq i \leq C \\
    \frac{\alpha}{C-1} \left( \frac{s}{C} + \left( 2 - \alpha \frac{C}{C-1} \right) \right), & \text{for } C < i \leq C(C-1) \\
    \frac{\alpha}{C-1} \left( 2 - \alpha \frac{C}{C-1} \right), & \text{for } C(C-1) < i \leq D(C-1) \\
    0, & \text{for } D(C-1) < i \leq DC
    \end{array}\right\}.
\end{align*}
\end{proof}

\section{Tools from numerical linear algebra}\label{numerical_linear_algebra}
Our approach for approximating the spectrum of deepnet Hessians (and similarly other quantities) builds on the survey of \citet{lin2016approximating}, which discussed several different methods for approximating the density of the spectrum of large linear operators; many of which were first developed by physicists and chemists in quantum mechanics starting from the 1970's \citep{ducastelle1970moments,wheeler1972modified,turek1988maximum,drabold1993maximum}. From the methods presented therein, we implemented and tested two: the Lanczos method and KPM. From our experience Lanczos was effective and useful, while KPM was temperamental and problematic. As such, we focus in this work on Lanczos.

\begin{figure}[h]
    \centering
    \begin{subfigure}[t]{0.25\textwidth}
        \centering
        \includegraphics[width=1\textwidth]{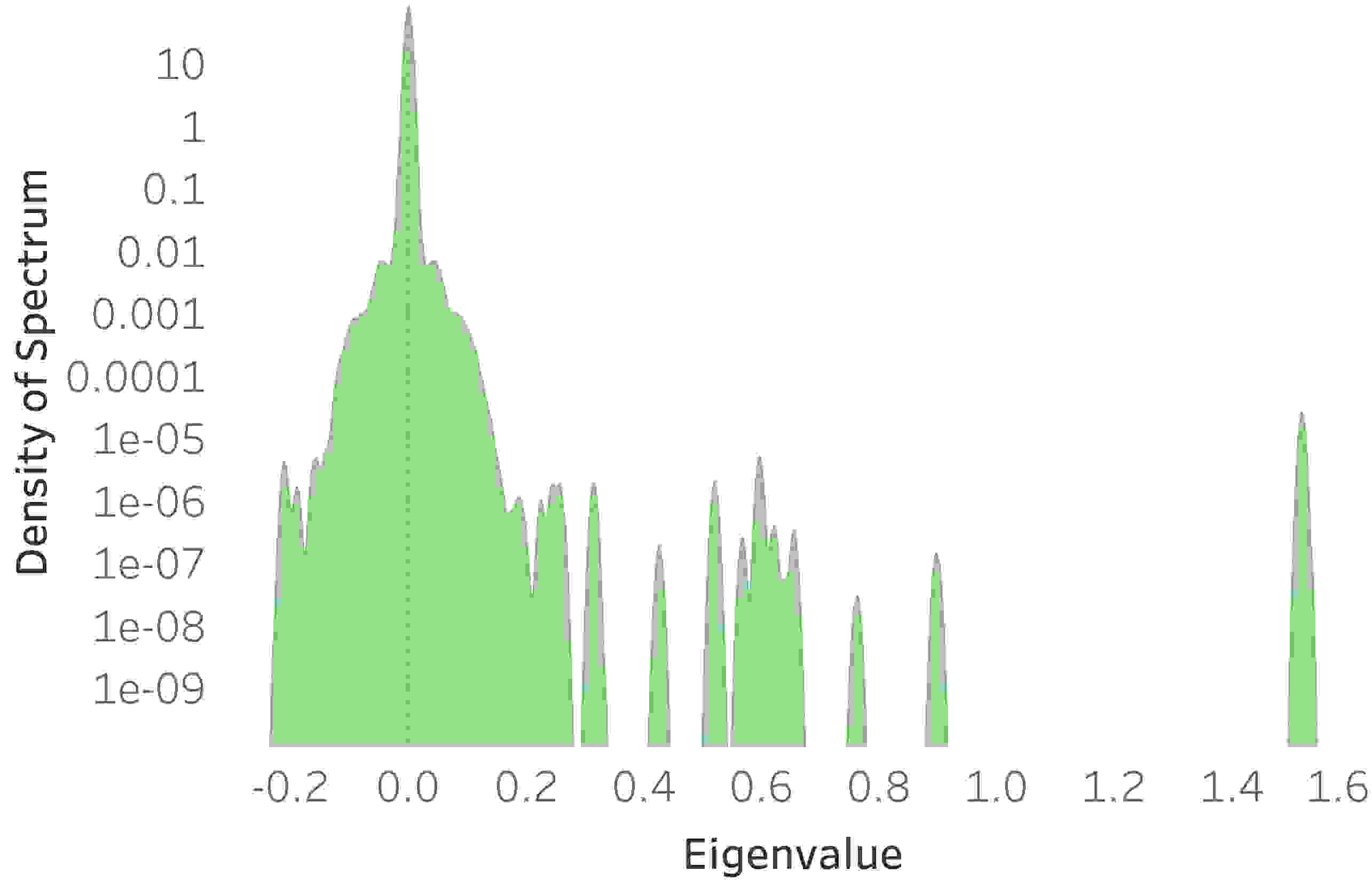}
        \caption{MNIST, train}
    \end{subfigure}
    \begin{subfigure}[t]{0.25\textwidth}
        \centering
        \includegraphics[width=1\textwidth]{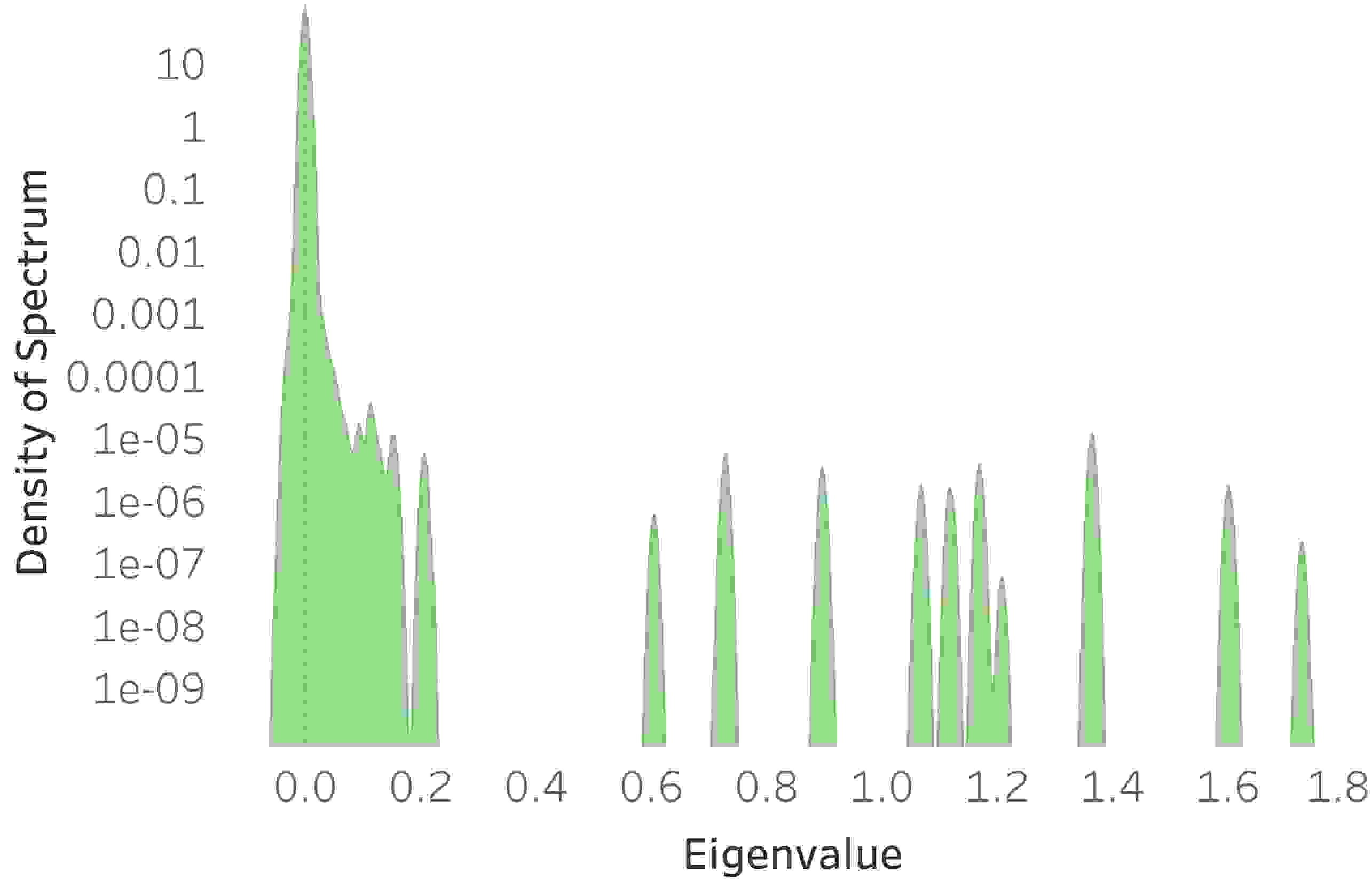}
        \caption{Fashion, train}
    \end{subfigure}
    \begin{subfigure}[t]{0.25\textwidth}
        \centering
        \includegraphics[width=1\textwidth]{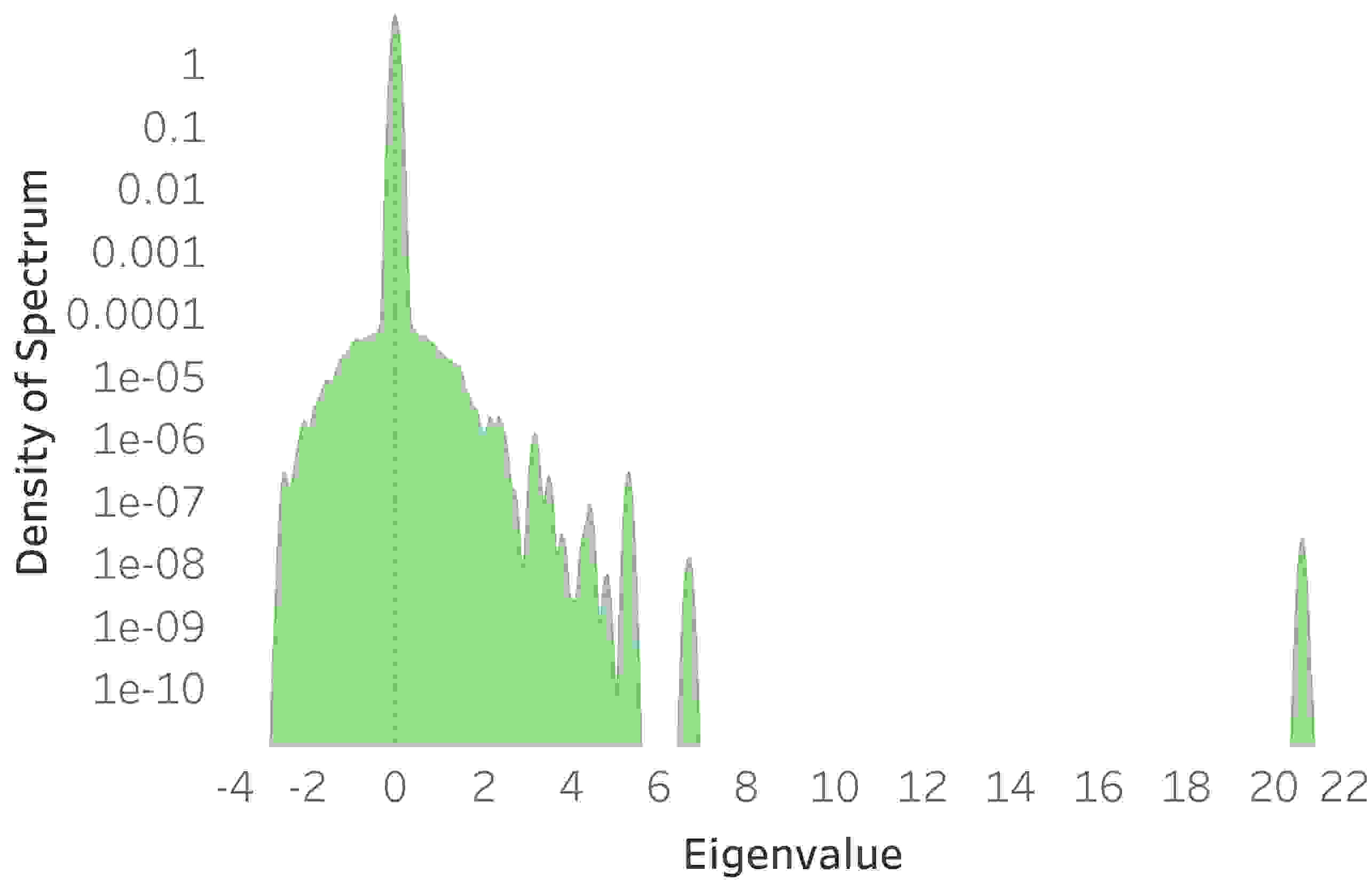}
        \caption{CIFAR10, train}
    \end{subfigure}
    \begin{subfigure}[t]{0.25\textwidth}
        \centering
        \includegraphics[width=1\textwidth]{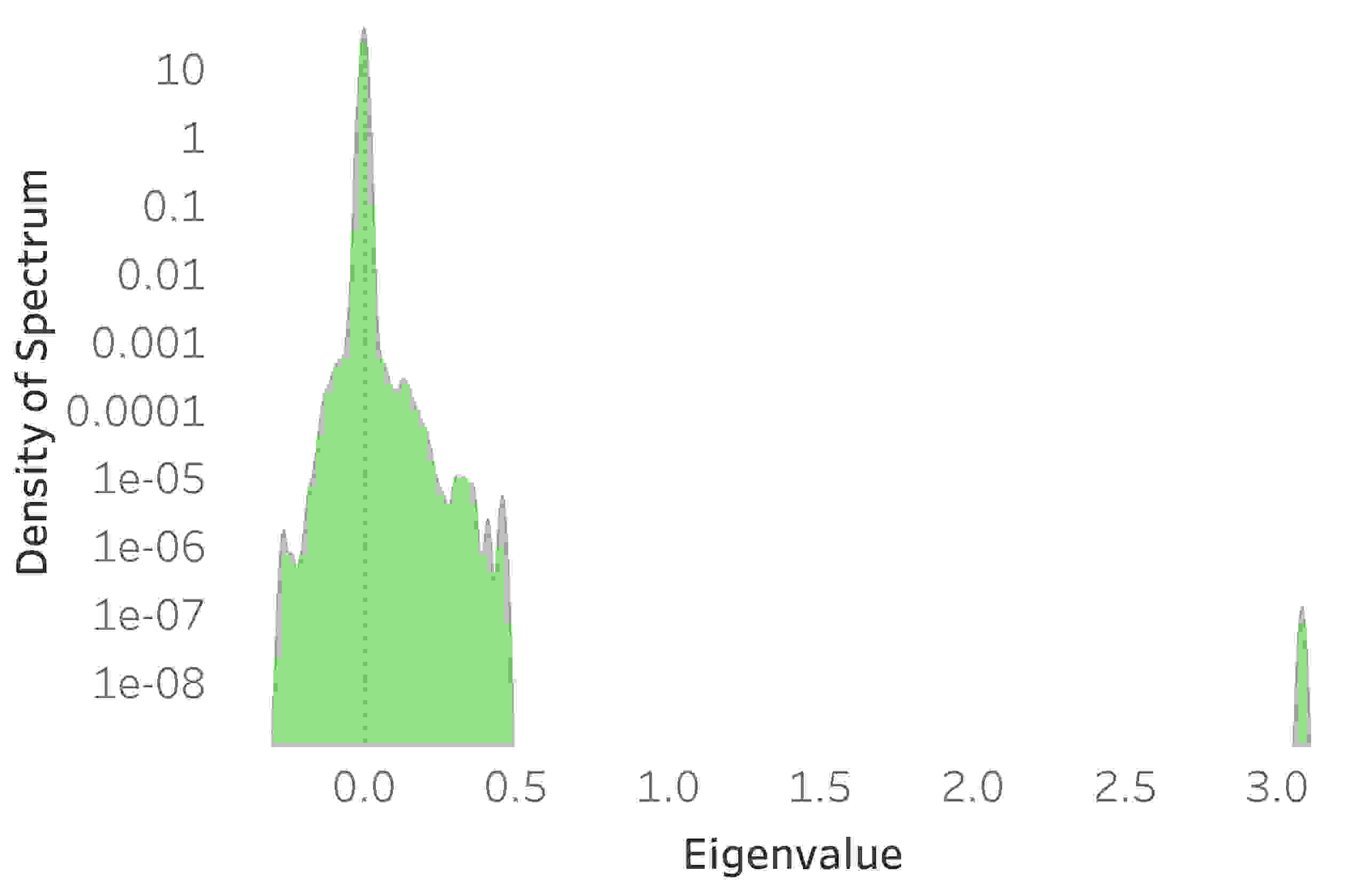}
        \caption{CIFAR100, train}
    \end{subfigure}
    \begin{subfigure}[t]{0.25\textwidth}
        \centering
        \includegraphics[width=1\textwidth]{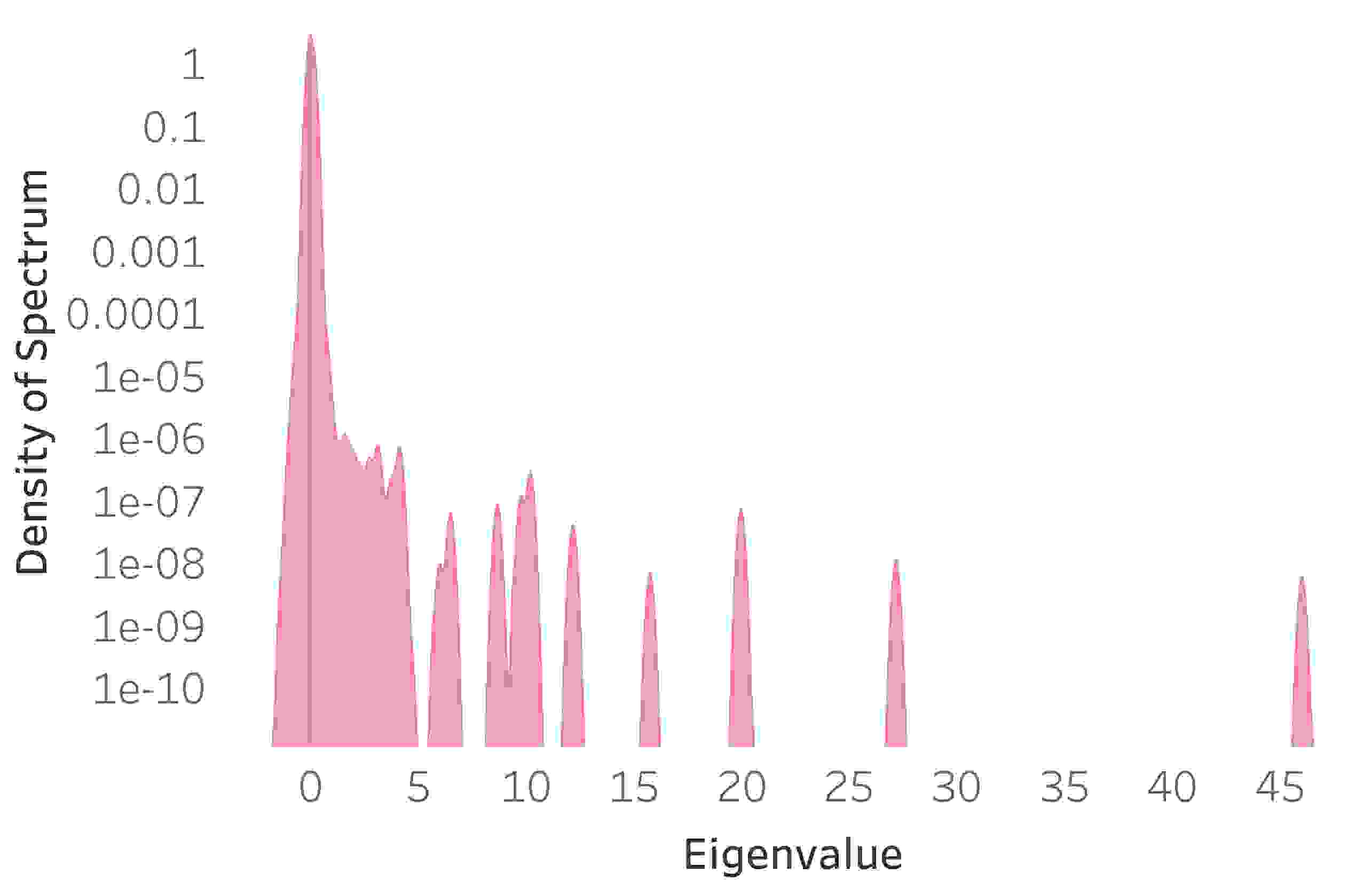}
        \caption{MNIST, test}
    \end{subfigure}
    \begin{subfigure}[t]{0.25\textwidth}
        \centering
        \includegraphics[width=1\textwidth]{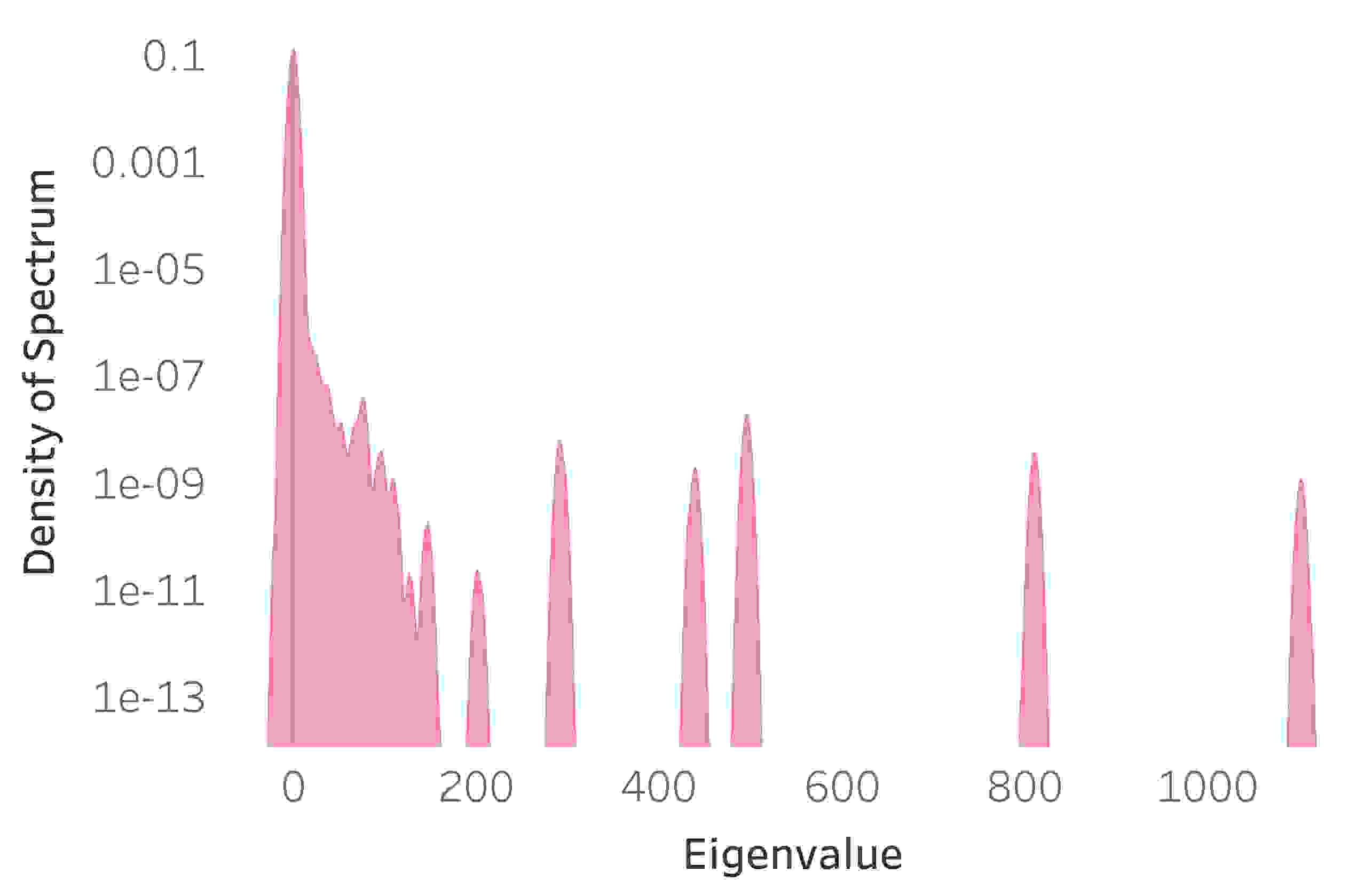}
        \caption{Fashion, test}
    \end{subfigure}
    \begin{subfigure}[t]{0.25\textwidth}
        \centering
        \includegraphics[width=1\textwidth]{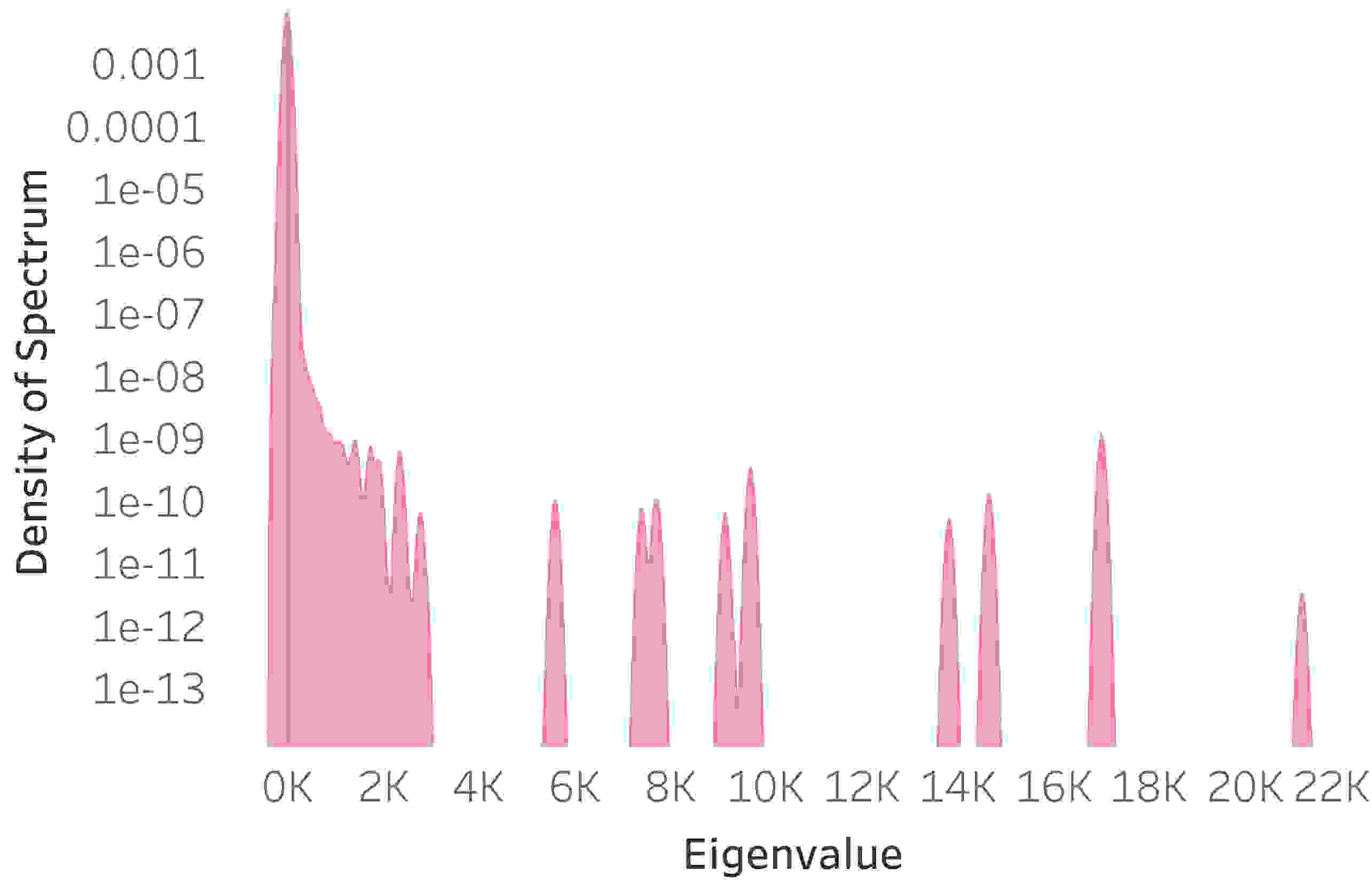}
        \caption{CIFAR10, test}
    \end{subfigure}
    \begin{subfigure}[t]{0.25\textwidth}
        \centering
        \includegraphics[width=1\textwidth]{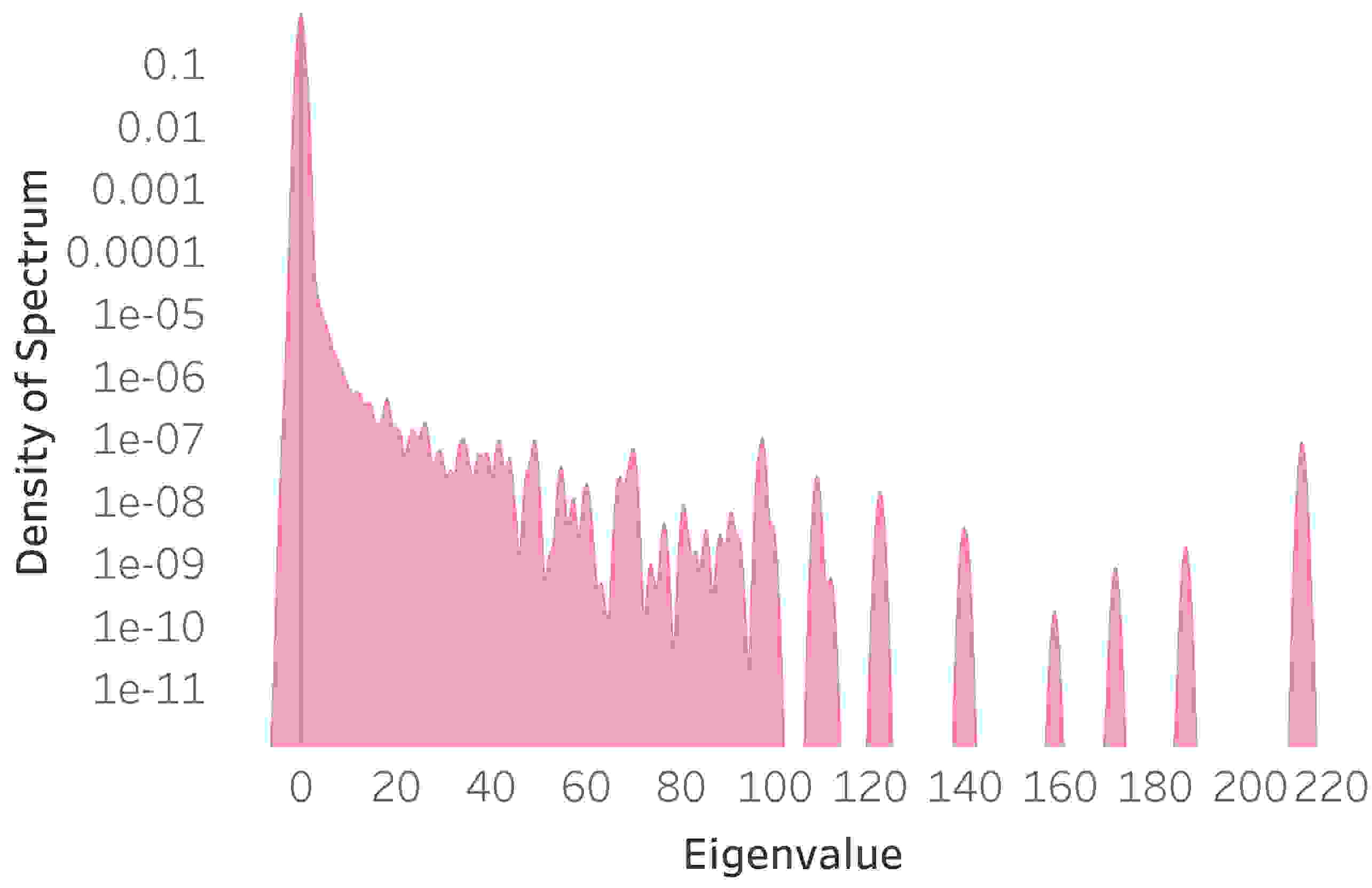}
        \caption{CIFAR100, test}
    \end{subfigure}
    \caption{\textbf{Spectrum of the Hessian for VGG11 trained on various datasets.} Each panel corresponds to a different famous dataset in deep learning. The spectrum was approximated using \textsc{LanczosApproxSpec}. Unlike the figure in the main manuscript, the top-$C$ eigenspace was not removed using $\textsc{LowRankDeflation}$.}
    \label{VGG11_spectrum_train_test_without_SSI}
\end{figure}

\subsection{\textbf{\textsc{SlowLanczos}}}
The Lanczos algorithm \citep{lanczos1950iteration} computes the spectrum of a symmetric matrix $\A \in \R^{p \times p}$ by first reducing it to a tridiagonal form $\Tb_p \in \R^{p \times p}$ and then computing the spectrum of that matrix instead. The motivation is that computing the spectrum of a tridiagonal matrix is very efficient, requiring only $O(p^2)$ operations. The algorithm works by progressively building an adapted orthonormal basis $\V_m \in \R^{p \times m}$ that satisfies at each iteration the relation $\V_m^\T \A \V_m = \Tb_m$, where $\Tb_m \in \R^{m \times m}$ is a tridiagonal matrix. For completeness, we summarize its main steps in Algorithm \ref{alg:SlowLanczos}.

\begin{algorithm}[h!]
\caption{$\textbf{\textsc{SlowLanczos}}(\A)$}
\label{alg:SlowLanczos}
\KwIn{Linear operator $\A \in \R^{p \times p}$ with spectrum in the range $[-1,1]$.}
\phantom{}
\KwResult{Eigenvalues and eigenvectors of the tridiagonal matrix $\Tb_p$.}
\phantom{}
\For{$m=1,\dots,p$}{
    \eIf{$m == 1$}{
        sample $\vv \sim \mathcal{N}(0,\I)$\;
        $\vv_1 = \frac{\vv_1}{\|\vv_1\|_2}$\;
        $\ww = \A \vv_1$\;
    }{
        $\ww = \A \vv_m - \beta_{m-1} \vv_{m-1}$\;
    }
    $\alpha_m = \vv_m^\T \ww$\;
    $\ww = \ww - \alpha_m \vv_m$\;
    \tcc{reorthogonalization}
    $\ww = \ww - \V_m \V_m^\T \ww$\;
    $\beta_m = \|\ww\|_2$\;
    $\vv_{m+1} = \frac{\ww}{\beta_m}$\;
}
$\Tb_p =
\begin{bmatrix}
\alpha_1,   & \beta_1,    &             &               & \\
\beta_1,    & \alpha_2,   & \beta_2,    &               & \\
            & \beta_2,    & \alpha_3,   &               & \\
            &             &             & \ddots        & \beta_{p-1} \\
            &             &             & \beta_{p-1}   & \alpha_{p}
\end{bmatrix}$\;
$\{\theta_m\}_{m=1}^p, \{\y_m\}_{m=1}^p = \textrm{eig}(\Tb_p)$\;
\Return $\{\theta_m\}_{m=1}^p, \{\y_m\}_{m=1}^p$\;
\end{algorithm}

\subsection{Complexity of \textbf{\textsc{SlowLanczos}}}
Assume without loss of generality we are computing the spectrum of the train Hessian. Each of the $p$ iterations of the algorithm requires a single Hessian-vector multiplication, incurring $O(Np)$ complexity. The complexity due to all Hessian-vector multiplications is therefore $O(N p^2)$. The $m$'th iteration also requires a reorthogonalization step, which computes the inner product of $m$ vectors of length $p$ and costs $O(mp)$ complexity. Summing this over the iterations, $m=1,\dots,p$, the complexity incurred due to reorthogonalization is $O(p^3)$. The total runtime complexity of the algorithm is therefore $O(N p^2 + p^3)$. As for memory requirements, the algorithm constructs a basis $\V_p \in \R^{p \times p}$ and as such its memory complexity is $O(p^2)$. Since $p$ is in the order of magnitude of millions, both time and memory complexity make \textsc{SlowLanczos} impractical.

\begin{algorithm}[h!]
\caption{$\textbf{\textsc{FastLanczos}}(\A, M)$}
\label{alg:FastLanczos}
\KwIn{Linear operator $\A \in \R^{p \times p}$ with spectrum in the range $[-1,1]$.}
\myinput{Number of iterations $M$.}
\KwResult{Eigenvalues and eigenvectors of the tridiagonal matrix $\Tb_m$.}
\For{$m=1,\dots,M$}{
    \eIf{$m == 1$}{
        sample $\vv \sim \mathcal{N}(0,\I)$\;
        $\vv = \frac{\vv}{\|\vv\|_2}$\;
        $\vv_\nextt = \A \vv$\;
    }{
        $\vv_\nextt = \A \vv - \beta_{m-1} \vv_\prevv$\;
    }
    $\alpha_m = \vv_\nextt^\T \vv$\;
    $\vv_\nextt = \vv_\nextt - \alpha_m \vv$\;
    $\beta_m = \|\vv_\nextt\|_2$\;
    $\vv_\nextt = \frac{\vv_\nextt}{\beta_m}$\;
    $\vv_\prevv = \vv$\;
    $\vv = \vv_\nextt$\;
}
$\Tb_M =
\begin{bmatrix}
\alpha_1,   & \beta_1,    &             &               & \\
\beta_1,    & \alpha_2,   & \beta_2,    &               & \\
            & \beta_2,    & \alpha_3,   &               & \\
            &             &             & \ddots        & \beta_{M-1} \\
            &             &             & \beta_{M-1}   & \alpha_M
\end{bmatrix}$\;
$\{\theta_m\}_{m=1}^M, \{\y_m\}_{m=1}^M = \textrm{eig}(\Tb_M)$\;
\Return $\{\theta_m\}_{m=1}^M, \{\y_m\}_{m=1}^M$\;
\end{algorithm}

\begin{algorithm}[h!]
\caption{$\textbf{\textsc{LanczosApproxSpec}}$\newline$(\A, M, K, \nvec, \kappa)$}
\label{alg:LanczosApproxSpec}
\KwIn{Linear operator $\A \in \R^{p \times p}$ with spectrum in the range $[-1,1]$.}
\myinput{Number of iterations $M$.}
\myinput{Number of points $K$.}
\myinput{Number of repetitions $\nvec$.}
\KwResult{Density of the spectrum of $\A$ evaluated at $K$ evenly distributed points in the range $[-1,1]$.}
\For{$l=1,\dots,\nvec$}{
    $\{\theta_m^l\}_{m=1}^M, \{\y_m^l\}_{m=1}^M = \textsc{FastLanczos}(\A,M)$\;
}
$\{t_k\}_{k=1}^K = \textrm{linspace}(-1,1,K)$\;
\For{$k=1,\dots,K$}{
$\sigma = \frac{2}{(M-1)\sqrt{8 \log(\kappa)}}$\;
$\phi_k = \frac{1}{\nvec} \sum_{l=1}^{\nvec} \sum_{m=1}^M {y_m^l[1]}^2 g_\sigma(t - \theta_m^l)$
}
\Return $\{\phi_k\}_{k=1}^K$\;
\end{algorithm}

\subsection{Spectral density estimation via \textbf{\textsc{FastLanczos}}} \label{sec:FastLanczos}
As a first step towards making Lanczos suitable for the problem we attack in this paper, we remove the reorthogonalization step in Algorithm \ref{alg:SlowLanczos}. This allows us to save only three terms--$\vv_\prevv$, $\vv$ and $\vv_\nextt$--instead of the whole matrix $\V_m \in \R^{p \times m}$ (see Algorithm \ref{alg:FastLanczos}). This greatly reduces the memory complexity of the algorithm at the cost of a nuisance that is discussed in Section \ref{sec:nuisance}. Moreover, it removes the $O(p^3)$ term from the runtime complexity.

In light of the runtime complexity analysis in the previous subsection, it is clear that running Lanczos for $p$ iterations is impractical. Realizing that, the authors of \citet{lin2016approximating} proposed to run the algorithm for $M \ll p$ iterations and compute an approximation to the spectrum based on the eigenvalues $\{\theta_m\}_{m=1}^M$ and eigenvectors $\{\y_m\}_{m=1}^M$ of $\Tb_M$. Denoting by $y_m[1]$ the first element in $\y_m$, their proposed approximation was $\hat{\phi}(t) = \sum_{m=1}^M y_m[1]^2 g_\sigma(t - \theta_m^l)$, where $g_\sigma(t - \theta_m^l)$ is a Gaussian with width $\sigma$ centered at $\theta_m^l$. Intuitively, instead of computing the true spectrum $\phi(t) = \frac{1}{p} \sum_{i=1}^p \delta(t - \lambda_i)$, their algorithm computes only $M \ll p$ eigenvalues and replaces each with a Gaussian bump. They further proposed to improve the approximation by starting the algorithm from several different starting vectors, $\vv_1^l, l=1,\dots,\nvec$, and averaging the results. We summarize \textsc{FastLanczos} in Algorithm \ref{alg:FastLanczos} and \textsc{LanczosApproxSpec} in Algorithm \ref{alg:LanczosApproxSpec}.

\subsection{Complexity of \textbf{\textsc{FastLanczos}}}
Each of the $M$ iterations requires a single Hessian-vector multiplication. As previously mentioned, this product requires $O(N p)$ complexity. The total runtime complexity of the algorithm is therefore $O(M N p)$ (for $n_{vec}=1$). Although the complexity might seem equivalent to that of training a model from scratch for $M$ epochs, this is not the case. The batch size used for training a model is usually limited ($128$ in our case) so as to no deteriorate the model's generalization. Such limitations do not apply to \textsc{FastLanczos}, which can utilize the largest possible batch size that fits into the GPU memory ($1024$ in our case). As for memory requirements; we only save three vectors and as such the memory complexity is merely $O(p)$.

\subsection{Reorthogonalization} \label{sec:nuisance}
Under exact arithmetic, the Lanczos algorithm constructs an orthonormal basis. However, in practice the calculations are performed in floating point arithmetic, resulting in loss of orthogonality. This is why the reorthogonalization step in Algorithm \ref{alg:SlowLanczos} was introduced in the first place. From our experience, we did not find the lack of reorthogonalization to cause any issue, except for the known phenomenon of appearance of ``ghost'' eigenvalues--multiple copies of eigenvalues, which are unrelated to the actual multiplicities of the eigenvalues. Despite these, in all the toy examples we ran on synthetic data, we found that our method approximates the spectrum well, as is shown in Figure \ref{synthetic}.

\begin{figure}[h]
    \begin{subfigure}[t]{0.48\textwidth}
        \centering
        \includegraphics[width=1\textwidth]{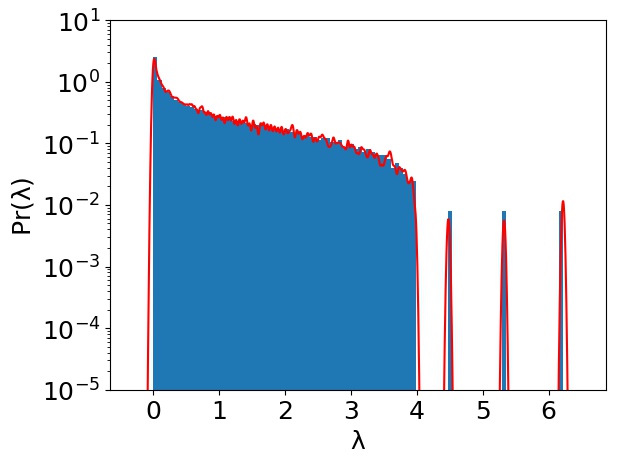}
        \caption{\textbf{Verification of \textsc{FastLanczos}.} Approximating the spectrum of a matrix $\Y \in \R^{2000 \times 2000}$, sampled from the distribution $\Y = \X + \frac{1}{2000} \Zb \Zb^\T$, where $X_{1,1} = 5$, $X_{2,2} = 4$, $X_{3,3} = 3$, $X_{i,j} = 0$ elsewhere and the entries of $\Zb \in \R^{2000 \times 2000}$ are standard normally distributed.}
    \end{subfigure}%
    ~\hspace{0.02\textwidth}%
    \begin{subfigure}[t]{0.48\textwidth}
        \centering
        \includegraphics[width=1\textwidth]{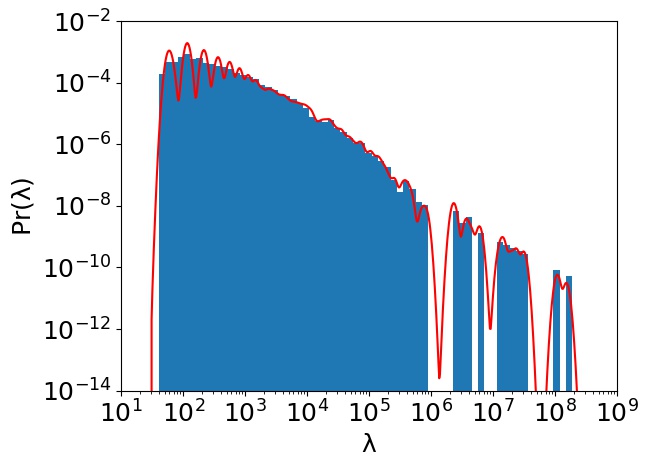}
        \caption{\textbf{Verification of \textsc{FastLanczos} for approximating the log-spectrum.} Approximating the log-spectrum of a matrix $\Y \in \R^{1000 \times 1000}$, sampled from the distribution $\Y = \frac{1}{1000} \Zb \Zb^\T$, where the entries of $\Zb \in \R^{500 \times 1000}$ are distributed i.i.d Pareto with index $\alpha=1$. This type of matrices are known to have a power law spectral density.}
    \end{subfigure}
    \caption{\textbf{Verification of spectrum approximation on synthetic data.} The eigenvalues obtained from eigenvalue decomposition are plotted in blue color in a histogram with $100$ bins. Our spectral approximation is plotted on top as a red line. In both the left and the right plots we average over $\nvec=10$ initial vectors.}
    \label{synthetic}
\end{figure}

\subsection{Normalization}
As a first step towards approximating the spectrum of a large matrix, we renormalize its range to $[-1,1]$. This can be done using any method that allows to approximate the maximal and minimal eigenvalue of a matrix--for example, the power method. In this work we follow the method proposed in \citet{lin2016approximating}. This normalization has the benefit of allowing us to set $\sigma$ to a fixed number, which does not depend on the specific spectrum approximated. We summarize the procedure in Algorithm \ref{alg:Normalization}.

\begin{algorithm}[h!]
\caption{$\textbf{\textsc{Normalization}}(\A, M_0, \tau)$}
\label{alg:Normalization}
\KwIn{Linear operator $\A \in \R^{p \times p}$.}
\myinput{Number of iterations $M_0$.}
\myinput{Margin percentage $\tau$.}
\KwResult{Linear operator $\A \in \R^{p \times p}$ with spectrum in the range $[-1,1]$.}
\tcc{approximate minimal and maximal eigenvalues}
$\{\theta_m\}_{m=1}^{M_0}, \{\y_m\}_{m=1}^{M_0} = \textsc{FastLanczos}(\A,M_0)$\;
$\lambda_{\min} = \theta_1 - \| (\A - \theta_1 \I) \y_1 \|$\;
$\lambda_{\max} = \theta_{M_0} + \| (\A - \theta_{M_0} \I) y_{M_0} \|$\;
\tcc{add margin}
$\Delta = \tau (\lambda_{\max} - \lambda_{\min})$\;
$\lambda_{\min} = \lambda_{\min} - \Delta$\;
$\lambda_{\max} = \lambda_{\max} + \Delta$\;
\tcc{normalized operator}
$c = \frac{\lambda_{\min} + \lambda_{\max}}{2}$\;
$d = \frac{\lambda_{\max} - \lambda_{\min}}{2}$\;
\Return $\frac{\A - c \I}{d}$\;
\end{algorithm}

\subsection{Spectral density estimation of $\boldmath{f(A)}$} \label{sec:log}
Figure \ref{bulk_dist_regular} approximates the spectrum of $\Eb$, showing that it is approximately linear on a log-log plot. A better idea would have been to approximate the spectrum of $\log(|\Eb|)$ in the first place (the absolute value is due to $\Eb$ being symmetric about the origin), since this would lead to a more precise estimate. Mathematically speaking, this amounts to approximating the measure $\Pr(\log(|\lambda|)) d\log(|\lambda|)$ instead of $\Pr(\lambda) d\lambda$. Using change of measure arguments, we have
\begin{align*}
    \Pr(\log(|\lambda|)) d\log(|\lambda|) \nonumber
    = \Pr(\log(|\lambda|)) \frac{d \log(|\lambda|)}{d |\lambda|} d |\lambda| \nonumber
    = \Pr(\log(|\lambda|)) \frac{1}{|\lambda|} d |\lambda|.
\end{align*}
Following the ideas presented in Section \ref{sec:FastLanczos}, the above can be approximated as 
\begin{equation*}
    \sum_{m=1}^M y_m[1]^2 \frac{1}{\theta_m^l} g_\sigma(|\lambda| - \log (\theta_m^l)).
\end{equation*}
Implementation-wise, all that is required is to replace $\theta_m^l$ with $\log(\theta_m^l)$ in Algorithm \ref{alg:LanczosApproxSpec}, scale the Gaussian bumps by $\frac{1}{\theta_m^l}$, and to apply $\log$ on $|\lambda_{\min}|$ and $|\lambda_{\max}|$ before adding the margin in Algorithm \ref{alg:Normalization}. In practice, we apply $f=\log(|\lambda| + \epsilon)$, where $\epsilon$ is a small constant added for numerical stability. Figure \ref{bulk_dist_log} shows the outcome of such procedure. The idea of approximating the log of the spectrum (or any function of it) is inspired by a recent work of \citet{ubaru2017fast}, which suggests a method for approximating $\Tr{f(\A)}$ using Lanczos and comments that similar ideas could be used for approximating functions of matrix spectra.

\begin{figure}[h!]
    \begin{subfigure}[t]{0.45\textwidth}
      \centering
      \includegraphics[width=1\textwidth]{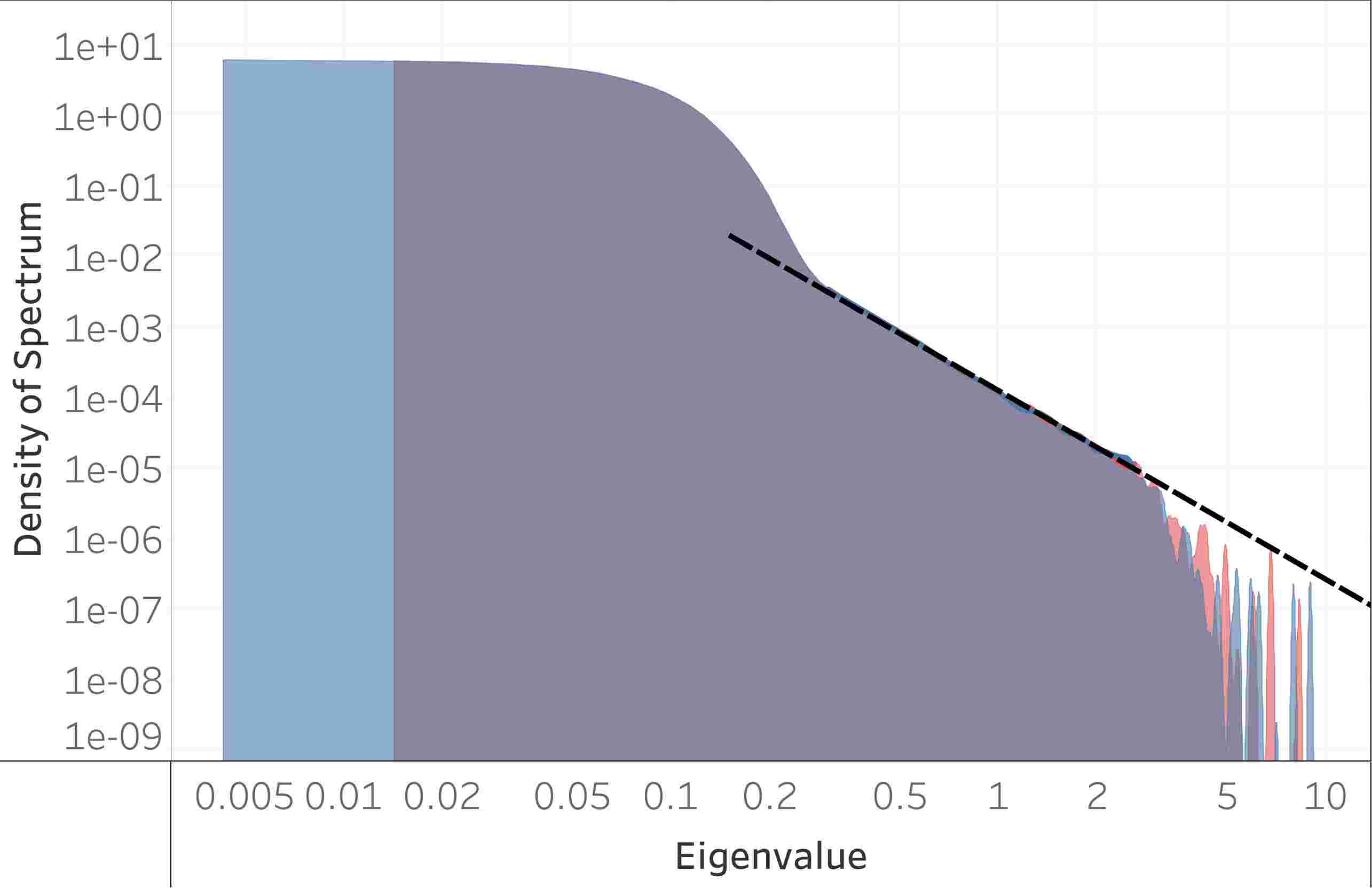}
      \caption{Spectrum of $\Eb$ on a logarithmic x-axis scale}
      \label{bulk_dist_regular}
    \end{subfigure}%
    \hspace{0.01\textwidth}~
    \begin{subfigure}[t]{0.45\textwidth}
      \centering
      \includegraphics[width=1\textwidth]{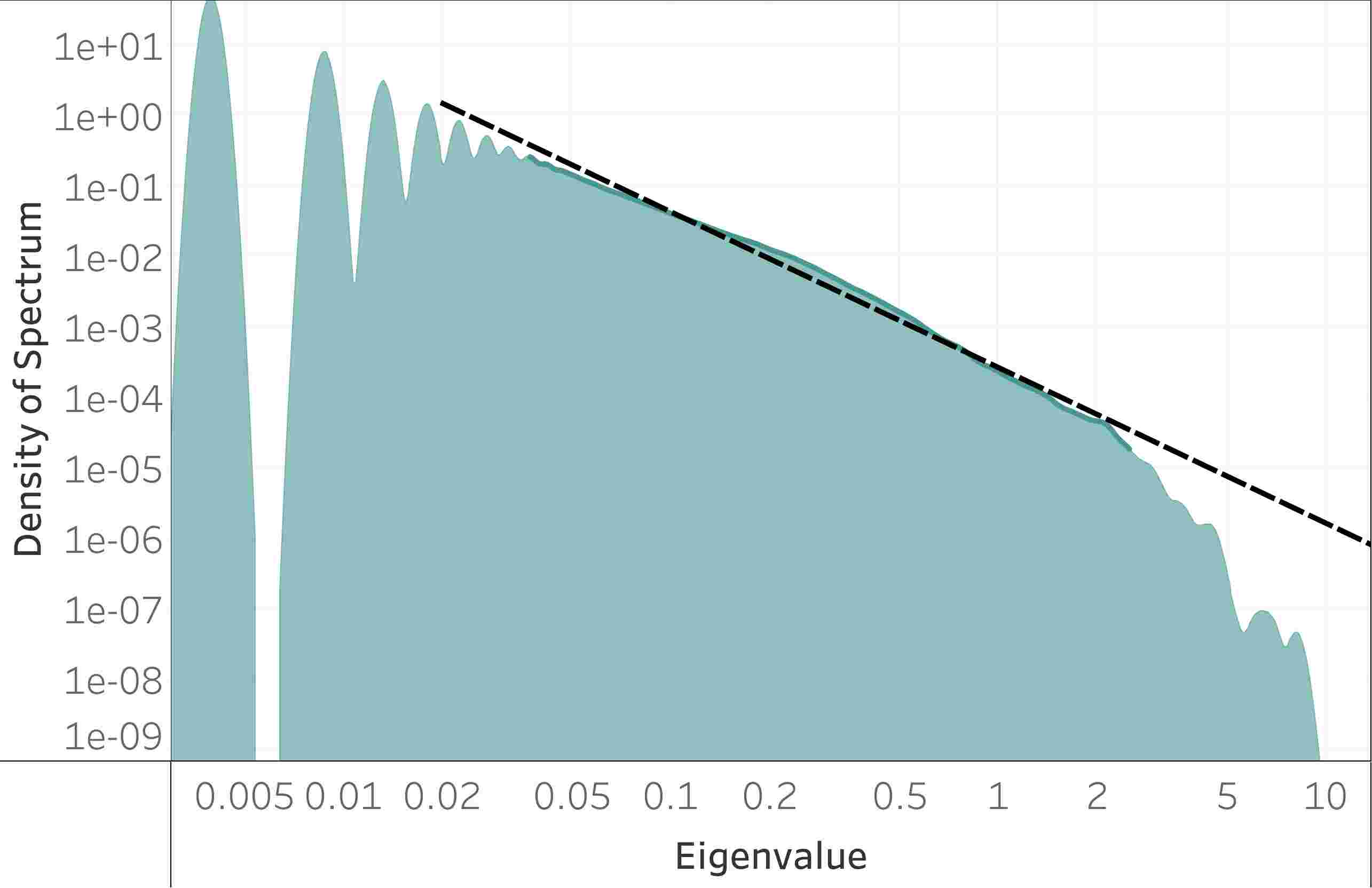}
      \caption{Spectrum of $\log(\Eb)$}
      \label{bulk_dist_log}
    \end{subfigure}
    \caption{\textbf{Tail properties of $\Eb$.} Spectrum of the test $\Eb$ for VGG11 trained on MNIST sub-sampled to 1351 examples per class. On the left we approximate the spectrum of $\Eb$ and plot it on a logarithmic x-axis. The positive eigenvalues of $\Eb$ are plotted in red and the absolute value of the negative ones in blue. Notice how the spectrum is almost perfectly symmetric about the origin. Fitting a power law trend on part of the spectrum results in a fit $\phi = 1.2{\times}10^{-4} \ |\lambda|^{-2.7}$ with an $R^2$ of $0.99$. On the right we approximate the spectrum of $\log(\Eb)$. Fitting a power law trend results in a fit $\phi = 2.6{\times}10^{-4} \ |\lambda|^{-2.2}$ with an $R^2$ of $0.99$. These spectra can not originate from Wigner's semicircle law, nor other classical RMT distributions}
    \label{benefits_log}
\end{figure}

\begin{algorithm}[h!]
\caption{$\textbf{\textsc{SubspaceIteration}}(\A,C,T)$}
\label{alg:SubspaceIteration}
\KwIn{Linear operator $\A \in \R^{p \times p}$.}
\myinput{Rank $C$.}
\myinput{Number of iterations $T$.}
\KwResult{Eigenvalues $\{\lambda_c\}_{c=1}^C$.}
\myresult{Eigenvectors $\{\vv_c\}_{c=1}^C$.}
\For{$c=1,\dots,C$}{
    sample $\vv_c \sim \mathcal{N}(0,\I)$\;
    $\vv_c = \frac{\vv_c}{\|\vv_c\|_2}$\;
}
$\Q = \textrm{QR}(\V)$\;
\For{$t=1,\dots,T$}{
    $\V = \A \Q$\;
    $\Q = \textrm{QR}(\V)$\;
}
\For{$c=1,\dots,C$}{
    $\lambda_c = \|\vv_c\|_2$
}
\Return $\{\lambda_c\}_{c=1}^C$, $\{\vv_c\}_{c=1}^C$
\end{algorithm}

\begin{figure}[h!]
  \centering
  \includegraphics[width=1\textwidth]{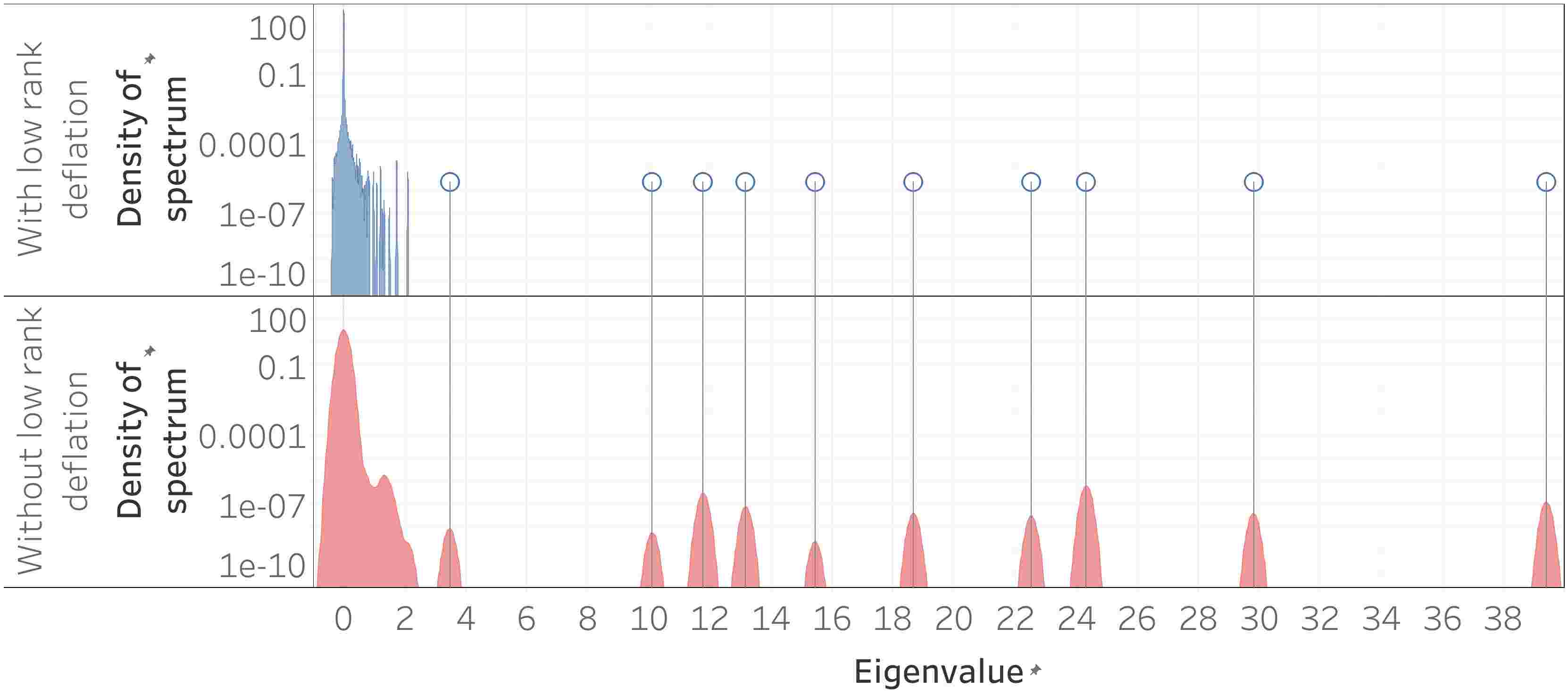}
  \caption{\textbf{Benefits of low rank deflation  using subspace iteration.} Spectrum of the train Hessian for ResNet18 trained on MNIST with 136 examples per class. Top panel: \textsc{SubspaceIteration} followed by \textsc{LanczosApproxSpec}. Bottom panel: \textsc{LanczosApproxSpec} only. Notice how the top eigenvalues at the top panel align with the outliers in the bottom panel. Low rank deflation allows for precise detection of outlier location and improved `resolution' of the bulk distribution.}
  \label{benefits_deflation}
\end{figure}

\subsection{\textbf{\textsc{SubspaceIteration}}}
The spectrum of the Hessian follows a bulk-and-outliers structure. Moreover, the number of outliers is approximately equal to $C$, the number of classes in the classification problem. It is therefore natural to extract the top $C$ outliers using, for example, the subspace iteration algorithm, and then to apply \textsc{LanczosApproxSpec} on a rank $C$ deflated operator to approximate the bulk. We demonstrate the benefits of \textsc{SubspaceIteration} in Figure \ref{benefits_deflation} and summarize its steps in Algorithm \ref{alg:SubspaceIteration}. The runtime complexity of \textsc{SubspaceIteration} is $O(T C^2 N p)$, $T$ being the number of iterations, which is $C^2$ times higher than that of \textsc{FastLanczos}.

\section{Experimental details} \label{sec:experimental_details}
\subsection{Training networks}
We present here results from training the VGG11 \citep{simonyan2014very} and ResNet18 \citep{he2016deep} architectures on the MNIST \citep{lecun2010mnist}, FashionMNIST \citep{xiao2017fashion}, CIFAR10 and CIFAR100 \citep{krizhevsky2009learning} datasets. We use stochastic gradient descent with $0.9$ momentum, $5{\times}10^{-4}$ weight decay and $128$ batch size. We train for $200$ epochs in the case of MNIST and FashionMNIST and $350$ in the case of CIFAR10 and CIFAR100, annealing the initial learning rate by a factor of $10$ at $1/3$ and $2/3$ of the number of epochs. For each dataset and network, we sweep over $100$ logarithmically spaced initial learning rates in the range $[0.25,0.0001]$ and pick the one that results in the best test error in the last epoch. For each dataset and network, we repeat the previous experiments on $20$ training sample sizes logarithmically spaced in the range $[10, 5000]$.

We also train an eight-layer multilayer perceptrons (MLP) with $2048$ neurons in each hidden layer on the same datasets. We use the same hyperparameters, except we train for $350$ epochs for all datasets and optimize the initial learning rate over $25$ logarithmically spaced values.

The massive computational experiments reported here were run painlessly using ClusterJob and ElastiCluster \citep{clusterjob,MMCEP17,Monajemi19}.

\subsection{Analyzing the spectra}
For each operator $\A$, we begin by computing $\textsc{Normalization} (\A, M_0{=}32, \tau{=}0.05)$. We then approximate the spectrum using $\textsc{LanczosApproxSpec} (A, M, K{=}1024, \nvec{=}1, \kappa{=}3)$, where $M \in \{128,256\}$. Finally, we denormalize the spectrum into its original range (which is not $[-1,1]$). Optionally, we apply the above steps on a rank-$C$ deflated operator obtained using \\ $\textsc{SubspaceIteration} (\A, C, T{=}128)$. Optionally, we compute the log-spectrum, in which case we use $M{=}2048$ iterations and a different value of $\kappa$.

\subsection{Removing sources of randomness}
The methods we employ in this paper--including Lanczos and subspace iteration--assume deterministic linear operators. As such, we train our networks without preprocessing the input data using random flips or crops. Moreover, we replace dropout layers \citep{srivastava2014dropout} with batch normalization ones \citep{ioffe2015batch} in the VGG architecture. The batch normalization layers are always set to ``test mode''.

\vskip 0.2in
\bibliography{sample}

\end{document}